\documentclass[11pt]{article}
\usepackage[letterpaper,margin=1.0in]{geometry}
\usepackage{enumitem}
\usepackage{setspace}
\usepackage{microtype}
\usepackage{booktabs}
\usepackage{amsfonts,amsmath,amssymb,amsthm}
\usepackage{xcolor}
\usepackage{url}
\usepackage{subcaption}
\usepackage{graphicx}
\usepackage{hyperref}
\hypersetup{hidelinks}
\usepackage{mathtools}
\usepackage{bbm}
\usepackage{mathrsfs}
\usepackage{nicefrac}
\usepackage{algorithm}
\usepackage{algorithmic}
\usepackage{float}
\usepackage{comment}
\newtheorem{theorem}{Theorem}[section]
\newtheorem{lemma}[theorem]{Lemma}

\newtheorem{corollary}[theorem]{Corollary}
\newtheorem{definition}[theorem]{Definition}
\newtheorem{assumption}[theorem]{Assumption}
\newtheorem{proposition}[theorem]{Proposition}

\newtheorem{example}[theorem]{Example}
\newtheorem{remark}[theorem]{Remark}
\newcommand{\cA}{\mathcal{A}}
\newcommand{\cB}{\mathcal{B}}

\newcommand{\cD}{\mathcal{D}}
\newcommand{\cF}{\mathcal{F}}

\newcommand{\cL}{\mathcal{L}}
\newcommand{\cM}{\mathcal{M}}

\newcommand{\cO}{\mathcal{O}}

\newcommand{\cQ}{\mathcal{Q}}
\newcommand{\cR}{\mathcal{R}}
\newcommand{\cS}{\mathcal{S}}

\newcommand{\cX}{\mathcal{X}}
\newcommand{\EE}{\mathbb{E}}
\newcommand{\PP}{\mathbb{P}}
\newcommand{\R}{\mathbb{R}}

\newcommand{\one}{\mathbf{1}}
\newcommand{\eps}{\varepsilon}

\newcommand{\KL}{\operatorname{KL}}

\newcommand{\LSE}{\operatorname{LSE}}
\newcommand{\Var}{\operatorname{Var}}

\newcommand{\argmax}{\operatorname*{arg\,max}}
\newcommand{\argmin}{\operatorname*{arg\,min}}

\providecommand{\cJ}{\mathcal{J}}
\providecommand{\cT}{\mathcal{T}}

\providecommand{\iid}{\overset{\mathrm{i.i.d.}}{\sim}}

\providecommand{\diag}{\operatorname{diag}}

\providecommand{\Ent}{\operatorname{H}}

\usepackage[
  backend=biber,
  style=authoryear,
  natbib=true,
  maxcitenames=2,
  maxbibnames=99,
  uniquename=false,
  giveninits=true
]{biblatex}
\DeclareDelimFormat{nameyeardelim}{\addcomma\space}
\title{A Lecture Note on Offline RL and IRL\\
\large Part II: Foundations of Inverse Reinforcement Learning\\ and Dynamic Discrete Choice Models}
\author{Enoch Hyunwook Kang\\
University of Washington, Foster School of Business}
\date{\today}
\begin{document}
\maketitle
\begin{abstract}
In the forward reinforcement-learning problem, the reward is fixed and known; the learner is asked to find a good policy or value function. Here we turn the question around. Given offline data generated by an expert, can we recover the \emph{reward} the expert was optimizing? This is the \emph{inverse} reinforcement learning problem, and remarkably, two communities, structural econometricians studying \emph{dynamic discrete choice} (DDC) and machine learners studying \emph{entropy-regularized IRL}, have been working on exactly the same probabilistic model under different names. We begin by proving their equivalence. We then develop the classical identification result of \citet{magnac2002identifying} and the classical computational paradigms that grew out of it: Rust's nested fixed-point algorithm \citep{rust1987optimal}, the conditional-choice-probability approach of \citet{hotz1993conditional}, and the two temporal-difference approaches of \citet{adusumilli2019temporal}: linear semi-gradient TD and approximate value iteration. Each route has its limits: dimensionality, transition-kernel estimation, the deadly triad, or projected fixed-point bias. We then walk through the modern ML/IRL strand: adversarial IRL \citep{fu2017learning}, occupancy matching \citep{ho2016generative}, IQ-Learn \citep{garg2021iq}, and offline ML-IRL \citep{zeng2023understanding}, deriving each method's actual objective and stating precisely what it does and does not identify. We close with the empirical-risk-minimization framework of \citet{kang2025empirical}, which yields a gradient-based estimator for offline IRL/DDC. 
\end{abstract}

\newpage
\tableofcontents
\newpage

\section{Why IRL and DDC?}
\label{sec:why}

In the forward reinforcement-learning problem, the reward function is a fixed, known input. The world handed us $r$ and asked us to learn $Q^\star$. In a great many real-world problems, however, $r$ is exactly what we want to learn, and the closest thing we have to it is a log of decisions made by someone who knew what they were doing.

\begin{itemize}[leftmargin=2em]
\item A labor economist watches workers choose between job offers and wants to recover the utility function that rationalizes their choices \citep{mcfadden2001economic}.
\item A marketing analyst sees customers' coupon redemptions and purchase histories, and wants to recover preferences in order to predict response to a new offer.
\item A health-economics researcher observes physicians' treatment decisions and wants to quantify the implicit risk-benefit tradeoffs they are making.
\item A robotics or autonomous-driving engineer watches expert pilots fly a helicopter or expert drivers navigate traffic, and wants to extract a reward function that, when fed to a downstream RL algorithm, reproduces the expert's behavior \citep{barnes2023massively}.
\end{itemize}
In each case the data is offline: a fixed log of state-action-(next-state) tuples from one or more experts, and the reward itself is never directly observed \citep{levine2020offline}.

Two literatures have been chipping away at this problem for decades, under almost identical probabilistic structure but completely different vocabularies:
\begin{itemize}[leftmargin=2em]
\item \emph{Dynamic discrete choice} (DDC) models from structural econometrics \citep{rust1987optimal,rust1994structural,arcidiacono2011practical}: the agent's reward is observed plus a private idiosyncratic shock, generally Gumbel-distributed, which the econometrician integrates out.
\item \emph{Inverse reinforcement learning} (IRL) with the \emph{entropy-regularized} (maximum-entropy) variant of \citet{finn2016connection,fu2017learning}: the agent maximizes expected return plus an entropy bonus on its policy, producing the same softmax choice rule.
\end{itemize}
Our first order of business (Section~\ref{sec:prelim}) is to prove these are the same model. Sections~\ref{sec:identification}--\ref{sec:irl} then survey the methods each tradition developed, and Section~\ref{sec:erm} presents the unified ERM treatment of \citet{kang2025empirical} that fixes the issues both traditions inherited.

\section{Preliminaries}
\label{sec:prelim}

This section sets up the model, defines the soft Bellman operator, proves its contraction, and, most importantly, proves the equivalence between the DDC and entropy-regularized IRL formulations. Readers familiar with standard MDP notation can skim the MDP setup, but should not skip Section~\ref{sec:soft-policy} (the Gumbel-max identity) or Section~\ref{sec:equivalence} (the equivalence theorem).

\subsection{Markov Decision Processes}
\label{sec:mdp}

We consider an infinite-horizon discounted MDP $\cM = (\cS, \cA, P, \nu_0, r, \beta)$ where $\cS$ is a (possibly continuous) state space, $\cA$ is a \emph{finite} action space, $P : \cS \times \cA \to \Delta(\cS)$ is the transition kernel, $\nu_0 \in \Delta(\cS)$ is the initial distribution, $r : \cS \times \cA \to [-R_{\max}, R_{\max}]$ is the (unknown) reward, and $\beta \in (0, 1)$ is the discount. A stationary Markov policy is a measurable map $\pi : \cS \to \Delta(\cA)$ with $\pi(a \mid s)$ the probability of $a$ at $s$. We write $\PP_{\nu_0, \pi}$ and $\EE_{\nu_0, \pi}$ for the distribution and expectation over the resulting trajectory $\tau = (s_0, a_0, s_1, a_1, \ldots)$.

The discounted state-action occupancy of $\pi$ is
\begin{equation*}
d^\pi(s, a) \;:=\; (1 - \beta) \sum_{t=0}^\infty \beta^t\, \PP_{\nu_0, \pi}(s_t = s,\, a_t = a),
\end{equation*}
a probability measure on $\cS \times \cA$. The offline dataset is
\begin{equation*}
\cD \;=\; \bigl\{(s_i, a_i, s_i')\bigr\}_{i=1}^N,
\end{equation*}
sampled iid from $d^{\pi^\star}$ for a single expert policy $\pi^\star$. Note what is missing: \emph{we do not observe rewards in $\cD$}: only states, actions, and next states. That omission is precisely what makes the problem ``inverse,'' and it is the canonical setup of both DDC and IRL.

\subsection{Function Spaces, Norms, Projections}
\label{sec:norms}

Let $\cQ := \{Q : \cS \times \cA \to \R\;\text{measurable, bounded}\}$. The sup-norm is $\|Q\|_\infty := \sup_{s, a} |Q(s,a)|$. For $\rho \in \Delta(\cS \times \cA)$ we write $\|Q\|_\rho^2 := \EE_{(s,a) \sim \rho}[Q(s,a)^2]$ and $\langle Q_1, Q_2 \rangle_\rho := \EE_\rho[Q_1 Q_2]$, so $L^2(\rho)$ is a Hilbert space. For a closed linear subspace $\cF \subseteq L^2(\rho)$, the $L^2(\rho)$-projection is the unique element $\Pi_{\cF, \rho} Q \in \cF$ satisfying $\|Q-\Pi_{\cF, \rho}Q\|_\rho = \inf_{f \in \cF}\|Q-f\|_\rho$. This restriction to closed linear subspaces matters: projections onto arbitrary closed nonconvex sets may fail to be unique. For closed linear subspaces, $\Pi_{\cF, \rho}$ is a non-expansion: $\|\Pi Q_1 - \Pi Q_2\|_\rho \le \|Q_1 - Q_2\|_\rho$ for all $Q_1, Q_2$. We will need this fact, both the projection itself and its non-expansion property, when analyzing temporal-difference methods in Section~\ref{sec:td-ddc}.

\subsection{The Dynamic Discrete Choice Formulation}
\label{sec:ddc}

The DDC setup follows the Type 1 Extreme Value (T1EV) derivation in \citet{kang2025empirical}. At each period the agent observes the deterministic flow payoff $r(s_t,a_t)$ plus an idiosyncratic alternative-specific shock $\epsilon_{t,a_t}$. We use the unit-scale Gumbel distribution with location parameter $\delta$:
\begin{align}
F_\delta(x)&=\exp\!\bigl(-\exp(-(x-\delta))\bigr),\label{eq:gumbel-cdf}\\
f_\delta(x)&=\exp\!\bigl(-((x-\delta)+\exp(-(x-\delta)))\bigr),\label{eq:gumbel-pdf}\\
\EE[\epsilon]&=\delta+\gamma_E=:\mu_\epsilon,
\label{eq:gumbel-location-one}
\end{align}
where $\gamma_E$ is the Euler--Mascheroni constant. The mean-zero T1EV normalization used for the baseline DDC--MaxEnt equivalence is
\begin{equation*}
\delta=-\gamma_E,
\qquad
\mu_\epsilon=0.
\end{equation*}

The DDC agent maximizes
\begin{equation}
\pi^\star
\in
\argmax_{\pi}\;
\EE_{\nu_0,\pi,G}\!\left[
\sum_{t=0}^\infty \beta^t\bigl(r(s_t,a_t)+\epsilon_{t,a_t}\bigr)
\right],
\label{eq:ddc-objective}
\end{equation}
where the expectation averages over the Markov chain and the idiosyncratic shocks. The policy is induced by choosing the alternative with the largest shock-augmented choice-specific value. The derivation in Section~\ref{sec:soft-policy} shows that, under the mean-zero normalization $\delta=-\gamma_E$, this DDC problem implies
\begin{align}
\pi^\star(a\mid s)
&=\frac{\exp(Q^\star(s,a))}{\sum_{a'\in\cA}\exp(Q^\star(s,a'))},
\label{eq:opt-policy}\\
Q^\star(s,a)
&=r(s,a)+\beta\,\EE_{s'\sim P(\cdot\mid s,a)}\!\left[
\log\sum_{a'\in\cA}\exp(Q^\star(s',a'))
\right].
\label{eq:soft-bellman}
\end{align}
If $\delta\neq-\gamma_E$, the same derivation gives the same softmax rule but adds the constant $\beta(\delta+\gamma_E)$ to the right side of the $Q$ Bellman equation.

\subsection{The Entropy-Regularized IRL Formulation}
\label{sec:irl-setup}

The MaxEnt-IRL setup in \citet{kang2025empirical} instead treats the agent as choosing a stochastic stationary policy and explicitly rewards entropy. With regularization coefficient $\lambda$, the forward problem is
\begin{equation}
\pi^\star
\in
\argmax_{\pi}\;
\EE_{\nu_0,\pi}\!\left[
\sum_{t=0}^\infty \beta^t\bigl(r(s_t,a_t)+\lambda\Ent(\pi(\cdot\mid s_t))\bigr)
\right],
\label{eq:irl-objective}
\end{equation}
where $\Ent(\pi(\cdot\mid s)):=-\sum_a\pi(a\mid s)\log\pi(a\mid s)$. Throughout the main text, and in the equivalence theorem below, we set
\begin{equation*}
\lambda=1,
\end{equation*}
which is the entropy scale corresponding to unit-scale T1EV shocks. Section~\ref{sec:soft-policy} derives the same softmax rule~\eqref{eq:opt-policy} and soft Bellman equation~\eqref{eq:soft-bellman} from this entropy-regularized problem.

\subsection{The Soft Bellman Operator}
\label{sec:soft-bellman}

\begin{definition}[Soft Bellman operator]
\label{def:soft-bellman}
For $Q \in \cQ$, define
\begin{equation*}
(\cT_{\mathrm{soft}} Q)(s, a) \;:=\; r(s, a) + \beta\, \EE_{s' \sim P(\cdot \mid s, a)}\!\Bigl[\log \sum_{a' \in \cA} \exp(Q(s', a'))\Bigr].
\end{equation*}
We write $\LSE(Q(s', \cdot)) := \log \sum_{a' \in \cA} \exp(Q(s', a'))$ for the unit-temperature log-sum-exp. If the DDC shock mean is $\mu_\epsilon\neq 0$, the corresponding uncentered DDC operator is
\begin{equation*}
(\cT_{\mathrm{DDC},\mu_\epsilon}Q)(s,a)
\;:=\;
(\cT_{\mathrm{soft}}Q)(s,a)+\beta\mu_\epsilon.
\end{equation*}
\end{definition}

The log-sum-exp is a smooth surrogate for $\max$, sandwiched between it and a $\log|\cA|$ slack: $\max_{a'} Q(s', a') \le \LSE(Q(s', \cdot)) \le \max_{a'} Q(s', a') + \log|\cA|$. The two facts below are what we need to treat $\cT_{\mathrm{soft}}$ as well-behaved as the ordinary hard Bellman operator.

\begin{lemma}[Log-sum-exp is 1-Lipschitz in sup-norm]
\label{lem:lse-lip}
For bounded $f, g : \cA \to \R$,
\begin{equation*}
\bigl|\LSE(f) - \LSE(g)\bigr| \;\le\; \max_{a \in \cA} |f(a) - g(a)|.
\end{equation*}
\end{lemma}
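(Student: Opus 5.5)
The plan is to use the monotonicity and translation-equivariance of $\LSE$, which gives a two-line argument with no calculus. Set $c := \max_{a \in \cA} |f(a) - g(a)|$, which is finite because $\cA$ is finite (and $f,g$ are bounded). Then $f(a) \le g(a) + c$ pointwise for every $a \in \cA$.

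The first step is to record two elementary properties of $\LSE$. It is monotone: if $f \le h$ pointwise, then $\sum_a e^{f(a)} \le \sum_a e^{h(a)}$, and $\log$ is increasing. It also commutes with constant shifts: $\LSE(g + c) = \log\bigl(e^{c}\sum_a e^{g(a)}\bigr) = c + \LSE(g)$. Combining these with $f \le g + c$ gives $\LSE(f) \le \LSE(g) + c$.

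The second step is symmetry. Swapping the roles of $f$ and $g$, since $g \le f + c$ as well, gives $\LSE(g) \le \LSE(f) + c$. Together the two inequalities yield $|\LSE(f) - \LSE(g)| \le c$, which is the claim.

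As a cross-check, one could instead use the mean value theorem along the segment $g + t(f-g)$, $t \in [0,1]$. The gradient of $\LSE$ is the softmax vector, which lies in the simplex and so has $\ell_1$-norm equal to $1$. Hölder's inequality ($\ell_1$ against $\ell_\infty$) then bounds the derivative in $t$ by $c$.

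There is no real obstacle. The only point needing care is that $c$ is a finite maximum, which holds because $\cA$ is finite. That finiteness is also what keeps $\LSE$ well defined, and it is why the lemma is phrased for bounded $f,g$ on the finite action set.
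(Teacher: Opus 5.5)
Your proof is correct and is essentially the paper's argument: the paper sandwiches $g(a)-\Delta \le f(a) \le g(a)+\Delta$, exponentiates, sums over $a$, and takes logs, which is exactly your monotonicity-plus-shift step applied in both directions. Your mean-value-theorem cross-check (softmax gradient with unit $\ell_1$-norm, paired with $\ell_\infty$ via H\"older) is also valid but not needed.
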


\begin{proof}
Let $\Delta := \max_a |f(a) - g(a)|$, so $g(a) - \Delta \le f(a) \le g(a) + \Delta$ for every $a$. Exponentiating preserves the inequalities and gives
\begin{equation*}
e^{-\Delta} \sum_a e^{g(a)} \;\le\; \sum_a e^{f(a)} \;\le\; e^{\Delta} \sum_a e^{g(a)}.
\end{equation*}
Taking logs, $\LSE(g) - \Delta \le \LSE(f) \le \LSE(g) + \Delta$, hence $|\LSE(f) - \LSE(g)| \le \Delta$.
\end{proof}

\begin{lemma}[Shift identity for LSE]
\label{lem:lse-shift}
For bounded $f : \cA \to \R$ and any $c \in \R$,
\begin{equation*}
\LSE(f + c) \;=\; \LSE(f) + c, \qquad \text{where } (f + c)(a) := f(a) + c.
\end{equation*}
\end{lemma}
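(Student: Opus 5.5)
The plan is a direct computation from the definition $\LSE(f) = \log \sum_{a \in \cA} \exp(f(a))$, with no appeal to Lemma~\ref{lem:lse-lip} needed. First, I would expand $\LSE(f+c)$ by definition and use $\exp(f(a)+c) = e^{c}\exp(f(a))$ termwise. Because $\cA$ is finite, the sum is finite, so the common factor $e^{c}$ pulls out of the sum:
\begin{equation*}
\sum_{a \in \cA} \exp\bigl(f(a)+c\bigr) \;=\; e^{c} \sum_{a \in \cA} \exp\bigl(f(a)\bigr).
\end{equation*}

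Next, I would take logarithms. Since $f$ is bounded and $\cA$ is finite and nonempty, $\sum_a \exp(f(a))$ is a strictly positive finite number. The logarithm is therefore well defined, and $\log(xy) = \log x + \log y$ for $x, y > 0$. This gives $\LSE(f+c) = c + \LSE(f)$, which is the claim.

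There is essentially no obstacle. The only point to check is that the sum is strictly positive and finite, so that the logarithm and the factorization are legitimate; finiteness of $\cA$ and boundedness of $f$ settle this. I would also remark that the identity is an equality-case analogue of the argument in Lemma~\ref{lem:lse-lip}. That lemma shifts every coordinate by at most $\Delta$, whereas here every coordinate is shifted by exactly $c$. This shift identity is what will later let a constant such as $\beta\mu_\epsilon$ in $\cT_{\mathrm{DDC},\mu_\epsilon}$ pass through the soft Bellman recursion.
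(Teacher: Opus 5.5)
Your proposal is correct and matches the paper's proof exactly: factor $e^{c}$ out of the finite sum $\sum_a e^{f(a)+c}$ and take logarithms. Your extra check that the sum is strictly positive and finite is a harmless justification that the paper leaves implicit.
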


\begin{proof}
$\LSE(f + c) = \log \sum_a e^{f(a) + c} = \log\!\bigl(e^c \sum_a e^{f(a)}\bigr) = c + \LSE(f)$.
\end{proof}

\begin{lemma}[Sup-norm contraction of $\cT_{\mathrm{soft}}$]
\label{lem:soft-contraction}
For any $Q_1, Q_2 \in \cQ$,
\begin{equation*}
\|\cT_{\mathrm{soft}} Q_1 - \cT_{\mathrm{soft}} Q_2\|_\infty \;\le\; \beta\,\|Q_1 - Q_2\|_\infty.
\end{equation*}
\end{lemma}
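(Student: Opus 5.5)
The plan is to fix an arbitrary pair $(s,a)$, bound the pointwise difference, and then take the supremum. By Definition~\ref{def:soft-bellman}, the reward terms cancel:
\begin{equation*}
(\cT_{\mathrm{soft}} Q_1)(s,a) - (\cT_{\mathrm{soft}} Q_2)(s,a) \;=\; \beta\,\EE_{s'\sim P(\cdot\mid s,a)}\!\Bigl[\LSE(Q_1(s',\cdot)) - \LSE(Q_2(s',\cdot))\Bigr].
\end{equation*}
This step uses linearity of the expectation. That requires both integrands to be integrable, which holds because $Q_1,Q_2$ are bounded and measurable: the sandwich $\max_{a'}Q \le \LSE(Q) \le \max_{a'}Q + \log|\cA|$ shows $s'\mapsto\LSE(Q_i(s',\cdot))$ is bounded. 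It is also measurable, as a continuous function of finitely many measurable maps, since $\cA$ is finite.

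Next, I will move the absolute value inside the expectation using Jensen's inequality, $|\EE X|\le\EE|X|$. I will then apply Lemma~\ref{lem:lse-lip} at each fixed $s'$ with $f=Q_1(s',\cdot)$ and $g=Q_2(s',\cdot)$. This gives
\begin{equation*}
\bigl|\LSE(Q_1(s',\cdot)) - \LSE(Q_2(s',\cdot))\bigr| \;\le\; \max_{a'}|Q_1(s',a')-Q_2(s',a')| \;\le\; \|Q_1-Q_2\|_\infty .
\end{equation*}
The right-hand side is a constant, so its expectation under $P(\cdot\mid s,a)$ equals itself. Hence $|(\cT_{\mathrm{soft}} Q_1 - \cT_{\mathrm{soft}} Q_2)(s,a)| \le \beta\|Q_1-Q_2\|_\infty$.

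Finally, the bound is uniform in $(s,a)$, so taking the supremum over $\cS\times\cA$ yields the claim. There is no genuine obstacle here. The only point needing care is the measurability and boundedness check that makes the expectation well defined, handled above; all the analytic content is already contained in Lemma~\ref{lem:lse-lip}.
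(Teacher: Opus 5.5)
Your proposal is correct and follows essentially the same route as the paper: cancel the reward terms, pass the absolute value inside the expectation, apply Lemma~\ref{lem:lse-lip} at each $s'$, bound by $\|Q_1-Q_2\|_\infty$, and take the supremum over $(s,a)$. Your added check that $s'\mapsto\LSE(Q_i(s',\cdot))$ is bounded and measurable is a harmless refinement that the paper leaves implicit.
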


\begin{proof}
The reward terms cancel. For any fixed $(s, a)$,
\begin{align*}
|\cT_{\mathrm{soft}} Q_1(s,a) - \cT_{\mathrm{soft}} Q_2(s,a)|
&= \beta\,\bigl|\EE_{s' \sim P(\cdot \mid s, a)}[\LSE(Q_1(s',\cdot)) - \LSE(Q_2(s',\cdot))]\bigr|\\
&\le \beta\,\EE_{s'}\bigl[\bigl|\LSE(Q_1(s',\cdot)) - \LSE(Q_2(s',\cdot))\bigr|\bigr] && \text{(Jensen)}\\
&\le \beta\,\EE_{s'}\bigl[\max_{a'} |Q_1(s',a') - Q_2(s',a')|\bigr] && \text{(Lemma~\ref{lem:lse-lip})}\\
&\le \beta\,\|Q_1 - Q_2\|_\infty.
\end{align*}
Taking sup over $(s,a)$ closes the bound.
\end{proof}

By Banach's fixed-point theorem on $(\cQ, \|\cdot\|_\infty)$, $\cT_{\mathrm{soft}}$ has a unique fixed point $Q^\star$ and \emph{soft value iteration} $Q_{k+1} := \cT_{\mathrm{soft}} Q_k$ converges geometrically: $\|Q_k - Q^\star\|_\infty \le \beta^k \|Q_0 - Q^\star\|_\infty$. So far this is identical to the hard-Bellman story; the action is in what comes next.

\subsection{DDC--MaxEnt-IRL Equivalence}
\label{sec:soft-policy}

\begin{lemma}[T1EV maximum, choice probability, and selected-value identities]
\label{lem:gumbel-max}
Let $G_1,\ldots,G_K$ be independent with $G_i\sim\operatorname{Gumbel}(\nu_i,1)$ and define $S:=\sum_i e^{\nu_i}$. Then
\begin{align}
\max_i G_i &\sim \operatorname{Gumbel}(\log S,1),
\label{eq:gumbel-max-distribution}\\
\EE\bigl[\max_i G_i\bigr] &= \gamma_E+\log S,
\label{eq:gumbel-LSE}\\
\PP\{k=\argmax_i G_i\} &= \frac{e^{\nu_k}}{S},
\label{eq:gumbel-softmax}\\
\EE\bigl[G_k\mid k=\argmax_i G_i\bigr]
&=\gamma_E+\nu_k-\log\frac{e^{\nu_k}}{S}.
\label{eq:gumbel-selected-value}
\end{align}
\end{lemma}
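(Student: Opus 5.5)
The plan is to work throughout with the unit-scale Gumbel CDF $F_\nu(x)=\exp(-e^{-(x-\nu)})$ from \eqref{eq:gumbel-cdf} and its density \eqref{eq:gumbel-pdf}. Each of the four identities then reduces to a one-line manipulation of exponentials plus a single standard integral.

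\textbf{Step 1 (distribution of the maximum).} By independence,
\[
\PP\{\max_i G_i\le x\}=\prod_i F_{\nu_i}(x)=\exp\Bigl(-e^{-x}\sum_i e^{\nu_i}\Bigr)=\exp\bigl(-e^{-(x-\log S)}\bigr),
\]
which is the $\operatorname{Gumbel}(\log S,1)$ CDF. This proves \eqref{eq:gumbel-max-distribution}.

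\textbf{Step 2 (mean of the maximum).} Identity \eqref{eq:gumbel-LSE} follows once we know that the mean of a $\operatorname{Gumbel}(\delta,1)$ variable is $\delta+\gamma_E$, as recorded in \eqref{eq:gumbel-location-one}. If I want to justify that fact rather than cite it, I would substitute $u=e^{-(x-\delta)}$. This turns the mean into $\delta-\int_0^\infty \log u\, e^{-u}\,du=\delta-\Gamma'(1)=\delta+\gamma_E$.

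\textbf{Step 3 (choice probabilities).} I would condition on $G_k=x$ and integrate:
\[
\PP\{k=\argmax_i G_i\}=\int f_{\nu_k}(x)\prod_{i\ne k}F_{\nu_i}(x)\,dx .
\]
Since $f_{\nu_k}(x)=e^{-(x-\nu_k)}F_{\nu_k}(x)$, the integrand equals $e^{\nu_k}e^{-x}\exp(-Se^{-x})$. Substituting $u=e^{-x}$ gives $e^{\nu_k}\int_0^\infty e^{-Su}\,du=e^{\nu_k}/S$. Ties have probability zero because the laws are continuous, so $\argmax$ is almost surely well defined.

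\textbf{Step 4 (selected value).} This is the main obstacle, since it needs the joint behaviour of $G_k$ and the event $\{k=\argmax_i G_i\}$. I would reuse the integrand from Step 3. Conditional on the event, $G_k$ has density
\[
\frac{e^{\nu_k}e^{-x}\exp(-Se^{-x})}{e^{\nu_k}/S}=Se^{-x}\exp(-Se^{-x}),
\]
which is exactly the $\operatorname{Gumbel}(\log S,1)$ density. Hence the conditional law of $G_k$ coincides with the law of $\max_i G_i$ from Step 1, and in particular it does not depend on $k$. By Step 2 the conditional mean is $\gamma_E+\log S$. Finally, I would rewrite $\log S=\nu_k-\log(e^{\nu_k}/S)$ to obtain the form stated in \eqref{eq:gumbel-selected-value}.

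The only delicate points are justifying the conditioning integral, which is routine by Fubini since the joint density exists, and keeping the signs straight in the substitution of Step 2.
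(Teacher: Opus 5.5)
Your proof is correct. Steps 1--3 follow the paper's argument, and you evaluate the ``standard T1EV integral'' that the paper only cites for the choice probability, via $f_{\nu_k}=e^{-(x-\nu_k)}F_{\nu_k}$ and the substitution $u=e^{-x}$.

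The genuine difference is in Step 4. The paper first collapses the competitors into $M_{-k}:=\max_{i\neq k}G_i\sim\operatorname{Gumbel}(\log\sum_{i\neq k}e^{\nu_i},1)$. It then appeals, without carrying it out, to a two-variable conditional-expectation calculation for $G_k$ versus $M_{-k}$, arriving at $\gamma_E+\nu_k+\log(1+\sum_{i\neq k}e^{\nu_i}/e^{\nu_k})$. You instead divide the Step-3 integrand by the choice probability and recognize the conditional density $Se^{-x}\exp(-Se^{-x})$ as the $\operatorname{Gumbel}(\log S,1)$ density.

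Since $F_{M_{-k}}=\prod_{i\neq k}F_{\nu_i}$, the paper's pairwise calculation is the same integral in disguise. Your version is self-contained and proves more: the conditional law of the selected value equals the law of the maximum and does not depend on $k$, so the mean formula follows from Step 2 with no further integration. The paper's reduction, by contrast, makes visible that only the comparison between $G_k$ and a single aggregated Gumbel competitor matters.
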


\begin{proof}
For the maximum, independence gives
\begin{align*}
\PP\{\max_i G_i\le x\}
&=\prod_i \exp\{-\exp(-(x-\nu_i))\} \\
&=\exp\left\{-e^{-x}\sum_i e^{\nu_i}\right\}
=\exp\{-\exp(-(x-\log S))\},
\end{align*}
so $\max_iG_i\sim\operatorname{Gumbel}(\log S,1)$ and its expectation is $\gamma_E+\log S$. The choice probability follows from the standard T1EV integral:
\begin{align*}
\PP\{G_k>\max_{i\neq k}G_i\}
&=\int_{-\infty}^\infty
\prod_{i\neq k}F_i(x)f_k(x)\,dx
=\frac{e^{\nu_k}}{\sum_i e^{\nu_i}}.
\end{align*}
For the selected-value identity, let $M_{-k}:=\max_{i\neq k}G_i$. The first part shows $M_{-k}\sim\operatorname{Gumbel}(\log\sum_{i\neq k}e^{\nu_i},1)$. Applying the two-variable T1EV conditional-expectation calculation to $G_k$ and $M_{-k}$ yields
\begin{equation*}
\EE[G_k\mid G_k\ge M_{-k}]
=\gamma_E+\nu_k+\log\left(1+\frac{\sum_{i\neq k}e^{\nu_i}}{e^{\nu_k}}\right)
=\gamma_E+\nu_k-\log\frac{e^{\nu_k}}{S},
\end{equation*}
which is exactly~\eqref{eq:gumbel-selected-value}.
\end{proof}

\begin{lemma}[Entropy-regularized discrete choice identity]
\label{lem:entropy-choice}
For $x=(x_a)_{a\in\cA}$ and $\lambda=1$,
\begin{align}
\max_{q\in\Delta(\cA)}\left\{\sum_a q_a x_a + \Ent(q)\right\}
&=\log\sum_a e^{x_a},
\label{eq:entropy-lse}\\
\argmax_{q\in\Delta(\cA)}\left\{\sum_a q_a x_a + \Ent(q)\right\}_a
&=\frac{e^{x_a}}{\sum_b e^{x_b}}.
\label{eq:entropy-softmax}
\end{align}
\end{lemma}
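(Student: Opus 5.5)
The plan is to obtain both identities in a single step, using the Gibbs variational principle: nonnegativity of the Kullback--Leibler divergence. Set $S:=\sum_b e^{x_b}$ and define the softmax distribution $p_a := e^{x_a}/S$. Then $p\in\Delta(\cA)$ and every coordinate is strictly positive. For an arbitrary $q\in\Delta(\cA)$, I will rewrite the objective, with the convention $0\log 0=0$ applied to zero coordinates of $q$. Since $\log p_a = x_a - \log S$, we have $x_a = \log p_a + \log S$. Substituting,
\begin{equation*}
\sum_a q_a x_a + \Ent(q)
= \sum_a q_a(\log p_a + \log S) - \sum_a q_a\log q_a
= \log S - \sum_a q_a \log\frac{q_a}{p_a}
= \log S - \KL(q\,\|\,p),
\end{equation*}
where the last step uses $\sum_a q_a = 1$.

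The key step is Gibbs' inequality: $\KL(q\|p)\ge 0$, with equality if and only if $q=p$. I will prove it directly rather than cite it. Apply Jensen's inequality to the concave function $\log$ over the support of $q$:
\begin{equation*}
-\KL(q\|p) = \sum_{a:\,q_a>0} q_a\log\frac{p_a}{q_a} \le \log\sum_{a:\,q_a>0} p_a \le \log 1 = 0.
\end{equation*}
Equality in the first inequality requires $p_a/q_a$ to be constant on the support of $q$, because $\log$ is strictly concave. Equality in the second requires the support of $q$ to carry all of the mass of $p$. Since every $p_a>0$, that means the support is all of $\cA$. Together these two conditions force $q=p$.

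Combining, the objective is at most $\log S$ for every $q$, and it equals $\log S$ exactly at $q=p$. This gives \eqref{eq:entropy-lse}. It also shows the argmax is unique and equal to the softmax, which is \eqref{eq:entropy-softmax}.

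The only real obstacle is treating boundary points of the simplex, where some $q_a=0$, carefully. The $0\log 0$ convention together with strict positivity of $p$ handles this. It also means no Lagrange-multiplier interiority argument is needed. As a sanity check, I could alternatively solve the first-order condition $x_a - \log q_a - 1 - \eta = 0$ on the interior, which gives $q_a\propto e^{x_a}$, and appeal to strict concavity of $\Ent$. The KL route above is cleaner, however, and delivers the maximal value in the same stroke.
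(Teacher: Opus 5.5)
Your proof is correct, and it follows a different route from the paper. The paper argues constructively. It notes that the objective is strictly concave on the simplex, writes the Lagrangian stationarity condition $x_a-\log q_a-1-\eta=0$, reads off $q_a\propto e^{x_a}$, and obtains \eqref{eq:entropy-lse} by substituting the softmax back in. Concavity then certifies that this interior stationary point is a global maximizer over the whole closed simplex, boundary included, and strictness gives uniqueness. That route has the advantage of \emph{deriving} the softmax form rather than requiring you to guess it, and it is the template that extends to other concave regularizers where no closed-form guess is available. Yours is a verification argument. Once $p$ is guessed, the exact identity $\sum_a q_a x_a+\Ent(q)=\log S-\KL(q\,\|\,p)$ plus Gibbs' inequality gives the value, the maximizer, and its uniqueness in one stroke. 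Your Jensen proof of Gibbs' inequality is complete, including an equality analysis that correctly uses $p_a>0$ for every $a$. This route handles boundary points of the simplex with no convex-analysis or KKT reasoning. It also gives slightly more than the lemma asks: the suboptimality of any $q$ is exactly $\KL(q\,\|\,p)$. Finally, it uses the same Gibbs-inequality tool the paper invokes later in Lemma~\ref{lem:nll}, so it fits naturally within the notes.
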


\begin{proof}
The objective is strictly concave on the simplex. The Lagrangian first-order condition is
\begin{equation*}
x_a-\log q_a-1-\eta=0,
\end{equation*}
so $q_a\propto e^{x_a}$ and normalization gives~\eqref{eq:entropy-softmax}. Substitution gives~\eqref{eq:entropy-lse}.
\end{proof}

\paragraph{MaxEnt-IRL derivation.}
Define the value and choice-specific value functions by
\begin{align*}
V^\star(s)
&:=\EE_{\pi^\star}\!\left[\sum_{h=0}^\infty\beta^h
\bigl(r(s_h,a_h)+\Ent(\pi^\star(\cdot\mid s_h))\bigr)\;\middle|\;s_0=s\right],\\
Q^\star(s,a)&:=r(s,a)+\beta\,\EE_{s'\sim P(\cdot\mid s,a)}[V^\star(s')].
\end{align*}
At a fixed state $s$, the Bellman recursion is the finite-dimensional entropy-regularized choice problem
\begin{align}
V^\star(s)
&=\max_{q\in\Delta(\cA)}
\left\{
\sum_{a\in\cA}q_a\left(r(s,a)+\beta\EE[V^\star(s')\mid s,a]\right)+\Ent(q)
\right\} \notag\\
&=\max_{q\in\Delta(\cA)}
\left\{\sum_{a\in\cA}q_aQ^\star(s,a)+\Ent(q)\right\}.
\label{eq:irl-bellman-v-choice}
\end{align}
By Lemma~\ref{lem:entropy-choice},
\begin{align}
V^\star(s)&=\log\sum_{a\in\cA}\exp(Q^\star(s,a)),
\label{eq:irl-value-lse}\\
\pi^\star(a\mid s)&=\frac{\exp(Q^\star(s,a))}{\sum_{a'\in\cA}\exp(Q^\star(s,a'))}.
\label{eq:irl-policy-softmax}
\end{align}
Substituting~\eqref{eq:irl-value-lse} into the definition of $Q^\star$ gives the choice-specific Bellman equation
\begin{equation}
Q^\star(s,a)
=r(s,a)+\beta\,\EE_{s'\sim P(\cdot\mid s,a)}\!\left[
\log\sum_{a'\in\cA}\exp(Q^\star(s',a'))
\right].
\label{eq:irl-q-lse-bellman}
\end{equation}
The paper also records the equivalent continuation representation. Since~\eqref{eq:irl-policy-softmax} implies
\begin{equation*}
Q^\star(s',a')-\log\pi^\star(a'\mid s')
=\log\sum_{b\in\cA}\exp(Q^\star(s',b))
\qquad \text{for every }a',
\end{equation*}
Equation~\eqref{eq:irl-q-lse-bellman} can be written as
\begin{equation}
Q^\star(s,a)
=r(s,a)+\beta\,\EE_{s'\sim P(\cdot\mid s,a),\,a'\sim\pi^\star(\cdot\mid s')}\!\left[
Q^\star(s',a')-\log\pi^\star(a'\mid s')
\right].
\label{eq:irl-q-policy-continuation}
\end{equation}
This is the analogue of Equation~(A5) in the paper.

\paragraph{DDC derivation.}
Let $\epsilon_{t,a}\iid\operatorname{Gumbel}(\delta,1)$ across actions and dates. The shock-realized value and its shock average are
\begin{align*}
V^\star(s,\epsilon)
&:=\max_{a\in\cA}\left\{r(s,a)+\epsilon_a+\beta\,\EE[\bar V^\star(s')\mid s,a]\right\},\\
\bar V^\star(s)&:=\EE_\epsilon[V^\star(s,\epsilon)],
\end{align*}
and the DDC choice-specific value is
\begin{equation}
Q^\star(s,a):=r(s,a)+\beta\,\EE_{s'\sim P(\cdot\mid s,a)}[\bar V^\star(s')].
\label{eq:ddc-q-definition-paper-style}
\end{equation}
Thus
\begin{equation}
V^\star(s,\epsilon)=\max_{a\in\cA}\{Q^\star(s,a)+\epsilon_a\}.
\label{eq:ddc-shock-realized-value}
\end{equation}
For each action $a$, $Q^\star(s,a)+\epsilon_a$ is $\operatorname{Gumbel}(Q^\star(s,a)+\delta,1)$. Lemma~\ref{lem:gumbel-max} gives
\begin{align}
\bar V^\star(s)
&=\delta+\gamma_E+\log\sum_{a\in\cA}\exp(Q^\star(s,a)),
\label{eq:ddc-value-lse-delta}\\
\pi^\star(a\mid s)
&=\PP\left\{a=\argmax_{b\in\cA}(Q^\star(s,b)+\epsilon_b)\right\}
=\frac{\exp(Q^\star(s,a))}{\sum_{a'\in\cA}\exp(Q^\star(s,a'))}.
\label{eq:ddc-policy-softmax-paper-style}
\end{align}
Substituting~\eqref{eq:ddc-value-lse-delta} into~\eqref{eq:ddc-q-definition-paper-style} yields
\begin{equation}
Q^\star(s,a)
=r(s,a)+\beta\,\EE_{s'\sim P(\cdot\mid s,a)}\!\left[
\log\sum_{a'\in\cA}\exp(Q^\star(s',a'))+\delta+\gamma_E
\right].
\label{eq:ddc-q-bellman-delta}
\end{equation}
Under the mean-zero T1EV normalization $\delta=-\gamma_E$, this becomes exactly~\eqref{eq:soft-bellman}. This is Equation~(3) in the paper.

The same DDC Bellman equation can also be written in the policy-continuation form used in the paper. Conditional on the selected next action $a'$, Lemma~\ref{lem:gumbel-max} gives
\begin{equation*}
\EE\left[Q^\star(s',a')+\epsilon_{a'}\;\middle|\;a'=\argmax_b(Q^\star(s',b)+\epsilon_b)\right]
=Q^\star(s',a')+\delta+\gamma_E-\log\pi^\star(a'\mid s').
\end{equation*}
Therefore
\begin{equation}
Q^\star(s,a)
=r(s,a)+\beta\,\EE_{s'\sim P(\cdot\mid s,a),\,a'\sim\pi^\star(\cdot\mid s')}\!\left[
Q^\star(s',a')+\delta+\gamma_E-\log\pi^\star(a'\mid s')
\right].
\label{eq:ddc-q-policy-continuation}
\end{equation}
When $\delta=-\gamma_E$, Equation~\eqref{eq:ddc-q-policy-continuation} is identical to the MaxEnt-IRL continuation representation~\eqref{eq:irl-q-policy-continuation}.

\subsection{The DDC--IRL Equivalence Theorem}
\label{sec:equivalence}

\begin{theorem}[Equivalence of mean-zero unit-scale DDC and unit-entropy MaxEnt-IRL]
\label{thm:equivalence}
Fix $\cS$, $\cA$, $P$, $\nu_0$, $r$, and $\beta$. Let the DDC shocks be iid $\operatorname{Gumbel}(\delta,1)$ with unit scale, and set $\delta=-\gamma_E$. Let the MaxEnt-IRL regularization coefficient be $\lambda=1$. Then DDC and MaxEnt-IRL are characterized by the same three equations:
\begin{align*}
Q^\star(s,a)
&=r(s,a)+\beta\,\EE_{s'\sim P(\cdot\mid s,a)}\!\left[
\log\sum_{a'\in\cA}\exp(Q^\star(s',a'))
\right],\\
\pi^\star(a\mid s)
&=\frac{\exp(Q^\star(s,a))}{\sum_{a'\in\cA}\exp(Q^\star(s,a'))},\\
Q^\star(s,a)
&=r(s,a)+\beta\,\EE_{s'\sim P(\cdot\mid s,a),\,a'\sim\pi^\star(\cdot\mid s')}\!\left[
Q^\star(s',a')-\log\pi^\star(a'\mid s')
\right].
\end{align*}
Consequently, the same reward function $r$ and discount factor $\beta$ produce the same unique choice-specific value function $Q^\star$, the same optimal policy $\pi^\star$, and hence the same joint distribution over observed $(s_t,a_t,s_{t+1})$ trajectories under any common $\nu_0$.
\end{theorem}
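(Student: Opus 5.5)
The proof assembles the two derivations of Section~\ref{sec:soft-policy} and then adds uniqueness. First, for the MaxEnt-IRL side, I will invoke the state-wise Bellman recursion~\eqref{eq:irl-bellman-v-choice} and Lemma~\ref{lem:entropy-choice} with $\lambda=1$. This yields the softmax policy~\eqref{eq:irl-policy-softmax} and the log-sum-exp value~\eqref{eq:irl-value-lse}. Substituting the value into the definition of $Q^\star$ gives the first displayed equation. The third equation follows because the softmax form makes $Q^\star(s',a')-\log\pi^\star(a'\mid s')$ equal to $\LSE(Q^\star(s',\cdot))$ for every $a'$. Averaging this constant over $a'\sim\pi^\star(\cdot\mid s')$ therefore changes nothing, which is exactly~\eqref{eq:irl-q-policy-continuation}.

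For the DDC side, I will use Lemma~\ref{lem:gumbel-max} with $\nu_a=Q^\star(s,a)+\delta$. Identity~\eqref{eq:gumbel-softmax} gives the softmax policy, and the $\delta$ cancels between numerator and denominator. Identity~\eqref{eq:gumbel-LSE} gives~\eqref{eq:ddc-value-lse-delta}, and substitution gives~\eqref{eq:ddc-q-bellman-delta}. Setting $\delta=-\gamma_E$ kills the additive constant, producing the first equation. For the third equation there are two routes. One is to use~\eqref{eq:gumbel-selected-value} to obtain~\eqref{eq:ddc-q-policy-continuation}. The simpler route, which I prefer, is to reuse the IRL argument verbatim: it depends only on the first two equations, which are now established for DDC as well.

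Second, I will argue uniqueness and the consequence for data. By Lemma~\ref{lem:soft-contraction} and Banach's theorem, the first equation has a unique bounded solution, since $r$ is bounded and $\cT_{\mathrm{soft}}$ maps $\cQ$ to itself. The $Q^\star$ produced by either model must therefore be this same fixed point. The softmax map then pins down a common $\pi^\star$. The trajectory law $\PP_{\nu_0,\pi^\star}$ is determined by $(\nu_0,\pi^\star,P)$, so the joint distributions of $(s_t,a_t,s_{t+1})$ coincide. Note that the DDC shocks are integrated out: the observed action law given $s$ is exactly $\pi^\star(\cdot\mid s)$, and the shocks are iid across dates and independent of the state, so the observed process is Markov with kernel $\pi^\star$ and $P$.

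The main obstacle is justifying that the $Q^\star$ each model defines is bounded and actually satisfies the recursions. In particular, the IRL value must satisfy the Bellman optimality equation~\eqref{eq:irl-bellman-v-choice}, and the DDC value functions must be well defined, which requires $\EE|\epsilon|<\infty$. I will handle this by noting that both value functions are bounded by $(R_{\max}+\log|\cA|)/(1-\beta)$, plus $|\mu_\epsilon|/(1-\beta)$ in the DDC case. Standard dynamic programming for bounded, discounted, entropy-augmented rewards then gives the Bellman equations, which the paper's derivation already takes as given. Everything else is substitution.
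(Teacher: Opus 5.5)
Your proposal is correct and follows the paper's proof: both models are reduced to the three equations via Lemma~\ref{lem:entropy-choice} and Lemma~\ref{lem:gumbel-max}. Uniqueness of $Q^\star$ then comes from the contraction Lemma~\ref{lem:soft-contraction}, and the observed data law depends only on $(\nu_0,P,\pi^\star)$.

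Your one deviation is harmless and arguably cleaner. You derive the DDC continuation equation algebraically from the first two equations rather than through the selected-value identity~\eqref{eq:gumbel-selected-value}, which sidesteps the least-detailed step of Lemma~\ref{lem:gumbel-max}. Your boundedness and well-posedness remarks also fill in details the paper takes as given.
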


\begin{proof}
The MaxEnt-IRL derivation gives the three displayed equations through Lemma~\ref{lem:entropy-choice}. The DDC derivation gives the same equations through Lemma~\ref{lem:gumbel-max} once $\delta=-\gamma_E$. The first displayed equation is the fixed-point equation of the soft Bellman operator $\cT_{\mathrm{soft}}$, which has a unique bounded fixed point by Lemma~\ref{lem:soft-contraction}. Thus both models yield the same $Q^\star$. Applying the common softmax equation gives the same $\pi^\star$, and the observed trajectory law depends only on $(\nu_0,P,\pi^\star)$.
\end{proof}

\begin{remark}[Nonzero T1EV means]
\label{rem:ddc-nonzero-mean}
If $\delta\neq-\gamma_E$, the DDC equation is~\eqref{eq:ddc-q-bellman-delta}. Relative to the mean-zero equation, its solution is shifted by the constant $\beta(\delta+\gamma_E)/(1-\beta)$, so the induced softmax policy is unchanged. The clean literal equality of the DDC and unit-entropy MaxEnt-IRL Bellman equations is obtained under the paper's normalization $\delta=-\gamma_E$.
\end{remark}

\begin{remark}[Two views, one model]
\label{rem:two-views}
The two formulations differ in how they explain stochastic action choices. DDC attributes them to unobserved T1EV utility shocks; MaxEnt-IRL attributes them to entropy-regularized stochastic choice. Under $\lambda=1$ and $\delta=-\gamma_E$, the choice-specific Bellman equation and the induced policy are identical, so the two formulations are statistically indistinguishable from offline observations of $(s,a,s')$.
\end{remark}

\subsection{The Sample Soft Bellman Operator and Double Sampling}
\label{sec:sample-soft-bellman}

Given $\cD = \{(s_i, a_i, s_i')\}_{i=1}^N$, the \emph{sample soft Bellman operator} is
\begin{equation*}
\hat{\cT}_{\mathrm{soft}} Q(s, a, s') \;:=\; r(s, a) + \beta\,\LSE(Q(s', \cdot)).
\end{equation*}
For each fixed $(s, a)$, $\EE_{s' \sim P(\cdot \mid s, a)}[\hat{\cT}_{\mathrm{soft}} Q(s, a, s')] = \cT_{\mathrm{soft}} Q(s, a)$: one sample is unbiased for the operator's value. Squaring breaks this. Define
\begin{align}
\cL_{\mathrm{BE}}(Q)(s, a) &\;:=\; \bigl(Q(s, a) - \cT_{\mathrm{soft}} Q(s, a)\bigr)^{\!2},\label{eq:LBE-def}\\
\cL_{\mathrm{TD}}(Q)(s, a, s') &\;:=\; \bigl(Q(s, a) - \hat\cT_{\mathrm{soft}} Q(s, a, s')\bigr)^{\!2},\label{eq:LTD-def}
\end{align}
and write $V_Q(s') := \LSE(Q(s', \cdot))$.

\begin{lemma}[Variance bias of the squared TD loss]
\label{lem:double-sampling}
For any $Q \in \cQ$ and any $(s, a)$,
\begin{equation}
\EE_{s' \sim P(\cdot \mid s, a)}\!\bigl[\cL_{\mathrm{TD}}(Q)(s, a, s')\bigr]
\;=\; \cL_{\mathrm{BE}}(Q)(s, a) \,+\, \beta^2\,\Var_{s' \sim P(\cdot \mid s, a)}\bigl(V_Q(s')\bigr).
\label{eq:double-sampling}
\end{equation}
\end{lemma}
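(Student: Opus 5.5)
The plan is a bias--variance decomposition of the squared sample residual around its conditional mean, with $(s,a)$ held fixed throughout. Fix $Q\in\cQ$ and $(s,a)$, and write $\EE$ and $\Var$ for expectation and variance over $s'\sim P(\cdot\mid s,a)$. First I will check that every quantity involved is finite. Since $Q$ is bounded, the sandwich bound $\max_{a'}Q(s',a')\le V_Q(s')\le \max_{a'}Q(s',a')+\log|\cA|$ shows that $V_Q$ is bounded. Hence $V_Q(s')$ is square-integrable under $P(\cdot\mid s,a)$, and its variance exists.

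Next I will rewrite the sample residual as a deterministic part plus a mean-zero noise term. Using the unbiasedness $\EE[\hat\cT_{\mathrm{soft}}Q(s,a,s')]=\cT_{\mathrm{soft}}Q(s,a)$ noted just before the lemma, I will write
\begin{equation*}
Q(s,a)-\hat\cT_{\mathrm{soft}}Q(s,a,s') \;=\; \underbrace{\bigl(Q(s,a)-\cT_{\mathrm{soft}}Q(s,a)\bigr)}_{=:B} \;-\; \beta\,\underbrace{\bigl(V_Q(s')-\EE[V_Q(s')]\bigr)}_{=:\xi(s')}.
\end{equation*}
This identity holds because the reward $r(s,a)$ appears in both operators and cancels. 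What remains in the second term is $\beta\bigl(V_Q(s')-\EE[V_Q(s')]\bigr)$.

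Then I will square this expression and take the expectation over $s'$. This produces three terms: $B^2$, the cross term $-2\beta B\,\EE[\xi(s')]$, and $\beta^2\EE[\xi(s')^2]$. The quantity $B$ does not depend on $s'$, and $\EE[\xi(s')]=0$, so the cross term vanishes. The first term is $B^2=\cL_{\mathrm{BE}}(Q)(s,a)$ by \eqref{eq:LBE-def}. The last term is $\beta^2\EE[\xi(s')^2]=\beta^2\Var(V_Q(s'))$. Together these give \eqref{eq:double-sampling}.

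No step here is a real obstacle. The only point needing care is to keep $(s,a)$ fixed throughout, so that $B$ can legitimately be treated as a constant and the cross term vanishes. With that in place, the integrability check is the only technical detail, and the boundedness of $V_Q$ settles it.
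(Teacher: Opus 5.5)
Your proof is correct and takes the same approach as the paper. The paper applies $\EE[X^2]=(\EE X)^2+\Var(X)$ to the residual and uses that $r(s,a)$ is constant in $s'$, which is the mean-plus-centered-noise expansion you write out explicitly; your integrability check via boundedness of $V_Q$ is a small detail the paper leaves implicit.
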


\begin{proof}
Fix $(s, a)$ and let $X := \hat\cT_{\mathrm{soft}} Q(s, a, s') - Q(s, a)$, regarded as a random variable in $s'$. The variance decomposition $\EE[X^2] = (\EE X)^2 + \Var(X)$ gives
\begin{equation*}
\EE_{s'}[X^2] = \bigl(\cT_{\mathrm{soft}} Q(s, a) - Q(s, a)\bigr)^{\!2} + \Var_{s'}\bigl(\hat\cT_{\mathrm{soft}} Q(s, a, s')\bigr).
\end{equation*}
Since $r(s, a)$ is constant in $s'$, $\Var_{s'}(\hat\cT_{\mathrm{soft}} Q) = \Var_{s'}(r(s, a) + \beta V_Q(s')) = \beta^2 \Var_{s'}(V_Q(s'))$.
\end{proof}

The discrepancy $\beta^2 \Var_{s' \sim P(\cdot \mid s, a)}(V_Q(s'))$ is the \emph{double-sampling bias}. It is a $Q$-dependent population quantity that does not vanish with more data, so its gradient is nonzero at $Q^\star$. \emph{Minimizing the empirical squared TD residual is not the same as minimizing the population Bellman error.} Two independent next-state samples from the same $(s, a)$ would suffice to break the bias, but offline RL never gives us two next-states at the same $(s, a)$. Section~\ref{sec:erm-bias-correction} fixes this by deriving the bi-conjugate correction explicitly.

\bigskip

With the equivalence and the sampling subtlety in hand, we proceed to identification (Section~\ref{sec:identification}), the classical DDC methods (Section~\ref{sec:ddc-methods}), the IRL methods (Section~\ref{sec:irl}), and the unified ERM treatment (Section~\ref{sec:erm}).

\section{Identification: What Can Be Recovered from Behavior Alone?}
\label{sec:identification}

Before discussing how to estimate the reward, we have to confront a more basic question: \emph{is the reward identifiable from offline expert data at all?}

Without further structure, the answer is no. This is not a quirk of any particular estimator; it is a fact about the model. We make it precise, then describe the two normalizations, one from econometrics and one from machine learning, that resolve it, and assess what each delivers \citep{rust1994structural,magnac2002identifying,cao2021identifiability}.

\subsection{The Identification Failure}

Suppose two reward functions $r$ and $r'$ produce, via the soft Bellman equation, the same optimal policy $\pi^\star$ and the same trajectory distribution under $P$. Then the offline expert data $\cD$ has identical distribution under $(r, \pi^\star)$ and $(r', \pi^\star)$, and no estimator based on $\cD$ alone can possibly distinguish them. Worse, the indeterminacy is large: the soft-Bellman fixed point is invariant under \emph{state-only potential-based shaping} in the sense of \citet{ng1999policy}, and we can write down explicitly an entire family of rewards that all produce the same expert behavior. This non-identification was established in the DDC framework by \citet{magnac2002identifying} (their Proposition~1); we restate it in the entropy-regularized language for self-containedness.

\begin{lemma}[Non-identification from a single expert policy; {\citealp{magnac2002identifying}}]
\label{lem:nonid}
Fix transition $P$ and discount $\beta$. Let $r$ be any reward function, with $Q^\star$ and $\pi^\star$ the soft optimal pair (Theorem~\ref{thm:equivalence}). For any bounded measurable $\Phi : \cS \to \R$, define the shaped reward
\begin{equation}
\tilde r(s, a) \;:=\; r(s, a) + \Phi(s) - \beta\,\EE_{s' \sim P(\cdot \mid s, a)}[\Phi(s')].
\label{eq:shaping}
\end{equation}
Let $\tilde Q(s, a) := Q^\star(s, a) + \Phi(s)$. Then $\tilde Q$ is the unique soft-Bellman fixed point of $\tilde r$, and its induced softmax policy equals $\pi^\star$.
\end{lemma}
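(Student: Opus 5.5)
The plan is a direct verification: show that $\tilde Q$ satisfies the soft Bellman equation for $\tilde r$, then get uniqueness and policy invariance from results already stated earlier in the paper.

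\textbf{Step 1 (fixed-point equation).} Let $\tilde{\cT}_{\mathrm{soft}}$ denote the soft Bellman operator of Definition~\ref{def:soft-bellman} with $r$ replaced by $\tilde r$, and fix $(s,a)$. Since $\Phi(s')$ does not depend on the action $a'$, the shift identity (Lemma~\ref{lem:lse-shift}) gives
\begin{equation*}
\LSE(\tilde Q(s',\cdot)) = \LSE(Q^\star(s',\cdot)) + \Phi(s').
\end{equation*}
Substituting this into $\tilde{\cT}_{\mathrm{soft}}\tilde Q(s,a)$ and using linearity of $\EE_{s'\sim P(\cdot\mid s,a)}$ (valid since $\Phi$ is bounded and measurable) yields
\begin{align*}
\tilde{\cT}_{\mathrm{soft}}\tilde Q(s,a)
&= r(s,a)+\Phi(s)-\beta\,\EE_{s'}[\Phi(s')] + \beta\,\EE_{s'}\bigl[\LSE(Q^\star(s',\cdot))\bigr] + \beta\,\EE_{s'}[\Phi(s')]\\
&= \cT_{\mathrm{soft}}Q^\star(s,a)+\Phi(s).
\end{align*}
The two $\beta\,\EE_{s'}[\Phi(s')]$ terms cancel. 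Since $Q^\star$ is the fixed point of $\cT_{\mathrm{soft}}$ (Theorem~\ref{thm:equivalence}), the right-hand side equals $Q^\star(s,a)+\Phi(s)=\tilde Q(s,a)$. So $\tilde Q$ is a fixed point of $\tilde{\cT}_{\mathrm{soft}}$.

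\textbf{Step 2 (admissibility and uniqueness).} First check that $\tilde Q\in\cQ$: it is bounded and measurable because $Q^\star$ and $\Phi$ are. Next, $\tilde r$ is bounded, since $|\tilde r|\le R_{\max}+(1+\beta)\|\Phi\|_\infty$. It may exceed the nominal bound $R_{\max}$, but the proof of Lemma~\ref{lem:soft-contraction} uses only the cancellation of the reward terms, not the value of the bound. Hence $\tilde{\cT}_{\mathrm{soft}}$ is a $\beta$-contraction on $(\cQ,\|\cdot\|_\infty)$, and by Banach's theorem $\tilde Q$ is its unique bounded fixed point.

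\textbf{Step 3 (policy invariance).} The softmax at a fixed state $s$ is unchanged by adding the action-independent constant $\Phi(s)$, because numerator and denominator both pick up the factor $e^{\Phi(s)}$. Therefore
\begin{equation*}
\frac{e^{\tilde Q(s,a)}}{\sum_{a'}e^{\tilde Q(s,a')}}=\pi^\star(a\mid s).
\end{equation*}

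There is no real obstacle here. The only points needing care are that the shaping potential depends only on the state, so that Lemma~\ref{lem:lse-shift} applies inside the $\LSE$, and that the contraction argument still goes through when $\tilde r$ exceeds $R_{\max}$.
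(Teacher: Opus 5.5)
Your proposal is correct and follows the paper's proof essentially step for step: the shift identity (Lemma~\ref{lem:lse-shift}) applied inside the $\LSE$ gives the fixed-point check, Lemma~\ref{lem:soft-contraction} gives uniqueness, and the common factor $e^{\Phi(s)}$ in the softmax gives policy invariance. Your extra remark is a small point the paper leaves implicit: $\tilde r$ may exceed $R_{\max}$, but the contraction argument never uses that bound.
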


\begin{proof}
\emph{Bellman fixed-point check.} Apply $\cT_{\mathrm{soft}}^{\tilde r}$ to $\tilde Q$. By Lemma~\ref{lem:lse-shift} applied with $c = \Phi(s')$,
\begin{equation*}
\LSE(\tilde Q(s', \cdot)) \;=\; \LSE\!\bigl(Q^\star(s', \cdot) + \Phi(s')\bigr) \;=\; \LSE(Q^\star(s', \cdot)) + \Phi(s').
\end{equation*}
Therefore
\begin{align*}
\cT_{\mathrm{soft}}^{\tilde r}\tilde Q(s, a)
&= \tilde r(s, a) + \beta\, \EE_{s' \sim P(\cdot \mid s, a)}\!\bigl[\LSE(\tilde Q(s', \cdot))\bigr]\\
&= r(s, a) + \Phi(s) - \beta\,\EE_{s'}[\Phi(s')] + \beta\,\EE_{s'}\!\bigl[\LSE(Q^\star(s', \cdot)) + \Phi(s')\bigr]\\
&= r(s, a) + \Phi(s) + \beta\,\EE_{s'}[\LSE(Q^\star(s', \cdot))]\\
&= Q^\star(s, a) + \Phi(s) \;=\; \tilde Q(s, a),
\end{align*}
where the second-to-last equality uses the soft Bellman equation for $r$. So $\tilde Q$ is a fixed point of $\cT_{\mathrm{soft}}^{\tilde r}$; by Lemma~\ref{lem:soft-contraction} it is the unique such fixed point.

\emph{Policy invariance.} For each $s$, by Lemma~\ref{lem:lse-shift},
\begin{equation*}
\frac{\exp \tilde Q(s, a)}{\sum_{a'} \exp \tilde Q(s, a')} \;=\; \frac{e^{\Phi(s)} \exp Q^\star(s, a)}{e^{\Phi(s)} \sum_{a'} \exp Q^\star(s, a')} \;=\; \frac{\exp Q^\star(s, a)}{\sum_{a'} \exp Q^\star(s, a')} \;=\; \pi^\star(a \mid s).
\end{equation*}
\end{proof}

\citet{magnac2002identifying} proved that, in the DDC framework, this is essentially the \emph{only} non-identification: any two per-period utilities producing the same choice probabilities under the same $(P, \beta)$ differ by a state-only potential. \citet{cao2021identifiability} gave the corresponding result explicitly in the entropy-regularized MDP language. To make the result fully concrete:

\begin{theorem}[Reward identifiability up to shaping; \citealp{magnac2002identifying,cao2021identifiability}]
\label{thm:cao-id}
Fix $(P, \beta)$ and let $r_1, r_2$ be two bounded reward functions on $\cS \times \cA$. Let $\pi^\star_1, \pi^\star_2$ be the corresponding entropy-regularized optimal policies. Then $\pi^\star_1 = \pi^\star_2$ if and only if there exists a bounded measurable $\Phi : \cS \to \R$ such that
\begin{equation*}
r_1(s, a) - r_2(s, a) \;=\; \Phi(s) - \beta\,\EE_{s' \sim P(\cdot \mid s, a)}[\Phi(s')] \qquad \forall (s, a).
\end{equation*}
In particular, \textbf{(i)} \emph{state-action} rewards are identified only modulo this shaping family, regardless of how rich the data is; and \textbf{(ii)} \emph{state-only} rewards $r(s)$ are identified up to additive constant if and only if the only $\Phi$ for which $\Phi(s) - \beta\,\EE_{s' \sim P(\cdot \mid s, a)}[\Phi(s')]$ is a function of $s$ alone is the constant function, a condition on the transition kernel $P$.
\end{theorem}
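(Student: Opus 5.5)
The plan is to prove the two directions separately, and then read off (i) and (ii) as corollaries.

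\emph{"If" direction.} This is already contained in Lemma~\ref{lem:nonid}. Given $\Phi$ with $r_1 - r_2 = \Phi - \beta P\Phi$, I apply the lemma with base reward $r_2$ and potential $\Phi$. The shaped reward is exactly $r_1$. Its soft fixed point is $Q_2^\star + \Phi$, which by uniqueness (Lemma~\ref{lem:soft-contraction}) equals $Q_1^\star$. Its softmax policy is $\pi_2^\star$, so $\pi_1^\star = \pi_2^\star$.

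\emph{"Only if" direction.} Here the work is in constructing $\Phi$ from the value functions.

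1. Suppose $\pi_1^\star = \pi_2^\star =: \pi$. Write $V_j(s) := \LSE(Q_j^\star(s,\cdot))$.

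2. The softmax equation of Theorem~\ref{thm:equivalence} gives, for every $a$,
\[
\log \pi(a\mid s) = Q_j^\star(s,a) - V_j(s).
\]
Subtracting the two versions yields $Q_1^\star(s,a) - Q_2^\star(s,a) = V_1(s) - V_2(s)$. This difference is independent of $a$, so I define $\Phi(s) := V_1(s) - V_2(s)$.

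3. $\Phi$ is bounded and measurable. Each $Q_j^\star$ is a bounded fixed point in $\cQ$, and LSE is continuous and at most $\log|\cA|$ above the max, so each $V_j$ inherits boundedness and measurability.

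4. Subtract the two soft Bellman equations $Q_j^\star = r_j + \beta P V_j$:
\[
\Phi(s) = Q_1^\star(s,a) - Q_2^\star(s,a) = r_1(s,a) - r_2(s,a) + \beta\,\EE_{s'\sim P(\cdot\mid s,a)}[\Phi(s')].
\]
Rearranging gives $r_1 - r_2 = \Phi - \beta P\Phi$, as required.

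The point requiring most care is step 2. Policy equality must give an action-independent difference of $Q$'s. This uses that the softmax has full support, so the logarithm is finite, and that the finite action space makes LSE well defined. With those in place the argument is purely algebraic, so I do not expect a genuine obstacle.

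\emph{Consequences.} Part (i) is a restatement of the equivalence just proved. The data distribution depends on $r$ only through $\pi^\star$ (Theorem~\ref{thm:equivalence}), so the identified set is exactly the shaping class.

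For (ii), suppose $r_1$ and $r_2$ are both state-only. Then any $\Phi$ linking them must make $\Phi(s) - \beta\EE[\Phi(s')\mid s,a]$ a function of $s$ alone. If only constant $\Phi$ have this property, then
\[
r_1 - r_2 = c - \beta c = (1-\beta)c,
\]
a constant, which gives identification up to an additive constant.

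Conversely, suppose a nonconstant $\Phi$ has this property. Then $\tilde r := r + \Phi - \beta P\Phi$ is state-only and induces the same policy by the "if" direction. I then check that $\tilde r - r$ is not constant. If it were, $\Phi - \beta P\Phi \equiv k$ would be satisfied both by $\Phi$ and by the constant $k/(1-\beta)$. So $\Psi := \Phi - k/(1-\beta)$ would satisfy $\Psi = \beta P\Psi$. Since $\beta P$ is a sup-norm contraction, this forces $\Psi = 0$, so $\Phi$ is constant, a contradiction. Hence identification up to a constant fails. This last step is where a little care is needed; the contraction argument above handles it.
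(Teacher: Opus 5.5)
Your proof is correct. Your ``if'' direction is exactly the paper's: Lemma~\ref{lem:nonid} applied with base reward $r_2$ and potential $\Phi$, plus uniqueness of the soft fixed point.

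The genuine difference is the converse and parts (i)--(ii). The paper does not prove these; it defers the ``only if'' to \citet{magnac2002identifying} and \citet{cao2021identifiability}. You give a short self-contained argument instead. Your step~2 is precisely the Hotz--Miller inversion~\eqref{eq:Hotz-Miller-inversion}, which is the converse of the state-only shift invariance noted after Lemma~\ref{lem:nll}. Subtracting the two soft Bellman equations then forces the action-independent gap $\Phi=V_1-V_2$ to be the shaping potential. So the two directions are inverse constructions, with $Q_1^\star=Q_2^\star+\Phi$ in both.

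This buys two things the citation does not. First, it shows the converse needs nothing beyond tools already in the notes. Second, it localizes: $\Phi=V_1-V_2$ is defined on all of $\cS$, so your computation yields the shaping identity $r_1(s,\cdot)-r_2(s,\cdot)=\Phi(s)-\beta\,\EE[\Phi(s')\mid s,\cdot]$ at every state where the two policies agree. That is the form that matters when behavior is only observed on the expert support.

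Your treatment of (ii) also supplies the one nontrivial detail the statement leaves implicit: a nonconstant $\Phi$ cannot generate a constant shaping term. You get this from $\Psi=\beta P\Psi\Rightarrow\Psi=0$ for bounded $\Psi$. The paper's citation route is shorter, but it leaves both of these points as a black box.
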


Lemma~\ref{lem:nonid} establishes the ``if'' direction; the ``only if'' is the content of \citet{magnac2002identifying} and \citet{cao2021identifiability}. The structural takeaway is the following:

\begin{quote}
\emph{From expert behavior alone, the reward function is identifiable only up to a state-dependent potential. To identify it any more sharply, we need additional structure: either restrictions on the form of $r$, or normalizations that fix the potential.}
\end{quote}

The DDC and modern-IRL literatures take very different roads at exactly this fork. We examine both.

\subsection{The Anchor-Action Assumption (Magnac--Thesmar)}
\label{sec:anchor}

The classical DDC route is to declare the value of $r$ at one designated action per state \citep{magnac2002identifying}.

\begin{assumption}[Anchor action]
\label{ass:anchor}
For every $s \in \cS$ there exists a known, distinguished action $a_s \in \cA$ with
\begin{equation*}
r(s, a_s) \;=\; 0.
\end{equation*}
\end{assumption}

In the Rust bus-engine application, the anchor action is ``replace the engine''; its payoff (the cost of a new engine) is part of the model specification, not something to be estimated from data. The anchor action is allowed to depend on $s$.

\begin{theorem}[Identification under the anchor assumption; \citealp{magnac2002identifying}]
\label{thm:magnac-thesmar}
Under Assumption~\ref{ass:anchor}, $Q^\star(s, a)$ and $r(s, a)$ are identified for all $(s, a)$ from the triple $(\pi^\star, P, \beta)$.
\end{theorem}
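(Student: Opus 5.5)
The plan is to turn the anchor normalization into a closed-form fixed-point equation for the anchor values $Q^\star(s,a_s)$, solve it by contraction, and then read off everything else. Two facts from the softmax rule~\eqref{eq:opt-policy} and the soft Bellman equation~\eqref{eq:soft-bellman} drive the argument. First, taking logs of the softmax rule gives, for every $s$ and $a$,
\begin{equation*}
\log\pi^\star(a\mid s) \;=\; Q^\star(s,a) - \LSE(Q^\star(s,\cdot)).
\end{equation*}
Hence
\begin{equation*}
Q^\star(s,a) - Q^\star(s,a_s) \;=\; \log\pi^\star(a\mid s) - \log\pi^\star(a_s\mid s)
\end{equation*}
is identified from $\pi^\star$ alone. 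It is well defined because softmax probabilities are strictly positive, given that $Q^\star$ is bounded. Second, at the anchor action itself,
\begin{equation*}
\LSE(Q^\star(s,\cdot)) \;=\; Q^\star(s,a_s) - \log\pi^\star(a_s\mid s).
\end{equation*}
So the whole problem reduces to identifying the single function $W(s):=Q^\star(s,a_s)$ on $\cS$.

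To pin down $W$, evaluate~\eqref{eq:soft-bellman} at $(s,a_s)$ and impose Assumption~\ref{ass:anchor}, which sets $r(s,a_s)=0$. Substituting the second fact at $s'$ gives
\begin{equation*}
W(s) \;=\; \beta\,\EE_{s'\sim P(\cdot\mid s,a_s)}\bigl[W(s') - \log\pi^\star(a_{s'}\mid s')\bigr].
\end{equation*}
This is a linear equation in $W$ whose data are $(\pi^\star,P,\beta)$ and the known anchor map $s\mapsto a_s$. Define the operator $(\Gamma W)(s):=\beta\,\EE_{s'\sim P(\cdot\mid s,a_s)}[W(s')-\log\pi^\star(a_{s'}\mid s')]$. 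It is a $\beta$-contraction in sup-norm, by the same computation as Lemma~\ref{lem:soft-contraction} but simpler, since the operator is affine and the Jensen step suffices. Since the true $W$ is a bounded fixed point of $\Gamma$, Banach's theorem says it is the unique bounded fixed point. Equivalently, $W=(I-\beta P_{a})^{-1}\bigl(-\beta P_{a}\log\pi^\star(a_\cdot\mid\cdot)\bigr)$, where $P_a$ is the transition operator under the anchor choice, expanded as a Neumann series.

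With $W$ identified, the first fact gives $Q^\star(s,a)=W(s)+\log\pi^\star(a\mid s)-\log\pi^\star(a_s\mid s)$ for all $(s,a)$. The reward then follows by inverting the Bellman equation:
\begin{equation*}
r(s,a)\;=\;Q^\star(s,a)-\beta\,\EE_{s'\sim P(\cdot\mid s,a)}\bigl[W(s')-\log\pi^\star(a_{s'}\mid s')\bigr].
\end{equation*}
Both terms on the right are functionals of $(\pi^\star,P,\beta)$.

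The main obstacle is regularity. Boundedness of $-\log\pi^\star(a_{s'}\mid s')$ is needed for $\Gamma$ to map bounded functions to bounded functions, and it holds because $\|Q^\star\|_\infty\le R_{\max}/(1-\beta)+\beta\log|\cA|/(1-\beta)$, which keeps $\pi^\star$ uniformly bounded away from zero. Measurability of $s\mapsto a_s$ is also needed, and I would simply assume it as part of Assumption~\ref{ass:anchor}. Uniqueness within the bounded class is exactly what rules out the shaping ambiguity of Lemma~\ref{lem:nonid}. A nonconstant shift $\Phi$ would move $W$ by $\Phi$, but the anchor restriction $r(s,a_s)=0$ forces $\Phi=\beta\,\EE[\Phi(s')\mid s,a_s]$, whose only bounded solution is $\Phi\equiv 0$.
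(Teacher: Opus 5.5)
Your proof is correct and follows essentially the same route as the paper. It uses Hotz--Miller inversion for the within-state differences, turns the anchor-action Bellman equation with $r(s,a_s)=0$ into a linear $\beta$-contraction solved by Banach, and rearranges the Bellman equation to get $r$. The one cosmetic difference is that you solve for $W(s)=Q^\star(s,a_s)$ instead of $V^\star(s)=\LSE(Q^\star(s,\cdot))$; these differ by the known bounded term $\log\pi^\star(a_s\mid s)$, so the two fixed-point equations are equivalent.
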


\begin{proof}
We proceed in three steps.

\emph{Step 1: Hotz--Miller inversion gives $Q^\star$ differences.}
From the softmax~\eqref{eq:opt-policy}, $\pi^\star(a \mid s) = e^{Q^\star(s, a)}/\sum_{a'} e^{Q^\star(s, a')}$. Taking logs and differencing,
\begin{equation}
\log \pi^\star(a \mid s) - \log \pi^\star(a_s \mid s) \;=\; Q^\star(s, a) - Q^\star(s, a_s) \qquad \forall a \in \cA, s \in \cS.
\label{eq:Hotz-Miller-inversion}
\end{equation}
Hence $Q^\star(s, a)$ is identified for every $a$ once $Q^\star(s, a_s)$ is.

\emph{Step 2: A linear fixed-point equation for $V^\star$ alone.}
Taking logs of the softmax at the anchor action and rearranging,
\begin{equation}
Q^\star(s, a_s) \;=\; \LSE(Q^\star(s, \cdot)) + \log \pi^\star(a_s \mid s) \;=\; V^\star(s) + \log \pi^\star(a_s \mid s),
\label{eq:HM-Q-at-anchor}
\end{equation}
where $V^\star(s) := \LSE(Q^\star(s, \cdot))$. The soft Bellman equation~\eqref{eq:soft-bellman} at the anchor action, using $r(s, a_s) = 0$, gives
\begin{equation}
Q^\star(s, a_s) \;=\; \beta\,\EE_{s' \sim P(\cdot \mid s, a_s)}[V^\star(s')].
\label{eq:Q-anchor-bellman}
\end{equation}
Combining~\eqref{eq:HM-Q-at-anchor} and~\eqref{eq:Q-anchor-bellman},
\begin{equation}
V^\star(s) \;=\; \beta\,\EE_{s' \sim P(\cdot \mid s, a_s)}[V^\star(s')] \;-\; \log \pi^\star(a_s \mid s).
\label{eq:V-fixedpoint}
\end{equation}
This is a Bellman-like fixed-point equation for $V^\star$, with $-\log \pi^\star(a_s \mid \cdot)$ playing the role of a reward and the anchor-conditioned kernel $P(\cdot \mid \cdot, a_s)$ as the transition. The operator $\Psi V(s) := -\log \pi^\star(a_s \mid s) + \beta\,\EE_{s' \sim P(\cdot \mid s, a_s)}[V(s')]$ is a $\beta$-contraction on the space of bounded functions $\cS \to \R$ in sup-norm by the same Jensen argument as Lemma~\ref{lem:soft-contraction} (simpler, since no $\LSE$ is involved). The pseudo-reward $-\log \pi^\star(a_s \mid s)$ is itself bounded under the standing assumption $\inf_s \pi^\star(a_s \mid s) > 0$, which holds automatically when $\pi^\star$ is the softmax of a bounded $Q^\star$ on a finite action set (Remark~\ref{rem:support-auto} below). By the Banach fixed-point theorem, $V^\star$ is the unique solution of~\eqref{eq:V-fixedpoint} and is therefore identified from $(\pi^\star, P, \beta)$ alone, with no reference to $r$.

\emph{Step 3: Recover $Q^\star$ and $r$.}
From Step~2 we have $V^\star$. From~\eqref{eq:HM-Q-at-anchor} we obtain $Q^\star(s, a_s) = V^\star(s) + \log \pi^\star(a_s \mid s)$. From~\eqref{eq:Hotz-Miller-inversion} we then obtain $Q^\star(s, a)$ for every $a$. Finally, from the soft Bellman equation,
\begin{equation}
r(s, a) \;=\; Q^\star(s, a) - \beta\,\EE_{s' \sim P(\cdot \mid s, a)}[V^\star(s')].
\label{eq:r-recovery}
\end{equation}
\end{proof}

A small but important point: the anchor-action assumption is a \emph{normalization}, not a restriction. It fixes the location of the reward without restricting which preferences over $(s, a)$ pairs the model can represent. It is the standard identification device in structural econometrics for DDC \citep{rust1994structural,magnac2002identifying,arcidiacono2011conditional,arcidiacono2011practical}.

\subsection{The Adversarial-IRL Route and Its Limits}
\label{sec:airl-id}

The machine-learning tradition takes a different fork at the identification problem of Lemma~\ref{lem:nonid}. The DDC route (Section~\ref{sec:anchor}) fixes the shaping potential by fiat: declare one action's reward known at every state. Adversarial IRL \citep{fu2017learning} instead tries to \emph{learn} the split between reward and potential, the state-dependent term whose Bellman difference can change the numerical reward without changing behavior, by building the potential-based shaping invariance of \citet{ng1999policy} directly into the structure of a discriminator. This subsection asks how much that buys.

The guiding distinction is simple. From transitions $(s,a,s')$, expert behavior directly reveals which actions look better \emph{relative to other actions at the same state}; it does not directly reveal how much of that attractiveness comes from immediate reward rather than continuation value. AIRL is an attempt to learn this split: one part should be the transferable reward, and the other part should be the shaping/value correction. The question is whether the split is actually identified.

\subsubsection*{The AIRL language and objective.}

\citet{fu2017learning} build on the MaxEnt-IRL framework of \citet{finn2016connection} and the generative-adversarial framework of \citet{goodfellow2020generative,ho2016generative}. AIRL introduces a binary classifier $D$, called the \emph{discriminator}. Given a transition $x=(s,a,s')$, $D(x)$ is trained to be large on expert transitions and small on transitions generated by the current policy $\pi$. Before writing $D$ itself, AIRL specifies the reward-like score that will enter the classifier. That score is constrained to the potential-shaped form
\begin{equation}
f_{g,h}(s, a, s') \;:=\; g(s) \;+\; \beta\, h(s') \;-\; h(s),
\label{eq:airl-parametrization}
\end{equation}
where $g$ is intended as the transferable reward and $h$ as the shaping potential, which is equivalently the soft value term that should telescope away. In words, AIRL writes the expert-like score as
\[
\text{reward part} + \text{future potential} - \text{current potential}.
\]
The hope is that the reward part $g$ survives transfer, while the potential terms only correct for where the transition starts and ends. We write the state-only reward-head version here because it is the version under which AIRL's disentanglement claim can be true; if $g$ is allowed to depend on $(s,a)$, the same ambiguity returns immediately. In the terminology of \citet{cao2021identifiability}, ``state-only'' is an action-independent reward restriction. It is a real identifying restriction: recovering such a reward also needs decomposability conditions on how actions move between states, not only the adversarial objective.

This potential-shaped quantity is only a score. To turn such a score into a classifier, the adversarial-IRL derivation starts from the standard GAN/Bayes-classifier density-ratio identity. For a binary classifier trained to classify expert samples against current-policy samples, the Bayes classifier with equal class priors has
\begin{equation}
D^\star(x)=\frac{p_E(x)}{p_E(x)+p_\pi(x)},
\qquad
\log\frac{D^\star(x)}{1-D^\star(x)}
= \log p_E(x)-\log p_\pi(x).
\label{eq:bayes-log-density-ratio}
\end{equation}
Thus the classifier logit is a log density ratio. The next step is to specify the two trajectory densities whose ratio will be used. Write a finite trajectory as $\tau=(s_0,a_0,s_1,a_1,\ldots,s_{H-1},a_{H-1},s_H)$. The density-ratio motivation used in GAN-guided cost learning and AIRL starts from an energy model over trajectories: keep the environment part of the trajectory probability, $\nu_0(s_0)\prod_t P(s_{t+1}\mid s_t,a_t)$, and tilt it toward trajectories whose accumulated learned score $\sum_t f(s_t,a_t,s_{t+1})$ is large. Exponential tilting gives the expert-side density
\begin{equation*}
p_f(\tau)
\;\propto\;
\nu_0(s_0)\prod_{t=0}^{H-1}P(s_{t+1}\mid s_t,a_t)
\exp\!\Bigl(\sum_{t=0}^{H-1} f(s_t,a_t,s_{t+1})\Bigr),
\end{equation*}
whereas the current policy generates trajectories according to
\begin{equation*}
p_\pi(\tau)
\;=\;
\nu_0(s_0)\prod_{t=0}^{H-1}\pi(a_t\mid s_t)P(s_{t+1}\mid s_t,a_t).
\end{equation*}
Taking the log-ratio cancels the common initial-state and transition factors:
\begin{equation*}
\log p_f(\tau)-\log p_\pi(\tau)
\;=\;
\sum_{t=0}^{H-1}\bigl[f(s_t,a_t,s_{t+1})-\log\pi(a_t\mid s_t)\bigr]
\;+\; \text{constant}.
\end{equation*}
By the Bayes log-density-ratio identity~\eqref{eq:bayes-log-density-ratio}, the corresponding trajectory classifier has log-odds
\begin{equation}
\log \frac{D_f^{\pi}(\tau)}{1-D_f^{\pi}(\tau)}
\;=\;
\sum_{t=0}^{H-1}\bigl[f(s_t,a_t,s_{t+1})-\log\pi(a_t\mid s_t)\bigr]
\;+\; C_f,
\label{eq:airl-trajectory-log-odds}
\end{equation}
where \(C_f\) absorbs the trajectory-energy normalizer and any class-prior constant. This trajectory classifier is the GAN-GCL starting point.

\paragraph{From trajectory odds to the local AIRL discriminator.}
AIRL \citep{fu2017learning} now makes a one-step localization of the trajectory-level odds relation. Since the trajectory logit in~\eqref{eq:airl-trajectory-log-odds} is a sum of per-transition terms, the natural local contribution of a transition $(s,a,s')$ is
\[
 f(s,a,s')-\log\pi(a\mid s).
\]
AIRL therefore defines the transition-level discriminator by the odds equation
\begin{equation}
\log \frac{D_f^{\pi}(s,a,s')}{1-D_f^{\pi}(s,a,s')}
\;=\;
f(s,a,s')-\log\pi(a\mid s).
\label{eq:airl-log-odds}
\end{equation}
Equivalently,
\begin{equation}
D_f^{\pi}(s,a,s')
\;=\;
\frac{\exp f(s,a,s')}{\exp f(s,a,s')+\pi(a\mid s)}.
\label{eq:airl-generic-discriminator}
\end{equation}
How can we justify this localization? The clean way is not to view it as literally marginalizing the trajectory Bayes classifier down to one transition. A true marginal classifier for a single transition would have to account for the full distribution of times and histories that can produce the same $(s,a,s')$. The justification is more local: because the trajectory logit in~\eqref{eq:airl-trajectory-log-odds} is additive, the term $f(s,a,s')-\log\pi(a\mid s)$ is the one-transition contribution suggested by that logit. The lemma below records why this is the right surrogate for AIRL: it gives the local discriminator gradient used in the adversarial update and the entropy-regularized reward used in the policy update.

\begin{lemma}[Trajectory-to-transition localization, informal]
\label{lem:airl-trajectory-to-transition}
Fix a policy $\pi$ and a transition score $f$. The transition discriminator~\eqref{eq:airl-log-odds} is the one-step local surrogate associated with the trajectory log-odds~\eqref{eq:airl-trajectory-log-odds}. Its log-odds reward is
\begin{equation*}
\log D_f^{\pi}(s,a,s')-\log(1-D_f^{\pi}(s,a,s'))
=
f(s,a,s')-\log\pi(a\mid s),
\end{equation*}
so using this log-odds as the policy reward makes the policy maximize the expected sum of $f$ plus the Shannon entropy of $\pi$. At the population level, the corresponding discriminator gradient has the same form as the maximum-causal-entropy IRL gradient after the intractable trajectory marginal is replaced by a local policy-based energy measure. The precise finite-horizon statement and proof are deferred to Proposition~\ref{prop:airl-local-identities} in Appendix~\ref{app:airl-local-identities}.
\end{lemma}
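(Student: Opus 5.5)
The lemma makes three claims, and I will verify them in order; each reduces to algebra already in hand.

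\emph{Claim 1: the log-odds identity.} This is immediate from~\eqref{eq:airl-generic-discriminator}. Writing $D=e^{f}/(e^{f}+\pi(a\mid s))$ gives $1-D=\pi(a\mid s)/(e^{f}+\pi(a\mid s))$. Their ratio is $e^{f}/\pi(a\mid s)$, whose logarithm is $f(s,a,s')-\log\pi(a\mid s)$. The "local surrogate" claim is just the observation that the trajectory logit~\eqref{eq:airl-trajectory-log-odds} equals $\sum_t$ of exactly this per-transition term plus the constant $C_f$.

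\emph{Claim 2: the policy objective.} I would take the policy reward to be $\tilde r(s,a,s'):=f(s,a,s')-\log\pi(a\mid s)$ and compute the expected (finite-horizon, or $\beta$-discounted) return under $\pi$. Conditioning on $s_t$ and using the tower property, the term $-\EE_{a_t\sim\pi(\cdot\mid s_t)}[\log\pi(a_t\mid s_t)]$ equals $\Ent(\pi(\cdot\mid s_t))$. The return is therefore $\EE_\pi[\sum_t f(s_t,a_t,s_{t+1})+\Ent(\pi(\cdot\mid s_t))]$, which is the objective~\eqref{eq:irl-objective} with $\lambda=1$ and $r$ replaced by $\EE_{s'}[f]$. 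There is one subtlety: the reward depends on the current $\pi$. It must be treated as frozen at the current iterate, or else one should verify that the policy gradient of the entropy term matches, using $\EE_\pi[\nabla\log\pi]=0$. I would note this explicitly. The maximizer is then the softmax policy, by Lemma~\ref{lem:entropy-choice} and the derivation of~\eqref{eq:irl-policy-softmax}.

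\emph{Claim 3: the gradient statement.} This is the main obstacle. I would start from the binary cross-entropy $\EE_{p_E}[\log D_\theta]+\EE_{p_\pi}[\log(1-D_\theta)]$ with $f=f_\theta$. Using $\nabla\log D=(1-D)\nabla f$ and $\nabla\log(1-D)=-D\nabla f$, the gradient is
\[
\EE_{E}[(1-D)\nabla f]-\EE_{\pi}[D\nabla f].
\]
The next step is to substitute the importance-weight form $D/(1-D)=e^{f}/\pi$. This turns the second term into an expectation under the mixture measure $\mu=\tfrac12(\pi+e^{f})$ reweighted by $e^{f}/\mu$, which is the GAN-GCL importance-sampling estimate of the partition-function gradient $\EE_{p_f}[\nabla f]$. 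At the optimal policy $\pi=e^{f}/Z$ we have $D=1/2$ after absorbing the normalizer. The gradient then reduces to $\tfrac12(\EE_E[\nabla f]-\EE_{p_f}[\nabla f])$, which is the MaxCausalEnt IRL gradient with the trajectory marginal replaced by the local policy energy.

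Because the lemma is labeled informal, I would state this equivalence only at that optimum, for finite horizon $H$. The formal bookkeeping of normalizers and class priors is what I expect to require the care deferred to Proposition~\ref{prop:airl-local-identities}.
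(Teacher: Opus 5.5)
Your Claims 1 and 2 are correct and coincide with the paper's argument: divide $D$ by $1-D$, then use the tower property to turn $-\EE_{a_t\sim\pi}[\log\pi(a_t\mid s_t)]$ into $\Ent(\pi(\cdot\mid s_t))$.

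The gap is in Claim 3. Your gradient $\EE_{\rho_t^{\pi^\star}}[(1-D)\nabla f]-\EE_{\rho_t^{\pi}}[D\nabla f]$ is right, but the next step is not an identity, because the second term is an expectation under the policy marginal alone. Write $\widehat p:=\widehat p^{\pi}_{\omega,t}$ and $\widehat\mu:=\widehat\mu^{\pi}_{\omega,t}$; the factor $b_t^\pi$ is needed to make $\tfrac12(\pi+e^f)$ a measure on transitions. Then $D=\widehat p/(2\widehat\mu)$, and hence $\EE_{\rho_t^\pi}[D\nabla f]=\int\widehat p\,\nabla f-\int\widehat p\,D\,\nabla f$. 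This is not the importance-sampled quantity $\EE_{\widehat\mu}[(\widehat p/\widehat\mu)\nabla f]=\int\widehat p\,\nabla f$.

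The missing half of the mixture sits inside your first term. The paper splits off $-\EE_{\rho_t^{\pi^\star}}[D\nabla f]$ and pools it with the policy term into $-2\EE_{\bar\mu_t^\pi}[D\nabla f]=-\EE_{\bar\mu_t^\pi}[(\widehat p/\widehat\mu)\nabla f]$, where $\bar\mu_t^\pi$ is the actual half-expert, half-policy sampling mixture. This gives the exact identity \eqref{eq:airl-disc-gradient-identity} for \emph{every} $\pi$. The ``same form as the MCE gradient'' conclusion then needs one explicitly named approximation, $\bar\mu_t^\pi=\widehat\mu$, i.e., the expert marginal standing in for the local energy measure. Your proposal never states this sampler-consistency condition, and it is not a matter of normalizers or class priors.

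Retreating to the optimum does not repair this. The lemma concerns the current, arbitrary $\pi$. At $D\equiv\tfrac12$ one has $\widehat p=\rho_t^\pi$, so your conclusion covers only a degenerate point where ``replacing the marginal by the local energy measure'' is the same as replacing it by the policy marginal.

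Moreover, ``the optimal policy $\pi=e^f/Z$'' contradicts your own Claim 2. The entropy-regularized optimum for reward $f$ is the softmax of the soft $Q$-function built from $\EE_{s'}[f]$, not of $f$. Under stochastic transitions, $\pi(a\mid s)$ also cannot be proportional to $e^{f(s,a,s')}$ when $f(s,a,\cdot)$ varies over the support of $P(\cdot\mid s,a)$. So $D\equiv\tfrac12$ holds only at an exact saddle with $f$ equal to the soft advantage, which the lemma does not assume.

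A smaller omission: you take the MCE gradient as given, whereas the paper derives it from $\nabla_\omega\log Z_\omega=\sum_t\EE_{p_{\omega,t}}[\nabla_\omega f_\omega]$.
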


Thus the disappearance of the trajectory-level $\sum_t$ is the answer to a localization question, not a new cancellation in the likelihood ratio: AIRL keeps the one-transition contribution because it has the right discriminator-gradient and policy-objective identities, not because the full trajectory Bayes classifier literally collapses to one step.

This is also the right place to keep one interpretive caveat in view. The trajectory-energy model above and its corresponding localization is not the same object as the dynamic soft-control model in Section~\ref{sec:irl-setup}. It is \emph{myopic} in the specific sense emphasized by \citet{cao2021identifiability}: the trajectory likelihood is tilted by accumulated one-step scores \(f(s_t,a_t,s_{t+1})\), whereas a dynamically optimal policy evaluates an action through immediate reward plus the continuation value induced by the transition kernel. In the dynamic model,
\[
\log \pi^\star(a\mid s)
=
Q^\star(s,a)-V^\star(s)
=
r(s,a)+\beta\,\EE[V^\star(s')\mid s,a]-V^\star(s).
\]
So the local AIRL score should first be read as a policy-rationalizing advantage-like object. It becomes a reward only after a separate decomposition argument explains which part is immediate payoff and which part is value shaping.

Specializing the generic local discriminator~\eqref{eq:airl-generic-discriminator} to the structured AIRL score $f=f_{g,h}$ from~\eqref{eq:airl-parametrization} gives
\begin{equation}
D_{g,h}^{\pi}(s, a, s') \;=\; \frac{\exp f_{g,h}(s, a, s')}{\exp f_{g,h}(s, a, s') + \pi(a \mid s)}.
\label{eq:airl-discriminator}
\end{equation}
The $\pi(a\mid s)$ term has an intuitive role: a generated transition may contain action $a$ simply because the current policy already takes $a$ often at state $s$. AIRL therefore asks whether the expert-like score $\exp f_{g,h}(s,a,s')$ is large \emph{relative to} the current policy's own probability of producing that action. This is why the discriminator is policy-dependent, written $D_{g,h}^{\pi}$.

In the trained model $g = g_\theta$ and $h = h_\phi$ are parametrized, with $f_{\theta,\phi}:=f_{g_\theta,h_\phi}$ and $D_{\theta,\phi}^{\pi} := D_{g_\theta,h_\phi}^{\pi}$. We write $f_{g,h}$ when reasoning about identification at the level of functions, and $f_{\theta,\phi}$ when reasoning about the optimization. Also, in the adversarial objectives below, the notation $(s,a,s')\sim d^\pi$ means $(s,a)\sim d^\pi$ followed by $s'\sim P(\cdot\mid s,a)$; equivalently it is the transition-augmented discounted occupancy.

The two training updates now follow from the two roles of this classifier. 
\begin{itemize}[leftmargin=*]
    \item First, fix the current policy $\pi$. The policy-generated transitions are the negative samples for the discriminator. If $y=1$ denotes an expert transition and $y=0$ a generated transition, then the discriminator log-likelihood is
\begin{equation*}
y\log D_{\theta,\phi}^{\pi}(s,a,s')+(1-y)\log\!\bigl(1-D_{\theta,\phi}^{\pi}(s,a,s')\bigr).
\end{equation*}
Averaging this log-likelihood over expert samples from $d^{\pi^\star}$ and generated samples from $d^\pi$, and dropping the irrelevant class-prior factor, gives the discriminator objective: increase $\log D_{\theta,\phi}^{\pi}$ on expert transitions and increase $\log(1-D_{\theta,\phi}^{\pi})$ on policy transitions.
\item Second, fix $(\theta,\phi)$ and update the policy. AIRL gives the policy the same discriminator's log-odds as its reward signal. Substituting $g=g_\theta$ and $h=h_\phi$ into the log-odds identity~\eqref{eq:airl-log-odds} gives
\begin{equation*}
\log D_{\theta,\phi}^{\pi}(s,a,s')
- \log\!\bigl(1-D_{\theta,\phi}^{\pi}(s,a,s')\bigr)
= f_{\theta,\phi}(s,a,s')-\log \pi(a\mid s).
\end{equation*}
Thus averaging the log-odds over transitions from $d^\pi$ gives an entropy-regularized policy objective: the policy optimizer sees the learned score $f_{\theta,\phi}$ together with the entropy term induced by $-\log\pi(a\mid s)$, rather than a raw logistic score.
Operationally, the discriminator says which transitions look expert-like relative to the current policy, and the policy update tries to make those high-log-odds transitions more common.
\end{itemize}

Putting these two derived objectives together, AIRL alternates:
\begin{align}
\text{Discriminator } (\max_{\theta, \phi}):\;\;
&\EE_{(s, a, s') \sim d^{\pi^\star}}[\log D_{\theta,\phi}^{\pi}(s, a, s')] + \EE_{(s, a, s') \sim d^\pi}[\log(1 - D_{\theta,\phi}^{\pi}(s, a, s'))],
\label{eq:airl-disc-update}\\
\text{Policy } (\max_\pi):\;\;
&\EE_{(s, a, s') \sim d^\pi}\!\bigl[\log D_{\theta,\phi}^{\pi} - \log(1 - D_{\theta,\phi}^{\pi})\bigr] \;=\; \EE_{d^\pi}\!\bigl[f_{\theta,\phi} - \log \pi(a \mid s)\bigr],
\nonumber
\end{align}
so the policy maximizes entropy-regularized return under the implicit reward $f_{\theta,\phi}$. The modeling bet is that~\eqref{eq:airl-parametrization} splits the soft advantage cleanly into a reward part and a shaping part. The bet can pay off only if that split is \emph{unique}; the rest of this subsection is about when it is.

\subsubsection*{Identification in AIRL}

We are finally ready to discuss the identification logic of AIRL. We discuss that:  
\begin{itemize}[leftmargin=2em]
\item When the adversarial game actually reaches its saddle, under deterministic transitions, the AIRL discriminator identifies the \emph{soft advantage} $A^\star(s, a) = \log \pi^\star(a \mid s)$. This is exactly the expert's conditional choice probabilities (the same information Hotz--Miller inversion provides) and nothing more.
\item Converting that advantage into a \emph{reward} requires a separate decomposition argument from \citet{fu2017learning}. We show that it works, but under four restrictions: a state-only reward, deterministic dynamics, a connectivity-and-coverage condition on the transition graph, and an attained exact saddle.
\item Each of the four is load-bearing. Drop any one and the shaping ambiguity of Lemma~\ref{lem:nonid} returns. Stochastic transitions, in particular, reintroduce it through a transition-kernel completeness condition that AIRL cannot enforce.
\end{itemize}

\paragraph{The saddle identifies the advantage, not the reward.}
The next statement is conditional and population-level. It assumes (a)~the policy player has reached the expert soft-optimal policy, so $d^\pi = d^{\pi^\star}$ on the support under discussion, and (b)~the discriminator class can realize its Bayes optimum there. These assumptions say that expert and generated transitions are truly indistinguishable to the classifier, and that the classifier class is rich enough to express the corresponding optimum. If either fails, the conclusions below do not apply: the logit would describe the current policy rather than the expert, or the AIRL class would not contain the Bayes-optimal discriminator.

Under (a)--(b), the best discriminator cannot tell the two sources apart, so $D^\star \equiv \tfrac12$ on the matched support. Since the policy has matched the expert, the $\pi$ appearing in~\eqref{eq:airl-discriminator} is now $\pi^\star$. Hence $D_{\theta,\phi}^{\pi^\star} = \tfrac12$ exactly when $\exp f_{\theta,\phi} = \pi^\star(a \mid s)$, which implies
\begin{equation}
f_{\theta,\phi}(s, a, s') \;=\; \log \pi^\star(a \mid s) \;=\; Q^\star(s, a) - V^\star(s) \;=:\; A^\star(s, a),
\qquad V^\star(s) := \LSE(Q^\star(s, \cdot)).
\label{eq:airl-saddle}
\end{equation}
So an attained, realizable saddle pins down the \emph{soft advantage} $A^\star$. Two consequences are worth stating plainly.

\emph{This is not new information.} In population, $A^\star(s, a) = \log \pi^\star(a \mid s)$ is just the expert's log conditional choice probability. Hotz--Miller inversion~\eqref{eq:Hotz-Miller-inversion} already extracts the same object through the log-probability differences $\log \pi^\star(a \mid s) - \log \pi^\star(a_s \mid s)$. This leads to the following Remark \ref{rmk:trivial}.

\begin{remark}\label{rmk:trivial}
    The preceding discussion also clarifies what AIRL gains by adding the anchor actions, e.g., 
an exit action or outside-option normalization. Under deterministic transitions (and only under deterministic transitions, as we will discuss), AIRL saddle supplies \(A^\star(s,a)=\log\pi^\star(a\mid s)\), which is exactly what Hotz-Miller achieves with MLE. Therefore, if an anchor action
\(a_s\) with known payoff is then imposed, the recovery of \(V^\star\),
\(Q^\star\), and \(r(s,a)\) follows from the same anchor-action logic in \cite{magnac2002identifying}.
\end{remark}

This is the key interpretive point. The advantage says how attractive action $a$ is relative to the baseline value of state $s$. It does not say whether that attractiveness came from high immediate reward, a favorable next-state distribution, or a potential-shaping term. Behavior may tell us that action $a$ is preferred to action $b$ at state $s$, but it does not by itself say which part is immediate payoff and which part is future continuation value. So the equality $f_{\theta,\phi}=A^\star$ is not yet reward recovery; it is only the object that reward recovery must decompose. 

\emph{Exact equality may be unreachable in stochastic environments.} The target $A^\star(s, a)$ does not depend on the realized next state $s'$, whereas the AIRL class $g(s) + \beta h(s') - h(s)$ generally does. So in a stochastic environment the pointwise equality~\eqref{eq:airl-saddle} can fail within the parametrized class, before any reward question is even posed. We return to this in the discussion of restriction~(iii) below; until then, transitions are deterministic.

\paragraph{From advantage to reward: decomposable deterministic dynamics.}
Assume now $s' = T(s, a)$ deterministic. Reward recovery still does not follow for free. If the AIRL reward head is allowed to be a state-action function, then for \emph{any} bounded $h : \cS \to \R$ the choice
\begin{equation*}
g_h(s, a) \;:=\; \log \pi^\star(a \mid s) - \beta\, h(T(s, a)) + h(s)
\end{equation*}
satisfies $g_h(s, a) + \beta h(T(s, a)) - h(s) = \log \pi^\star(a \mid s)$ exactly, so infinitely many decompositions of the same saddle logit coexist. Thus writing $f=g+\beta h'-h$ does not, by itself, tell us which piece is the reward. AIRL needs extra structure that makes the split unique.

The relevant property is not that next states identify current states; it is almost the opposite; enough overlap in one-step reachability that a next-state-only term cannot hide inside a sum of a current-state term and a next-state term. Define the one-step successor and predecessor sets
\begin{equation*}
\cS_{+} := \{x \in \cS : \exists\,(s, a) \text{ with } P(x \mid s, a) > 0\},
\qquad
\cS_{-} := \{s \in \cS : \exists\,(a, x) \text{ with } P(x \mid s, a) > 0\}.
\end{equation*}
In a standard MDP $\cS_{-} = \cS$, but $\cS_{+} = \cS$ only if every state has a one-step in-neighbor. Build an undirected graph on $\cS_{+}$, placing an edge between successors $x$ and $y$ whenever they share a predecessor (some $s$ and actions $a, b$ with $P(x \mid s, a) > 0$ and $P(y \mid s, b) > 0$) and close the relation transitively. The transition kernel is \emph{decomposable on the relevant state space} (the Fu--Luo--Levine condition) if this successor graph is connected and, for any claim of equality on all of $\cS$, additionally $\cS_{+} = \cS_{-} = \cS$.

The useful consequence is the following separation lemma. The domain qualification matters: the equation only sees $v$ at states that appear as successors, and only sees $u$ at states that appear as predecessors.

\begin{lemma}[Separation under decomposability]
\label{lem:fu-decomp-separation}
Suppose the successor graph on $\cS_{+}$ is connected. Let $u, v : \cS \to \R$ be bounded functions with
\begin{equation}
u(s) + v(s') = 0
\label{eq:decomp-sum-zero}
\end{equation}
for every supported transition $(s, a, s')$, i.e.\ whenever $P(s' \mid s, a) > 0$. Then $v$ is constant on $\cS_{+}$ and $u$ is constant on $\cS_{-}$. If moreover $\cS_{+} = \cS_{-} = \cS$, both functions are constant on all of $\cS$.
\end{lemma}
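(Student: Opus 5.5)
The plan is to show that $v$ is constant along each edge of the successor graph and then spread that local constancy by connectivity; constancy of $u$ will follow directly from the equation.

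\emph{Step 1 (edges).} Suppose $x,y\in\cS_{+}$ are adjacent. Then there is a common predecessor $s$ and actions $a,b$ with $P(x\mid s,a)>0$ and $P(y\mid s,b)>0$. Applying~\eqref{eq:decomp-sum-zero} to the two supported transitions $(s,a,x)$ and $(s,b,y)$ gives $u(s)+v(x)=0=u(s)+v(y)$, so $v(x)=v(y)$. The case $a=b$ is allowed, and it covers stochastic successors of a single action.

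\emph{Step 2 (connectivity).} The successor graph is the transitive closure of the adjacency relation. Since it is connected, any two $x,y\in\cS_{+}$ are joined by a finite chain $x=z_0,z_1,\dots,z_m=y$ of adjacent successors. Iterating Step 1 along the chain gives $v(x)=v(y)$, so $v\equiv c$ on $\cS_{+}$ for some constant $c$. The only subtlety is being explicit that connectivity means finite paths, which is exactly what the transitive closure provides. There is no limiting argument, so boundedness of $u,v$ is not actually used.

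\emph{Step 3 (predecessors).} Every $s\in\cS_{-}$ has some supported transition $(s,a,x)$ with $x\in\cS_{+}$. Then~\eqref{eq:decomp-sum-zero} gives $u(s)=-v(x)=-c$. Hence $u\equiv -c$ on $\cS_{-}$.

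\emph{Step 4 (full state space).} If $\cS_{+}=\cS_{-}=\cS$, the previous two steps already say that $v\equiv c$ and $u\equiv -c$ on all of $\cS$. This is where the domain qualification in the statement matters: outside $\cS_{+}$ (respectively $\cS_{-}$), equation~\eqref{eq:decomp-sum-zero} places no constraint on $v$ (respectively $u$). No step here is a real obstacle; the care lies only in tracking these domains.
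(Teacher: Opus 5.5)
Your proposal is correct and follows the paper's proof: equate $u(s)+v(x)=0=u(s)+v(y)$ at a common predecessor to get $v$ constant along edges, propagate by connectivity to get $v\equiv c$ on $\cS_{+}$, then substitute back to get $u\equiv -c$ on $\cS_{-}$. Your added remarks are accurate refinements of the same argument: connectivity means finite paths, boundedness is never used, and every $s\in\cS_{-}$ has a successor in $\cS_{+}$.
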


\begin{proof}
Fix a predecessor state $s \in \cS_{-}$. If $x$ and $y$ are both one-step successors of $s$ under possibly different actions, then~\eqref{eq:decomp-sum-zero} gives $u(s) + v(x) = 0$ and $u(s) + v(y) = 0$, hence $v(x) = v(y)$: $v$ is constant along every graph edge. Connectedness of the successor graph propagates the equality, so $v$ is constant on $\cS_{+}$. Substituting that constant back into~\eqref{eq:decomp-sum-zero} forces $u(s)$ to equal the negative of that constant for every $s \in \cS_{-}$.
\end{proof}

\begin{proposition}[AIRL disentanglement under deterministic decomposable dynamics]
\label{prop:airl-deterministic-decomp}
Assume the following.
\begin{enumerate}[leftmargin=2em,label=(\roman*)]
\item The ground-truth reward is state-only: $r(s, a) = r(s)$.
\item AIRL uses a state-only reward head and a state-only shaping head:
$f_{g,h}(s, a, s') = g(s) + \beta h(s') - h(s)$.
\item The transition is deterministic, so $s' = T(s, a)$.
\item The supported transitions cover and decompose the relevant state space: $\cS_{+} = \cS_{-} = \cS$, and the one-step successor-link graph on $\cS_{+}$ is connected.
\item The exact AIRL saddle is reached: on every supported transition,
$f_{g,h}(s, a, T(s, a)) = A^\star(s, a)$.
\end{enumerate}
Then there exists a scalar $c \in \R$ such that
\begin{equation*}
h(s) = V^\star(s) + c, \qquad g(s) = r(s) + (1 - \beta)c \qquad \text{for all } s.
\end{equation*}
In particular, AIRL's reward head identifies the true state-only reward up to an additive constant.
\end{proposition}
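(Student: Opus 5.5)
The plan is to write the true soft advantage in the same potential-shaped form as the AIRL score and then subtract the two representations. Under (i) and (iii), the soft Bellman equation~\eqref{eq:soft-bellman} reads $Q^\star(s,a) = r(s) + \beta V^\star(T(s,a))$ with $V^\star(s) = \LSE(Q^\star(s,\cdot))$. Therefore on every supported transition
\begin{equation*}
A^\star(s,a) \;=\; Q^\star(s,a) - V^\star(s) \;=\; r(s) + \beta V^\star(T(s,a)) - V^\star(s).
\end{equation*}
So $A^\star$ is itself of the form~\eqref{eq:airl-parametrization}, with $g = r$ and $h = V^\star$. Both $r$ and $V^\star$ are bounded, the latter because $Q^\star$ is the bounded fixed point from Lemma~\ref{lem:soft-contraction}.

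Next, combine this with the saddle condition (v). Set $\Delta_h := h - V^\star$ and $\Delta_g := g - r$. Subtracting the two representations gives, for every supported $(s,a,s')$ with $s' = T(s,a)$,
\begin{equation*}
\bigl(\Delta_g(s) - \Delta_h(s)\bigr) + \beta\,\Delta_h(s') \;=\; 0.
\end{equation*}
This is exactly the form~\eqref{eq:decomp-sum-zero}, with $u := \Delta_g - \Delta_h$ and $v := \beta \Delta_h$. Both are bounded provided $g$ and $h$ are; I will take boundedness of the AIRL heads as part of the standing setup.

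Now apply Lemma~\ref{lem:fu-decomp-separation} under (iv). It yields that $v$ is constant on $\cS_+ = \cS$, so $\Delta_h \equiv c$ for some $c$, using $\beta > 0$. It also yields that $u$ is constant on $\cS_- = \cS$, and substituting back gives $u \equiv -\beta c$. Hence $\Delta_g = \Delta_h + u = c - \beta c = (1-\beta)c$, which is the claimed form $h = V^\star + c$, $g = r + (1-\beta)c$.

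The only genuinely delicate step is the first one: making sure the true advantage really has the state-only shaped form. That form uses the state-only reward (i) and determinism (iii), so that $\EE[V^\star(s')\mid s,a]$ collapses to $V^\star(T(s,a))$ and the difference equation involves only the realized $s'$. Once this is in place, the rest is bookkeeping through the separation lemma. The covering condition $\cS_+ = \cS_- = \cS$ is what upgrades constancy from the successor and predecessor sets to all of $\cS$.
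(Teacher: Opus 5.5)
Your proof is correct and follows the paper's argument essentially step for step. You rewrite $A^\star$ in the shaped form with $g=r$, $h=V^\star$, subtract to get $(\Delta_g-\Delta_h)(s)+\beta\Delta_h(T(s,a))=0$, and apply Lemma~\ref{lem:fu-decomp-separation} with the same choice $u=\Delta_g-\Delta_h$, $v=\beta\Delta_h$; your remarks on bounded heads and on using $\beta>0$ are harmless additions, since the separation lemma's proof never uses boundedness.
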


\begin{proof}
With a state-only reward and deterministic dynamics, the soft Bellman equation reads $Q^\star(s, a) = r(s) + \beta V^\star(T(s, a))$, so the soft advantage is
\begin{equation*}
A^\star(s, a) = Q^\star(s, a) - V^\star(s) = r(s) + \beta V^\star(T(s, a)) - V^\star(s).
\end{equation*}
Assumption~(v) then gives, on every supported transition,
\begin{equation*}
g(s) + \beta h(T(s, a)) - h(s) = r(s) + \beta V^\star(T(s, a)) - V^\star(s).
\end{equation*}
Write $\Delta := h - V^\star$ and $\delta := g - r$. Substituting $h = V^\star + \Delta$ and $g = r + \delta$ and cancelling the common $r, V^\star$ terms yields
\begin{equation}
\bigl(\delta(s) - \Delta(s)\bigr) + \beta\, \Delta(T(s, a)) = 0 \qquad \forall (s, a).
\label{eq:airl-decomp-proof}
\end{equation}
This is exactly the form handled by Lemma~\ref{lem:fu-decomp-separation}: set $u := \delta - \Delta$ and $v := \beta \Delta$. By the coverage and connectedness assumptions in~(iv), the lemma makes $v$, hence $\Delta$, constant on all of $\cS$; write $\Delta \equiv c$. Then~\eqref{eq:airl-decomp-proof} reads $\delta(s) - c + \beta c = 0$, so $\delta \equiv (1 - \beta)c$. Hence $h = V^\star + c$ and $g = r + (1 - \beta)c$.
\end{proof}

\paragraph{All four restrictions are load-bearing.}
Proposition~\ref{prop:airl-deterministic-decomp} is sharp: each substantive restriction removes a different obstruction. Here the state-only restriction combines assumptions~(i)--(ii), so the four restrictions are state-only structure, deterministic dynamics, decomposable coverage, and an attained exact saddle.

\emph{(i)--(ii) State-only rewards.} If the reward head may depend on $(s, a)$, the shaping family of Lemma~\ref{lem:nonid} comes straight back. For any bounded potential $\Phi : \cS \to \R$, set $h = V^\star + \Phi$ and
\begin{equation}
g_\Phi(s, a) := r(s, a) + \Phi(s) - \beta\, \EE_{s' \sim P(\cdot \mid s, a)}[\Phi(s')].
\label{eq:airl-state-action-shaping}
\end{equation}
Then $g_\Phi(s, a) + \beta\, \EE[h(s') \mid s, a] - h(s) = A^\star(s, a)$ identically. In deterministic dynamics this expectation is just evaluation at $T(s,a)$; in stochastic dynamics it is the population version of the same ambiguity. Either way, every potential-shaped reward produces the same soft advantage, so AIRL leaves state-action rewards unidentified beyond the baseline shaping equivalence.

\emph{(iii) Deterministic dynamics.} Determinism matters because then the next state $T(s,a)$ is fixed once $(s,a)$ is fixed, so the AIRL term $h(T(s,a))$ can line up pointwise with the Bellman equation. The proof used exactly this identity: $A^\star(s, a) = r(s) + \beta V^\star(T(s, a)) - V^\star(s)$. Under stochastic transitions, the Bellman identity carries a conditional expectation, while AIRL's realized term $h(s')$ depends on one sampled next state. The most one can ask is the population moment
\begin{equation}
\EE_{s' \sim P(\cdot \mid s, a)}[f_{g,h}(s, a, s')] = A^\star(s, a).
\label{eq:airl-expectation-condition}
\end{equation}
Even granting~\eqref{eq:airl-expectation-condition}, reward recovery is not automatic. With a state-only reward and state-only heads, the soft Bellman equation turns~\eqref{eq:airl-expectation-condition} into $g(s) + \beta \EE[h(s') \mid s, a] - h(s) = r(s) + \beta \EE[V^\star(s') \mid s, a] - V^\star(s)$; with $\Delta := h - V^\star$ and $\delta := g - r$ this rearranges to
\begin{equation}
\delta(s) = \Delta(s) - \beta\, \EE[\Delta(s') \mid s, a] \qquad \forall a.
\label{eq:airl-stochastic-delta}
\end{equation}
The left-hand side is independent of $a$, so $\EE[\Delta(s') \mid s, a]$ must be independent of $a$ as well. This forces $\Delta$ to be locally, and then globally, constant only if the transition family is rich enough that the condition ``all action-conditioned expectations of $\Delta$ agree'' implies $\Delta$ is constant: a completeness condition on $P$. Mere dependence of $P(\cdot \mid s, a)$ on $a$ is not enough, since two next-state distributions can share the expectation of a non-constant $\Delta$. Under the completeness condition one recovers $h = V^\star + c$ and $g = r + (1 - \beta)c$ as before; without it,~\eqref{eq:airl-stochastic-delta} leaves residual shaping ambiguity.

This distinction clarifies the role of transition-aware AIRL variants. \citet{zhan2024model} tries to address the stochastic-realizability obstruction: replacing AIRL's realized-next-state term \(h(s')\) by a model-based approximation to \(\EE[h(s')\mid s,a]\), equivalently a transition-aware shaping term such as \(R(s,a)+\beta\EE_{\hat T}[\varphi(s')\mid s,a]-\varphi(s)\), makes the logit compatible with the stochastic Bellman identity up to learned-transition error. That repair does not by itself supply the completeness or normalization condition needed for identification. After the expectation is inserted, the remaining equation is still~\eqref{eq:airl-stochastic-delta}, so reward recovery still requires that the only potential differences \(\Delta\) whose action-conditioned transition expectations agree are constants.

\emph{(iv) Connectivity and coverage.} If the successor graph splits into several components, the separation argument forces $\Delta$ constant only \emph{within} each component, and different per-component constants give genuinely different decompositions consistent with the same saddle. States that never appear as one-step successors leave $h$ unconstrained there; states that never appear as predecessors leave $g$ unconstrained. Deterministic systems with little branching, or with disconnected recurrent classes, routinely fail the condition.

\emph{(v) Exact saddle.} If the policy player has not matched the expert occupancy, or if the discriminator class cannot realize its Bayes optimum, the equality $f_{g,h}=A^\star$ need not hold on the expert support. The learned logit then reflects the current policy/discriminator game rather than the expert's soft advantage, so there is no exact advantage identity to decompose into reward plus shaping terms.

\paragraph{Relation to anchor actions.}
The bottom line is that AIRL is elegant because it builds potential shaping into the discriminator, but that removes the baseline identification problem only under assumptions such as state-dependent rewards and deterministic transitions. At the perfect saddle, it recovers the expert's soft advantage; reward recovery requires additional structure that makes the advantage decomposition unique.

To summarize:
\begin{itemize}[leftmargin=*]
    \item AIRL recovers Hotz-Miller conversion under deterministic transitions. In other words, under deterministic transitions, making the anchor action assumption additionally yields exact reward recovery \citep{magnac2002identifying}. However, under stochastic transitions, the Hotz-Miller conversation is not recovered because it stems from the trajectory energy model (which is fundamentally myopic under stochastic transitions \citep{cao2021identifiability}). 
    \item AIRL can avoid the anchor-action assumption, following \citet{fu2017learning}, by making extra assumptions such as deterministic transitions and state-only rewards. 
\end{itemize}

 For the remainder of these notes, we go back to the anchor action assumption Assumption~\ref{ass:anchor}: it identifies state-action rewards directly, on the expert support, without requiring state-only rewards, deterministic dynamics, or a decomposable transition graph.

\section{Classical DDC Methods}
\label{sec:ddc-methods}

We now survey the classical estimation routes that grew out of the Magnac--Thesmar identification: Rust's nested fixed-point algorithm (Section~\ref{sec:rust}), the conditional-choice-probability approach of Hotz--Miller (Section~\ref{sec:hotz-miller}), and the Adusumilli--Eckardt temporal-difference route (Section~\ref{sec:td-ddc}), which itself contains two methods: linear semi-gradient TD and approximate value iteration. Each route gets one of the following ingredients right, but pays a price elsewhere:
\begin{itemize}[leftmargin=2em]
\item \emph{Scalability} in the dimension of $\cS$,
\item \emph{Transition-kernel freedom} (no separate estimate of $P$ required),
\item \emph{Stability} (no deadly-triad-style divergence).
\end{itemize}
Section~\ref{sec:erm} is then about how we can address this issue.

The starting point is common to all. By Theorem~\ref{thm:magnac-thesmar}, $\pi^\star$ pins down $Q^\star$ once an anchor action is fixed. Operationally, this requires solving two coupled conditions (the softmax MLE condition and the soft Bellman equation at the anchor action) that we record now.

\subsection{The Two Pieces: NLL and Bellman}
\label{sec:two-pieces}

\begin{lemma}[Softmax MLE optimality]
\label{lem:nll}
Let $\rho \in \Delta(\cS \times \cA)$ have state marginal $\rho_S$ and conditional $\rho(\cdot \mid s) = \pi^\star(\cdot \mid s)$ for $\rho_S$-a.e.\ $s$. Then
\begin{equation*}
\argmin_{Q \in \cQ}\, \EE_{(s,a) \sim \rho}\!\bigl[-\log \hat p_Q(a \mid s)\bigr]
\;=\; \{Q \in \cQ : \hat p_Q(\cdot \mid s) = \pi^\star(\cdot \mid s)\;\rho_S\text{-a.e.}\},
\end{equation*}
where $\hat p_Q(a \mid s) := e^{Q(s, a)} / \sum_{a'} e^{Q(s, a')}$.
\end{lemma}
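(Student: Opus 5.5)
The plan is the standard Gibbs-inequality (cross-entropy) argument, carried out state by state. First, condition on the state. Since $\rho(\cdot\mid s)=\pi^\star(\cdot\mid s)$ for $\rho_S$-a.e.\ $s$, the tower property gives
\begin{equation*}
\EE_{(s,a)\sim\rho}\bigl[-\log \hat p_Q(a\mid s)\bigr]
=\EE_{s\sim\rho_S}\Bigl[\Ent(\pi^\star(\cdot\mid s))+\KL\bigl(\pi^\star(\cdot\mid s)\,\|\,\hat p_Q(\cdot\mid s)\bigr)\Bigr],
\end{equation*}
using the pointwise identity $-\sum_a \pi^\star(a\mid s)\log \hat p_Q(a\mid s)=\Ent(\pi^\star(\cdot\mid s))+\KL(\pi^\star(\cdot\mid s)\,\|\,\hat p_Q(\cdot\mid s))$.

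Before applying this identity I will check that every term is finite, so that the decomposition is legitimate. Because $Q\in\cQ$ is bounded and $\cA$ is finite, each $\hat p_Q(a\mid s)\ge |\cA|^{-1}e^{-2\|Q\|_\infty}>0$. Hence $-\log\hat p_Q$ is bounded, the KL term is finite, and the entropy term is bounded by $\log|\cA|$. The first term, $\EE_{\rho_S}[\Ent(\pi^\star(\cdot\mid s))]$, does not depend on $Q$.

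Next I invoke Gibbs' inequality: $\KL\ge 0$, with equality iff the two distributions coincide. This gives the lower bound $\EE_{\rho_S}[\Ent(\pi^\star)]$ for every $Q\in\cQ$. The bound is attained by any $Q$ with $\hat p_Q(\cdot\mid s)=\pi^\star(\cdot\mid s)$ $\rho_S$-a.e.

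The main obstacle is showing that this set of minimizers is nonempty inside $\cQ$; otherwise the infimum need not be achieved and the argmin could be empty. I will take the candidate $Q_0(s,a):=\log\pi^\star(a\mid s)$, whose softmax is $\pi^\star$ by Lemma~\ref{lem:lse-shift}, since $\LSE(Q_0(s,\cdot))=0$. Boundedness of $Q_0$ requires $\inf_{s,a}\pi^\star(a\mid s)>0$, and measurability is inherited from $\pi^\star$. The positivity condition holds automatically when $\pi^\star$ is the softmax of a bounded $Q^\star$, which is the standing model assumption (as in the proof of Theorem~\ref{thm:magnac-thesmar} and Remark~\ref{rem:support-auto}). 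So the argmin is nonempty and the minimum value equals the lower bound.

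Finally, the converse inclusion. If $Q$ attains the minimum, then $\EE_{\rho_S}[\KL(\pi^\star(\cdot\mid s)\,\|\,\hat p_Q(\cdot\mid s))]=0$. A nonnegative function with zero integral vanishes $\rho_S$-a.e., so $\hat p_Q(\cdot\mid s)=\pi^\star(\cdot\mid s)$ for $\rho_S$-a.e.\ $s$. This establishes the set equality. I would also note in passing that the characterization pins down $Q$ only up to a state-dependent shift $Q(s,\cdot)+c(s)$, consistent with Lemma~\ref{lem:lse-shift}.
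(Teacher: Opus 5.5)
Your proof is correct and follows the paper's route: you decompose the expected negative log-likelihood into $\EE_{\rho_S}[\Ent(\pi^\star(\cdot\mid s))]$ plus an expected KL term and apply Gibbs' inequality. Your explicit finiteness and attainment checks supply what the paper leaves implicit; a simpler witness for nonemptiness is $Q^\star$ itself, which is bounded and has softmax $\pi^\star$.
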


\begin{proof}
The objective decomposes as
\begin{equation*}
\EE_{(s,a) \sim \rho}\!\bigl[-\log \hat p_Q(a \mid s)\bigr]
\;=\; \EE_{s \sim \rho_S}\!\bigl[\KL(\pi^\star(\cdot \mid s)\,\|\,\hat p_Q(\cdot \mid s))\bigr] + \EE_{s \sim \rho_S}\!\bigl[\Ent(\pi^\star(\cdot \mid s))\bigr].
\end{equation*}
The second term is independent of $Q$. The first is non-negative and zero iff $\hat p_Q(\cdot \mid s) = \pi^\star(\cdot \mid s)$ for $\rho_S$-a.e.\ $s$ (Gibbs' inequality).
\end{proof}

Notice what the NLL gives us and what it does not. The softmax $\hat p_Q$ is invariant under state-only shifts $Q(s, a) \mapsto Q(s, a) + c(s)$ (Lemma~\ref{lem:lse-shift}). So fitting $\pi^\star$ alone pins down $Q$ only up to such shifts, exactly the shaping ambiguity of Lemma~\ref{lem:nonid}. \emph{The Bellman condition is what locks down those shifts.} The classical methods differ in how they enforce it.

\subsection{Rust's Nested Fixed-Point Algorithm (NFXP)}
\label{sec:rust}

\citet{rust1987optimal} proposed to parametrize the \emph{reward} $r_\theta$ in a low-dimensional class and treat the Bellman equation as an inner-loop constraint:
\begin{align*}
\text{Outer (over $\theta$):}\quad &\theta_{k+1} = \argmin_\theta\;\widehat\EE_{\cD}\bigl[-\log \hat p_{Q_\theta}(a \mid s)\bigr],\\
\text{Inner (at each $\theta$):}\quad &Q_\theta := \text{(unique fixed point of } \cT^{r_\theta}_{\mathrm{soft}})\text{ computed by soft value iteration.}
\end{align*}
Convergence of the inner loop is guaranteed by Lemma~\ref{lem:soft-contraction}; consistency of the outer MLE is the classical result of \citet{rust1987optimal}. \emph{This is the right thing to do, statistically; the problem is doing it.}

\paragraph{The computational obstruction.}
The inner soft value iteration is over $\cS \times \cA$. For each iteration, the LSE
\begin{equation*}
\LSE(Q(s', \cdot)) \;=\; \log \sum_{a' \in \cA} \exp(Q(s', a'))
\end{equation*}
must be evaluated at \emph{every} $s'$ the outer expectation might query. Discretizing $\cS$ to $|\cS|$ points and running value iteration costs $\Theta(|\cS| \cdot |\cA|)$ per iteration to convergence in $\Theta(\log(1/\eps)/(1-\beta))$ iterations, and the inner loop sits inside the outer loop over $\theta$. When the underlying state has intrinsic dimension $d$, fixed-grid discretization to resolution $h$ produces $|\cS| = \Theta(h^{-d})$, so the inner cost scales as $h^{-d}$: the \emph{curse of dimensionality of dynamic programming}. Workarounds (sieve approximations \citep{kristensen2021solving,arcidiacono2013approximating}, recursive partitioning \citep{barzegary2022recursive}, state aggregation \citep{geng2023data}, neural-network approximations \citep{norets2012estimation}) push the boundary but do not change the scaling.

\paragraph{The transition kernel is also needed.}
There is a second problem hidden inside the first. The inner soft value iteration requires the expectation $\EE_{s' \sim P(\cdot \mid s, a)}[\LSE(Q(s', \cdot))]$, which presumes either (i) closed-form access to $P$ (rare outside textbook problems), or (ii) a separately estimated $\hat P$. In option (ii), error in $\hat P$ propagates through the fixed-point computation; \citet{rust1994structural} requires $P$ to be known a priori or consistently estimable in a first stage, a non-trivial demand in high dimensions. A constrained-optimization reformulation (MPEC) that avoids the nested loop in low dimensions is given by \citet{su2012constrained}: treat both $\theta$ and the value function as optimization variables and minimize the likelihood subject to the Bellman equation as a constraint. MPEC still requires $P$.

\subsection{Hotz--Miller / Conditional Choice Probabilities (CCP)}
\label{sec:hotz-miller}

\citet{hotz1993conditional} attack the inner-loop bottleneck differently. Their starting point is the observation underpinning~\eqref{eq:Hotz-Miller-inversion}: the \emph{differences} of $Q^\star$ across actions at the same state are already pinned down by the observed log choice-probability ratios. Identity~\eqref{eq:HM-Q-at-anchor} rearranges to
\begin{equation}
V^\star(s) \;=\; Q^\star(s, a_s) - \log \pi^\star(a_s \mid s),
\label{eq:HM-V}
\end{equation}
and the linear fixed-point equation~\eqref{eq:V-fixedpoint} that follows is a Bellman-like equation for $V^\star$ alone, with reward $-\log \pi^\star(a_s \mid \cdot)$. Once $V^\star$ is in hand, $Q^\star(s, a)$ follows from~\eqref{eq:Hotz-Miller-inversion} and~\eqref{eq:HM-V}, and $r$ follows from the soft Bellman equation~\eqref{eq:r-recovery}.

\paragraph{Forward simulation.}
The expectation in~\eqref{eq:V-fixedpoint} can be approximated by forward Monte Carlo under the anchor-conditioned kernel: simulate trajectories of length $T$ starting at $s$, set the action at each simulated state $s_t$ to the anchor action $a_{s_t}$, draw $s_{t+1}$ from $\hat P(\cdot \mid s_t,a_{s_t})$, and accumulate the per-step pseudo-reward $-\log \hat\pi^\star(a_{s_t} \mid s_t)$. Drawing actions from $\hat\pi^\star$ would solve a different Bellman equation; the anchor equation~\eqref{eq:V-fixedpoint} specifically uses $P(\cdot\mid s,a_s)$. This sidesteps the high-dimensional fixed-point computation of NFXP entirely. Recursive variants of CCP \citep{aguirregabiria2002swapping,aguirregabiria2007sequential} interleave $\pi^\star$ updates and value iteration.

\paragraph{The statistical obstruction.}
We bought the computational savings of CCP at a price, and it is a steep one. Forward simulation requires \emph{estimating $\hat P$ in high dimensions}, then propagating its error over $T = \Theta(1/(1-\beta))$ steps. With a continuous or high-dimensional state space, $\hat P$ is itself subject to a curse of dimensionality: a nonparametric kernel density estimator of $P(s' \mid s, a)$ has rate $N^{-c/d}$ for some smoothness-dependent $c$. The statistical complexity blows up exactly where NFXP's computational complexity did.

\paragraph{Recovery of $r$.}
Once $V^\star$ and $Q^\star$ are recovered, the reward is read off~\eqref{eq:r-recovery}. This expression requires another $P$-expectation; in continuous states it is again subject to transition-estimation error.

\paragraph{The summary.}
NFXP and CCP are duals. NFXP fits $r$ in an outer loop and solves the Bellman fixed point in the inner. CCP fits $\pi^\star$ first and exploits Hotz--Miller inversion to avoid the inner fixed point but needs $P$ everywhere. \emph{Both require either knowing or estimating $P$, and both impose either computational ($|\cS| \cdot |\cA|$ DP table) or statistical ($\hat P$ in $d$ dimensions) burdens scaling exponentially in $d$.} Neither produces a single differentiable objective whose minimizer is $Q^\star$.

\subsection{Temporal-Difference DDC: Linear Semi-Gradient and Approximate Value Iteration}
\label{sec:td-ddc}

The natural next thought is: \emph{can we avoid the transition kernel by using temporal-difference updates based directly on observed transitions?} \citet{adusumilli2019temporal} propose exactly this. More precisely, they propose \emph{two} transition-density-free TD-style approaches for the recursive terms that enter CCP estimation:
\begin{enumerate}[leftmargin=2em,label=(\arabic*)]
\item \textbf{Linear semi-gradient TD}, which approximates the recursive terms with basis functions and updates the coefficients by stochastic approximation.
\item \textbf{Approximate Value Iteration (AVI)}, which builds a sequence of approximations by freezing the previous continuation-value estimate and solving a fresh nonparametric regression problem at each iteration.
\end{enumerate}
In the full DDC pseudo-likelihood these recursive terms are the continuation-value objects induced by Hotz--Miller inversion. In our anchor-action notation they collapse to the auxiliary value function in~\eqref{eq:V-fixedpoint}. Thus the same analysis covers both Adusumilli--Eckardt methods: one is the semi-gradient form of TD, and the other is fitted/approximate value iteration. The only twist is that we are now bootstrapping the auxiliary value function~\eqref{eq:V-fixedpoint}, not the original value function of an MDP.

\subsubsection{Linear Function Approximation and the Projected Bellman Equation}
\label{sec:linFA}

Fix a feature map $\phi : \cS \to \R^d$ and let $\Phi$ denote the $|\cS| \times d$ feature matrix (in the finite-state case; the general case is analogous in $L^2$). Parametrize $V_\theta(s) := \phi(s)^\top \theta$, $\theta \in \R^d$. The hypothesis class is the linear span $\cF := \{V : V(s) = \phi(s)^\top \theta,\;\theta \in \R^d\}$, a closed subspace of bounded functions on $\cS$.

Fix a distribution $\rho \in \Delta(\cS)$ (the data distribution) with associated $L^2(\rho)$-inner product $\langle f, g \rangle_\rho := \EE_{s \sim \rho}[f(s) g(s)]$ and norm $\|f\|_\rho^2 := \langle f, f\rangle_\rho$. The $L^2(\rho)$-projection onto $\cF$ is
\begin{equation*}
\Pi_{\cF, \rho} V \;:=\; \argmin_{V_\theta \in \cF} \|V - V_\theta\|_\rho^2,
\end{equation*}
which is linear in $V$ and a non-expansion in $\|\cdot\|_\rho$.

Let $\cT^{\pi_b}$ denote the (linear) Bellman expectation operator under a behavior policy $\pi_b$ with corresponding kernel $P^{\pi_b}(s' \mid s) = \sum_a \pi_b(a \mid s) P(s' \mid s, a)$:
\begin{equation}
\cT^{\pi_b} V(s) \;:=\; r^{\pi_b}(s) + \beta\,\EE_{s' \sim P^{\pi_b}(\cdot \mid s)}[V(s')], \qquad r^{\pi_b}(s) := \sum_a \pi_b(a \mid s) r(s, a).
\label{eq:bellman-expectation}
\end{equation}
For the TD-DDC application below, $\pi_b$ will be the anchor-conditioned chain $\pi_b(a \mid s) = \one[a = a_s]$ and the per-state ``reward'' will be $-\log \pi^\star(a_s \mid s)$.

\begin{lemma}[On-policy contraction]
\label{lem:on-pol-contract}
If $\rho$ is the stationary distribution of $P^{\pi_b}$, then $\cT^{\pi_b}$ is a $\beta$-contraction on $L^2(\rho)$, and $\Pi_{\cF, \rho}\cT^{\pi_b}$ is a $\beta$-contraction on $\cF$.
\end{lemma}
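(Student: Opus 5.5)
The proof has two parts. First show that the transition operator $P^{\pi_b}$ is a non-expansion in $L^2(\rho)$ when $\rho$ is stationary. Then use linearity of $\cT^{\pi_b}$, which kills the reward term, and finally compose with the projection's non-expansion from Section~\ref{sec:norms}. Throughout, write $(P^{\pi_b}V)(s) := \EE_{s'\sim P^{\pi_b}(\cdot\mid s)}[V(s')]$. The reward $r^{\pi_b}$ is bounded, so $\cT^{\pi_b}$ maps $L^2(\rho)$ into itself.

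\emph{Step 1 (key estimate).} For any $V\in L^2(\rho)$, apply Jensen's inequality pointwise in $s$ to the convex map $x\mapsto x^2$:
\begin{equation*}
\bigl((P^{\pi_b}V)(s)\bigr)^2 \le \EE_{s'\sim P^{\pi_b}(\cdot\mid s)}\bigl[V(s')^2\bigr].
\end{equation*}
Integrate against $\rho$ and use stationarity, $\int \rho(ds)P^{\pi_b}(ds'\mid s)=\rho(ds')$ (Fubini/Tonelli applies since the integrand is nonnegative). This gives
\begin{equation*}
\|P^{\pi_b}V\|_\rho^2 \le \EE_{s\sim\rho}\EE_{s'\sim P^{\pi_b}(\cdot\mid s)}[V(s')^2] = \EE_{s'\sim\rho}[V(s')^2] = \|V\|_\rho^2 .
\end{equation*}
This step is where the hypothesis that $\rho$ is stationary is used, and it is the only nontrivial point. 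Off-policy, the second equality fails, and the mismatch between $\rho$ and $\rho P^{\pi_b}$ is exactly what later allows the deadly triad.

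\emph{Step 2 (contraction of $\cT^{\pi_b}$).} For $V_1,V_2\in L^2(\rho)$ the reward terms cancel, so
\begin{equation*}
\cT^{\pi_b}V_1-\cT^{\pi_b}V_2=\beta P^{\pi_b}(V_1-V_2),
\end{equation*}
and Step 1 gives $\|\cT^{\pi_b}V_1-\cT^{\pi_b}V_2\|_\rho\le\beta\|V_1-V_2\|_\rho$.

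\emph{Step 3 (projected operator).} $\cF$ is finite-dimensional, hence a closed subspace of $L^2(\rho)$, so the projection $\Pi_{\cF,\rho}$ is well defined and non-expansive (Section~\ref{sec:norms}). For $V_1,V_2\in\cF$,
\begin{equation*}
\|\Pi_{\cF,\rho}\cT^{\pi_b}V_1-\Pi_{\cF,\rho}\cT^{\pi_b}V_2\|_\rho\le\|\cT^{\pi_b}V_1-\cT^{\pi_b}V_2\|_\rho\le\beta\|V_1-V_2\|_\rho .
\end{equation*}
Moreover $\Pi_{\cF,\rho}\cT^{\pi_b}$ maps $\cF$ into $\cF$, so it is a $\beta$-contraction on $\cF$.

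Two technical caveats. First, $\|\cdot\|_\rho$ is only a seminorm on functions unless we identify functions that agree $\rho$-a.e. (equivalently, in the finite-state case, assume $\rho$ has full support or $\Phi$ has full column rank on $\operatorname{supp}\rho$). I would state the result on equivalence classes. Second, Step 2 also needs $\cT^{\pi_b}$ to respect these classes: if $V_1=V_2$ $\rho$-a.e., then $P^{\pi_b}(V_1-V_2)=0$ $\rho$-a.e. This follows from Step 1 applied to $V_1-V_2$.
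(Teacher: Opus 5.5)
Your proof is correct and follows the paper's route: conditional Jensen plus stationarity $\rho P^{\pi_b}=\rho$ makes $\cT^{\pi_b}$ a $\beta$-contraction in $\|\cdot\|_\rho$, and composing with the non-expansive projection $\Pi_{\cF,\rho}$ gives the second claim. Your remarks about working with $\rho$-a.e.\ equivalence classes are a harmless refinement that the paper leaves implicit.
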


\begin{proof}
For any $V_1, V_2 \in L^2(\rho)$, since $\rho$ is stationary under $P^{\pi_b}$, $\rho P^{\pi_b} = \rho$. By Jensen's inequality applied conditionally,
\begin{equation*}
\|\cT^{\pi_b}V_1 - \cT^{\pi_b}V_2\|_\rho^2 \;=\; \beta^2 \EE_{s \sim \rho}\!\bigl[(\EE_{s' \sim P^{\pi_b}(\cdot \mid s)}[V_1(s') - V_2(s')])^2\bigr] \;\le\; \beta^2 \EE_{s' \sim \rho P^{\pi_b}}\!\bigl[(V_1(s') - V_2(s'))^2\bigr] \;=\; \beta^2 \|V_1 - V_2\|_\rho^2.
\end{equation*}
Since $\Pi_{\cF, \rho}$ is a non-expansion, the composition is also a $\beta$-contraction.
\end{proof}

The unique fixed point $V_{\cF, \rho}^\star$ of $\Pi_{\cF, \rho}\cT^{\pi_b}$ is the \emph{projected Bellman fixed point}. It solves the projected Bellman equation $V_{\cF, \rho} = \Pi_{\cF, \rho}\cT^{\pi_b} V_{\cF, \rho}$; in matrix form, $\theta^\star$ solves $A\theta = b$ with
\begin{equation}
A \;=\; \Phi^\top D_\rho (\Phi - \beta P^{\pi_b}\Phi), \qquad b \;=\; \Phi^\top D_\rho r^{\pi_b}.
\label{eq:A-b}
\end{equation}
When $\rho$ is the stationary distribution of $P^{\pi_b}$, \citet{tsitsiklis1996analysis} show that $A$ is positive definite. Linear TD$(0)$, whose update direction has expectation $b - A\theta$, then converges to $A^{-1} b$ with rate determined by the smallest eigenvalue of $A$.

\subsubsection{The Deadly Triad: Off-Policy Divergence of Linear TD}
\label{sec:deadly-triad}

The on-policy stability of Lemma~\ref{lem:on-pol-contract} relies critically on $\rho$ being the stationary distribution of $P^{\pi_b}$. The combined presence of (i) linear (or more general) function approximation, (ii) bootstrapping, and (iii) off-policy data is known as the \emph{deadly triad} \citep{sutton2018reinforcement,van2018deep}. Here is its TD-DDC face.

\begin{example}[Off-policy divergence with two states]
\label{ex:off-pol-div}
Take $\cS = \{1, 2\}$, $\cA = \{a\}$ (so $\pi_b$ is trivial), with deterministic transitions $1 \mapsto 2$ and $2 \mapsto 2$, discount $\beta = 0.99$, and reward $r \equiv 0$. The true value is $V^\star \equiv 0$. Choose feature map $\phi(1) = 1$, $\phi(2) = 2$, so $\cF$ is one-dimensional with parameter $\theta$. The stationary distribution of $P$ is concentrated at state $2$. Now suppose the data distribution gives most weight to state $1$: $\rho = (0.99, 0.01)$. Compute
\begin{align}
    A \;=\; \Phi^\top D_\rho (\Phi - \beta P\Phi) \;&=\; [1,\,2]\,\diag(0.99, 0.01)\,\bigl([1,\,2]^\top - 0.99 \cdot [2,\,2]^\top\bigr) \notag
    \\
    \;&=\; 0.99 \cdot 1 \cdot (1 - 1.98) + 0.01 \cdot 2 \cdot (2 - 1.98) \approx -0.97, \notag
\end{align}
so $A < 0$. The linear TD update direction is $\propto (b - A\theta) = -A\theta \approx 0.97\,\theta$: for any nonzero initialization and a fixed small positive stepsize, the iterates move away from zero and grow exponentially. The finite TD equilibrium is $\theta=0$, but it is a repelling equilibrium for this off-policy update, despite $V^\star \equiv 0$ being trivially representable.
\end{example}

This is not a pathological construction. \citet{wang2021instabilities} document the same phenomenon with neural-network function approximation, where it persists even with mild representation pre-training. And critically for us: the deadly triad applies in particular to the DDC application below. The offline DDC dataset is generated by the expert acting according to $\pi^\star$ under \emph{all} actions, so its state marginal is $d^{\pi^\star}_S$. The TD-DDC update, however, bootstraps along the \emph{anchor-conditioned} chain whose kernel is $P(\cdot \mid \cdot, a_s)$, a deterministic ``policy'' that always picks the state-specific anchor. After restricting to anchor transitions, the effective sampling distribution is $\rho_A(s) \propto \pi^\star(a_s \mid s)\, d^{\pi^\star}_S(s)$, which is in general \emph{not} the stationary distribution of $P(\cdot \mid \cdot, a_s)$. The on-policy contraction of Lemma~\ref{lem:on-pol-contract} therefore does not apply to this update, and Example~\ref{ex:off-pol-div}'s pathology is on the table.

\subsubsection{Approximate Value Iteration as Fitted Value Iteration}
\label{sec:td-ddc-avi}

The second Adusumilli--Eckardt method is not the same as the linear semi-gradient recursion above. It is \emph{Approximate Value Iteration}: freeze the current approximation inside the continuation term, then solve a new regression problem for the next approximation. In the linear case this takes the form
\begin{equation}
\theta^+ \;:=\; \argmin_\theta\;\EE_{s \sim \rho,\; s' \sim P^{\pi_b}(\cdot \mid s)}\!\bigl[(r^{\pi_b}(s) + \beta\,\phi(s')^\top \theta^- - \phi(s)^\top\theta)^2\bigr], \qquad \theta^- \leftarrow \theta^+,
\label{eq:fvi}
\end{equation}
and iterate. The inner problem is a well-conditioned least-squares problem whenever $\Phi^\top D_\rho \Phi \succ 0$; its closed form is
\begin{equation*}
\theta^+ \;=\; (\Phi^\top D_\rho \Phi)^{-1} \Phi^\top D_\rho (r^{\pi_b} + \beta P^{\pi_b}\Phi\theta^-) \;=\; (\Phi^\top D_\rho \Phi)^{-1} \Phi^\top D_\rho \cT^{\pi_b}(\Phi\theta^-).
\end{equation*}
Viewed as a map $\theta^- \mapsto \theta^+$, this is exactly \emph{fitted value iteration} (FVI): $V_{\theta^+} = \Pi_{\cF, \rho} \cT^{\pi_b} V_{\theta^-}$. In a nonparametric or sieve implementation, the same map is computed by replacing the linear least-squares problem with the corresponding regression learner over the chosen function class. Thus, in these notes, Adusumilli--Eckardt's AVI method is the DDC specialization of the standard fitted-value-iteration operator.

\begin{proposition}[FVI fixed point]
\label{prop:fvi}
Suppose $\Pi_{\cF, \rho}\cT^{\pi_b}$ is a contraction in $\|\cdot\|_\rho$ (for example, the on-policy case of Lemma~\ref{lem:on-pol-contract}, or any separately verified stability condition on the projected operator). Then FVI/AVI iterates~\eqref{eq:fvi} converge to a unique fixed point $V_{\cF,\rho}^\star = \Pi_{\cF, \rho}\cT^{\pi_b} V_{\cF,\rho}^\star$. In general, $V_{\cF,\rho}^\star \neq V^\star$.
\end{proposition}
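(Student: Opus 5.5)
The plan has three parts: show that the iteration is literally the projected operator, apply Banach's theorem, and give a counterexample for the final inequality.

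\emph{Identifying the iterates with the projected operator.} The closed form after \eqref{eq:fvi} shows that each FVI/AVI step produces $V_{\theta^+} = \Pi_{\cF,\rho}\cT^{\pi_b} V_{\theta^-}$. This uses two facts. First, the population least-squares problem over $s'\sim P^{\pi_b}(\cdot\mid s)$ has the same minimizer as the regression onto the conditional mean $\cT^{\pi_b}V_{\theta^-}(s)$. This is the bias--variance split of Lemma~\ref{lem:double-sampling}: the variance term does not depend on $\theta$, so it does not move the argmin. Second, the $L^2(\rho)$ projection onto the closed subspace $\cF$ is unique.

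\emph{Convergence via Banach.} Set $\cG := \Pi_{\cF,\rho}\cT^{\pi_b}$. By hypothesis $\cG$ is a contraction on $\cF$ in $\|\cdot\|_\rho$, with some modulus $\kappa<1$; in the on-policy case Lemma~\ref{lem:on-pol-contract} gives $\kappa=\beta$. The range of $\cG$ lies in $\cF$. Since $\cF$ is a closed subspace of the Hilbert space $L^2(\rho)$, it is complete, so Banach's fixed-point theorem applies on $(\cF,\|\cdot\|_\rho)$. This yields a unique $V^\star_{\cF,\rho}$ with $V^\star_{\cF,\rho}=\cG V^\star_{\cF,\rho}$, and
\[
\|V_k - V^\star_{\cF,\rho}\|_\rho \le \kappa^k \|V_0 - V^\star_{\cF,\rho}\|_\rho .
\]
This is the step that needs care. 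Identifying $\cF$ with functions modulo $\rho$-null sets requires $\Phi^\top D_\rho\Phi\succ 0$. That same condition makes the least-squares step well defined, and it makes convergence of $V_{\theta_k}$ equivalent to convergence of $\theta_k$.

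\emph{Showing $V^\star_{\cF,\rho}\ne V^\star$ in general.} One example suffices. If $V^\star\notin\cF$, then $V^\star_{\cF,\rho}\in\cF$ cannot equal $V^\star$ (as elements of $L^2(\rho)$ with full-support $\rho$). For a concrete case, take a single state-independent feature $\phi\equiv 1$ on a two-state on-policy chain where $r^{\pi_b}$ is nonconstant. Then $V^\star$ is nonconstant and not representable, while the projected fixed point is a constant. I would also remark on the case $V^\star\in\cF$: there $V^\star=\Pi\cT^{\pi_b}V^\star$, so by uniqueness the two coincide. This shows the gap is exactly an approximation-error phenomenon. One could add the standard bound $\|V^\star_{\cF,\rho}-V^\star\|_\rho\le (1-\kappa)^{-1}\|\Pi_{\cF,\rho}V^\star - V^\star\|_\rho$, but it is not needed for the statement.
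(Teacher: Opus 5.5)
Your proposal is correct and follows the paper's route. Like the paper, you apply Banach's theorem to $\Pi_{\cF,\rho}\cT^{\pi_b}$ on the closed subspace $\cF\subseteq L^2(\rho)$, and you note that the fixed point lies in $\cF$, so it cannot equal $V^\star$ whenever $V^\star\notin\cF$. Your explicit identification of the AVI step with the projected operator and your constant-feature example are useful additions; for the example, choose an irreducible two-state chain so that the stationary $\rho$ has full support, as your own caveat requires.
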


\begin{proof}
The first claim follows from Banach's theorem applied to the FVI map on the closed subspace $\cF \subseteq L^2(\rho)$. For the second, note that any fixed point of $\Pi_{\cF, \rho}\cT^{\pi_b}$ lies in $\cF$, while $V^\star$, the unconstrained Bellman fixed point, may not. Concretely, $V^\star$ satisfies $V^\star = \cT^{\pi_b} V^\star$ but generally not $V^\star = \Pi_{\cF, \rho} V^\star$, so $\Pi_{\cF, \rho} \cT^{\pi_b} V^\star = \Pi_{\cF, \rho} V^\star \neq V^\star$ whenever $V^\star \notin \cF$. In particular, whenever the true value function does not lie in the chosen feature class, the FVI fixed point is the closest $\cF$-element to a transformed target and differs from $V^\star$ by the approximation error.
\end{proof}

\begin{corollary}[Projected fixed points and realizability]
\label{cor:closure}
Assume the projected operator $\Pi_{\cF,\rho}\cT^{\pi_b}$ has a unique fixed point. If $V^\star \in \cF$, then this fixed point equals $V^\star$, because $\Pi_{\cF,\rho}\cT^{\pi_b}V^\star = \Pi_{\cF,\rho}V^\star = V^\star$. If $V^\star \notin \cF$, the projected fixed point cannot equal $V^\star$, since every projected fixed point lies in $\cF$. Closure of $\cF$ under $\cT^{\pi_b}$ is a stronger sufficient condition for projection-free iterates, but it is not necessary for equality of the limiting projected fixed point. Thus the two distinct obstructions are stability of the projected operator and realizability of $V^\star$ in the chosen class.
\end{corollary}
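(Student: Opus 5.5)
Write $V_{\cF,\rho}^\star$ for the unique fixed point of $\Pi_{\cF,\rho}\cT^{\pi_b}$, which exists by assumption. The plan is to treat the two cases of Corollary~\ref{cor:closure} separately. In each case only two facts are needed: $V^\star$ is a fixed point of the unprojected operator, $V^\star = \cT^{\pi_b}V^\star$; and $\Pi_{\cF,\rho}$ is the identity on $\cF$. The second fact holds because, for $V\in\cF$, the infimum in the definition of the projection is attained at $f=V$ with value $0$, and uniqueness of the projection onto a closed linear subspace (Section~\ref{sec:norms}) forces $\Pi_{\cF,\rho}V = V$.

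\emph{Case $V^\star\in\cF$.} Compute
\[
\Pi_{\cF,\rho}\cT^{\pi_b}V^\star = \Pi_{\cF,\rho}V^\star = V^\star .
\]
So $V^\star$ is a fixed point of the projected operator. By the assumed uniqueness, $V^\star = V_{\cF,\rho}^\star$. This step needs neither a contraction property nor on-policy sampling; uniqueness is used only as given.

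\emph{Case $V^\star\notin\cF$.} Every fixed point satisfies $V = \Pi_{\cF,\rho}\cT^{\pi_b}V$, and the right-hand side lies in the range of $\Pi_{\cF,\rho}$, which is $\cF$. Hence $V_{\cF,\rho}^\star\in\cF$, so it cannot equal $V^\star$. Equalities here are in $L^2(\rho)$, so "$V^\star\notin\cF$" should be read modulo $\rho$-null sets. I will state this convention explicitly, and it is the only delicate point in this case.

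\emph{Remarks on closure.} If $\cT^{\pi_b}\cF\subseteq\cF$, then $\Pi_{\cF,\rho}$ acts trivially on every iterate, so FVI coincides with exact value iteration restricted to $\cF$; this justifies calling closure sufficient for projection-free iterates. To show closure is not necessary for equality of the fixed points, I will give a small example with $V^\star\in\cF$ but $\cT^{\pi_b}\cF\not\subseteq\cF$. For instance, take Example~\ref{ex:off-pol-div}'s chain with $r\equiv0$, so $V^\star\equiv0\in\cF$ for any linear $\cF$, and choose features such that $\cT^{\pi_b}\phi\notin\mathrm{span}\,\phi$. With $\phi=(1,2)$ we get $P\phi=(2,2)$, which is not proportional to $\phi$, so closure fails. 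The remaining task, and the main obstacle, is to ensure that this instance also satisfies the corollary's uniqueness hypothesis. I would pick an on-policy-like $\rho$, or simply note that the one-dimensional projected map is linear, $\theta\mapsto c\,\theta$, with $c\neq1$, so its fixed point is unique; uniqueness, not contraction, is all the corollary requires. The final sentence of the corollary, naming stability and realizability as the two obstructions, is a summary of the two cases and needs no further proof.
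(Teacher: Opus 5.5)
Your proof is the paper's own argument: the corollary is justified inline by $\Pi_{\cF,\rho}\cT^{\pi_b}V^\star=\Pi_{\cF,\rho}V^\star=V^\star$ together with the assumed uniqueness, and by the fact that every fixed point of $\Pi_{\cF,\rho}\cT^{\pi_b}$ lies in its range $\cF$; your identity-on-$\cF$ and $\rho$-null-set remarks, and the explicit non-closure example, add detail the paper only asserts. On that example, avoid both $\rho=(0.99,0.01)$, where the projected map is $\theta\mapsto c\theta$ with $c\approx 1.94$ so the fixed point is unique but FVI iterates have no limit, and the exactly stationary $\rho=(0,1)$, where $\cT^{\pi_b}\phi=0.99\,\phi$ holds $\rho$-a.e.\ so closure actually holds in $L^2(\rho)$; a full-support, nearly stationary choice such as $\rho=(0.01,0.99)$ gives $c=0.99\cdot 3.98/3.97\approx 0.9925$ and does everything you want.
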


\subsubsection{Application: The Two TD-DDC Methods of Adusumilli--Eckardt}
\label{sec:ae-application}

\citet{adusumilli2019temporal} apply this machinery in the DDC setting with the substitutions of Section~\ref{sec:hotz-miller}: $r^{\pi_b}(s) := -\log \pi^\star(a_s \mid s)$ and $\pi_b(a \mid s) := \one[a = a_s]$. Let
\begin{equation*}
I_A \;:=\; \{i \in \{1,\ldots,N\}: a_i = a_{s_i}\}
\end{equation*}
be the set of observed transitions in which the expert took the anchor action, and write $\cD_A := \{(s_i, s_i') : i \in I_A\}$ for the corresponding sub-dataset with state marginal $\rho_A$.

\paragraph{Method 1: linear semi-gradient TD.}
The first method updates the coefficient vector using the current parameter on both sides of the TD error:
\begin{equation}
\theta_{t+1} \;=\; \theta_t + \alpha_t\,\bigl(-\log \pi^\star(a_{s_t} \mid s_t) + \beta\,\phi(s_{t+1})^\top \theta_t - \phi(s_t)^\top \theta_t\bigr)\,\phi(s_t),
\label{eq:ae-semigrad}
\end{equation}
sampled from transitions $(s_t, a_t = a_{s_t}, s_{t+1})$ in $\cD$.

\paragraph{Method 2: approximate value iteration.}
The second method freezes the previous value approximation. Starting from $V_0 \in \cF$, define pseudo-outcomes for $i \in I_A$ by
\begin{equation}
Y_i^{(k)} \;:=\; -\log \pi^\star(a_{s_i} \mid s_i) + \beta V_k(s_i'),
\label{eq:ae-avi-target}
\end{equation}
and compute
\begin{equation}
V_{k+1} \;\in\; \argmin_{f \in \cF}\;\frac{1}{|I_A|}\sum_{i \in I_A}\bigl(Y_i^{(k)} - f(s_i)\bigr)^2.
\label{eq:ae-avi-regression}
\end{equation}
When $\cF$ is linear, $V_k(s)=\phi(s)^\top\theta_k$ and~\eqref{eq:ae-avi-regression} is exactly~\eqref{eq:fvi}; when $\cF$ is a sieve or nonparametric class,~\eqref{eq:ae-avi-regression} is the nonparametric AVI regression. Either way, the population operator is $V_{k+1}=\Pi_{\cF,\rho_A}\cT^{\pi_b}V_k$.

\paragraph{Replaying the analysis in the DDC setting.}
The two methods have different failure modes.
\begin{itemize}[leftmargin=2em]
\item The \emph{linear semi-gradient} update~\eqref{eq:ae-semigrad} has all three deadly-triad ingredients: function approximation, bootstrapping, and off-policy state weighting relative to the anchor-conditioned chain. By Example~\ref{ex:off-pol-div}, it has no general guarantee of convergence to $V^\star$, the unique solution of~\eqref{eq:V-fixedpoint}.
\item The \emph{AVI/FVI} update~\eqref{eq:ae-avi-regression} avoids the within-iteration moving-target gradient problem by freezing $V_k$ and solving a supervised regression. But it is still an iterated projected Bellman method. If $\Pi_{\cF,\rho_A}\cT^{\pi_b}$ is a contraction, Proposition~\ref{prop:fvi} gives convergence; the limit is the projected fixed point $V_{\cF,\rho_A}^\star$, which equals $V^\star$ when $V^\star$ is realizable in $\cF$ and the projected fixed point is unique, as in Corollary~\ref{cor:closure}. Without such a contraction, even the projected iteration can fail to converge.
\end{itemize}

\paragraph{The honest scorecard.}
Both Adusumilli--Eckardt methods avoid the explicit transition kernel that CCP/Hotz--Miller needed: they use observed $(s_t, s_{t+1})$ pairs in place of an explicit $\EE_{P(\cdot \mid s, a)}$. That is a genuine improvement over forward simulation. But the price is not the same for the two methods. Tallying:
\begin{itemize}[leftmargin=2em]
\item NFXP: statistically efficient, transition-kernel-needing, computationally cursed in $d$.
\item CCP/Hotz--Miller: computationally cheap, transition-estimation-needing, statistically cursed in $d$.
\item Adusumilli--Eckardt linear semi-gradient TD: transition-kernel-free, but deadly-triad-vulnerable.
\item Adusumilli--Eckardt AVI/FVI: transition-kernel-free and stable as a sequence of regression problems, but still a bootstrapped projected-fixed-point method; convergence to the true $V^\star$ requires both stability of the projected operator and realizability of $V^\star$ in the chosen function class.
\end{itemize}
None produces a \emph{single} unbiased objective whose global minimum is the true $Q^\star$, computable with mild conditioning assumptions on the parametrization and without specifying $P$.

\subsubsection{Why ``Neural Orthogonalization'' (Target Networks) Does Not Resolve It}
\label{sec:td-ddc-target}

A reader familiar with deep RL might wonder whether the linear semi-gradient instability of Method~1 can be cured by introducing a target network, exactly as in the forward-RL setting. The answer is the same here as there, for the same reason: target networks change \emph{which} method you are running, not where it converges. We make this precise, then explain why it leaves the underlying issue intact.

\begin{proposition}[Target-stabilized semi-gradient TD-DDC equals AVI]
\label{prop:td-ddc-target}
Consider Method~1, equation~\eqref{eq:ae-semigrad}, modified so that the parameter $\theta$ inside the bootstrap term $\phi(s_{t+1})^\top \theta$ is replaced by a target parameter $\theta^-$, refreshed only after the inner optimization in $\theta$ has reached its empirical minimum at each outer iteration. Then the outer iterates coincide with Method~2, equation~\eqref{eq:ae-avi-regression}, in both empirical and population form:
\begin{equation}
V_{\theta^{(k+1)}} \;=\; \widehat\Pi_{\cF, \cD_A}\bigl[\hat\cT^{\pi_b} V_{\theta^{(k)}}\bigr] \quad\text{(empirical)}, \qquad
V^{(k+1)} \;=\; \Pi_{\cF, \rho_A}\,\cT^{\pi_b} V^{(k)} \quad\text{(population)},
\label{eq:target-fvi-equivalence}
\end{equation}
where $\widehat\Pi_{\cF, \cD_A} f := \argmin_{V \in \cF}\,\widehat\EE_{\cD_A}[(f - V)^2]$ is the empirical $L^2(\cD_A)$-projection and $\hat\cT^{\pi_b}V(s, s') := -\log \pi^\star(a_s \mid s) + \beta V(s')$ is the sample version of $\cT^{\pi_b}$.
\end{proposition}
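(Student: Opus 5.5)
The plan is to show that, once the target $\theta^-$ is frozen, the modified update~\eqref{eq:ae-semigrad} is stochastic gradient descent on a fixed least-squares regression. Its minimizer is then exactly the AVI regression~\eqref{eq:ae-avi-regression}. We argue first at the empirical level and then take population limits.

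\emph{Step 1: identify the inner objective.} With $\theta^-$ frozen, the modified update reads
\[
\theta_{t+1}=\theta_t+\alpha_t\bigl(Y_t(\theta^-)-\phi(s_t)^\top\theta_t\bigr)\phi(s_t),
\qquad
Y_t(\theta^-):=-\log\pi^\star(a_{s_t}\mid s_t)+\beta\,\phi(s_{t+1})^\top\theta^-.
\]
The pseudo-outcome $Y_t(\theta^-)$ does not depend on $\theta_t$. So the increment is $-\tfrac{\alpha_t}{2}$ times the gradient in $\theta$ of $(Y_t(\theta^-)-\phi(s_t)^\top\theta)^2$. Averaged over $\cD_A$, it is a (stochastic) gradient step on
\[
\widehat J(\theta;\theta^-):=\widehat\EE_{\cD_A}\bigl[(Y(\theta^-)-\phi(s)^\top\theta)^2\bigr].
\]
This is a true gradient, not a semi-gradient, because the bootstrap term is now a constant. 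Consequently the inner problem is a convex quadratic in $\theta$. The hypothesis that the inner optimization reaches its empirical minimum gives
\[
\theta^{(k+1)}\in\argmin_\theta\widehat J(\theta;\theta^{(k)}).
\]
When $\cF$ is a general regression class, we use the same statement with $f\in\cF$ in place of $\phi^\top\theta$.

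\emph{Step 2: match with Method~2.} Write $\hat\cT^{\pi_b}V_{\theta^{(k)}}(s_i,s_i')=Y_i^{(k)}$, as defined in~\eqref{eq:ae-avi-target}. Then $\widehat J(\cdot;\theta^{(k)})$ is literally the objective in~\eqref{eq:ae-avi-regression}. Its argmin over $\cF$ is, by definition, $\widehat\Pi_{\cF,\cD_A}[\hat\cT^{\pi_b}V_{\theta^{(k)}}]$, which is the empirical half of~\eqref{eq:target-fvi-equivalence}. If $\widehat\Phi^\top\widehat D\widehat\Phi$ is singular, the argmin in $\theta$ is not unique. The fitted function $V_{\theta^{(k+1)}}$ on the sample is still unique, since it is the projection onto a closed subspace. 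So the claim should be stated at the level of functions, as~\eqref{eq:target-fvi-equivalence} does.

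\emph{Step 3: population form.} Replace $\widehat\EE_{\cD_A}$ by the expectation with $s\sim\rho_A$ and $s'\sim P(\cdot\mid s,a_s)$. Decompose the target exactly as in Lemma~\ref{lem:double-sampling}:
\[
\EE\bigl[(Y-f(s))^2\bigr]=\EE_{\rho_A}\bigl[(\cT^{\pi_b}V^{(k)}(s)-f(s))^2\bigr]+\beta^2\,\EE_{\rho_A}\bigl[\Var(V^{(k)}(s')\mid s)\bigr].
\]
The variance term does not depend on $f$, because the target is frozen. Hence the population minimizer is $\Pi_{\cF,\rho_A}\cT^{\pi_b}V^{(k)}$.

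The main subtlety is this last step. One must observe that the double-sampling variance, which biases the un-frozen objective, is harmless here precisely because $V^{(k)}$ is fixed during the inner minimization. One must also be careful that the minimization is exact, since partial inner optimization would give only an approximate FVI step. Everything else is bookkeeping.
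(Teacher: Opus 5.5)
Your proposal is correct and follows the paper's proof essentially step for step. Freezing $\theta^-$ turns the semi-gradient into a true least-squares gradient whose exact inner minimizer is the AVI regression~\eqref{eq:ae-avi-regression}, and the population form follows from the variance decomposition of Lemma~\ref{lem:double-sampling} with a variance term that depends only on the frozen $V^{(k)}$.

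One small caution about your uniqueness aside. When the sample Gram matrix is singular, only the fitted values at the sample states $s_i$ are unique. The values of $V_{\theta^{(k+1)}}$ at the next states $s_i'$, which enter the following targets, are not. So the empirical identity is really an argmin-membership statement, which is exactly how the paper writes it: $\theta^{(k+1)}\in\argmin_\theta(\cdot)$.
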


\begin{proof}
The argument is direct, and is especially clean because $\cT^{\pi_b}$ is a Bellman \emph{expectation} operator and there is no $\max$ in sight.

\emph{Empirical step.} Fix the outer iterate $\theta^{(k)}$ and set $\theta^- := \theta^{(k)}$ throughout the inner loop. The per-sample target
\begin{equation*}
y_i \;:=\; -\log \pi^\star(a_{s_i} \mid s_i) + \beta\,\phi(s_i')^\top \theta^{(k)} \;=\; \hat\cT^{\pi_b} V_{\theta^{(k)}}(s_i, s_i')
\end{equation*}
does not depend on the optimization variable $\theta$. The inner loss $\ell_i(\theta) := \tfrac{1}{2}(y_i - \phi(s_i)^\top \theta)^2$ has gradient $-(y_i - \phi(s_i)^\top \theta)\,\phi(s_i)$, which is exactly the negative of the target-stabilized version of the update direction in~\eqref{eq:ae-semigrad} (the bootstrap term is constant in $\theta$, so contributes nothing). Inner convergence yields
\begin{equation*}
\theta^{(k+1)} \;\in\; \argmin_\theta\;\widehat\EE_{\cD_A}\!\bigl[(\hat\cT^{\pi_b} V_{\theta^{(k)}} - \phi(s)^\top \theta)^2\bigr],
\end{equation*}
which is the empirical regression~\eqref{eq:ae-avi-regression} defining Method~2.

\emph{Population step.} Since $\cT^{\pi_b}$ is the Bellman \emph{expectation} operator under the anchor-conditioned chain, $\EE_{s' \sim P^{\pi_b}(\cdot \mid s)}[\hat\cT^{\pi_b} V_{\theta^{(k)}}(s, s')] = \cT^{\pi_b} V_{\theta^{(k)}}(s)$. The inner objective splits as
\begin{equation*}
\EE_{\rho_A,\, P^{\pi_b}}\!\bigl[(\hat\cT^{\pi_b} V_{\theta^{(k)}} - \phi(s)^\top \theta)^2\bigr] \;=\; \EE_{\rho_A}\!\bigl[(\cT^{\pi_b} V_{\theta^{(k)}} - \phi(s)^\top \theta)^2\bigr] \;+\; \beta^2\,\EE_{\rho_A}\!\bigl[\Var_{s' \mid s}(V_{\theta^{(k)}}(s'))\bigr],
\end{equation*}
by the variance decomposition (cf.\ Lemma~\ref{lem:double-sampling}). The second term depends only on $\theta^{(k)}$, not on $\theta$. So the minimizer in $\theta$ is $\Pi_{\cF, \rho_A}\, \cT^{\pi_b} V_{\theta^{(k)}}$, exactly Method~2's population operator.
\end{proof}

\paragraph{What the proposition says, and what it doesn't.}
Target stabilization moves Method~1 onto Method~2's orbit; it does not change Method~2's destination. The projected-fixed-point obstruction of Corollary~\ref{cor:closure} is unaffected: even with perfect target stabilization and idealized inner optimization, the limit (when it exists) is the projected fixed point $V_{\cF, \rho_A}^\star$. That limit equals the true auxiliary $V^\star$ only under additional conditions such as realizability of $V^\star$ in $\cF$ and uniqueness/stability of the projected fixed point. These conditions are not innocuous for neural function classes in continuous-state, off-policy settings; they are exactly the kind of structural assumptions the deadly-triad analysis (Section~\ref{sec:deadly-triad}) warns us we cannot expect for free.

In particular, three caveats are worth recording:
\begin{itemize}[leftmargin=2em]
\item \emph{Linear $\cF$, off-policy data} ($\rho_A$ not invariant under $P^{\pi_b}$): the projected operator can fail to be a contraction; Example~\ref{ex:off-pol-div} applies, and target stabilization does not rescue it.
\item \emph{Linear $\cF$, on-policy stationary data}: the projected operator is a $\beta$-contraction by Lemma~\ref{lem:on-pol-contract}, and target-stabilized Method~1 converges to $V_{\cF, \rho_A}^\star$, the projected fixed point. This limit equals $V^\star$ if $V^\star$ lies in $\cF$; otherwise it is a projected approximation.
\item \emph{Nonlinear $\cF$} (the relevant regime for offline IRL/DDC at scale): $\Pi_{\cF, \rho_A}$ is set-valued, the inner regression may have multiple local minima, and \eqref{eq:target-fvi-equivalence} does not uniquely define a single-valued map. Even pointwise selecting an inner minimizer, $\Pi_{\cF, \rho_A}\,\cT^{\pi_b}$ need not be Lipschitz, and convergence requires assumptions far beyond what is verifiable in practice.
\end{itemize}

The methodological takeaway is the following:
\begin{quote}
\emph{Target networks stabilize the trajectory of bootstrapped TD-DDC updates but do not change their destination. The destination is the projected Bellman fixed point of the auxiliary value function~\eqref{eq:V-fixedpoint}, not the auxiliary value function itself.}
\end{quote}

This rules out the most familiar deep-RL fix and points to the same conclusion: the right escape route in offline IRL/DDC is to abandon bootstrapping entirely in favor of direct (bias-corrected) Bellman-residual minimization. That is the gap Section~\ref{sec:erm} closes.

\section{Modern IRL Methods}
\label{sec:irl}

We now turn to the ML/IRL literature. Four families dominate it: adversarial IRL, occupancy matching, IQ-Learn, and offline maximum-likelihood IRL. For each we (a) write out the actual training objective, (b) state precisely what it identifies and what it does not, and (c) compare to the anchor-action route. A general survey of imitation learning is given by \citet{zare2024survey}.

A preview of where the section lands: each modern method makes a different bet against the same identification ambiguity (Lemma~\ref{lem:nonid}), and each pays for the bet somewhere: in the form of reward identified, in stability, in compatibility with offline data, or in what the method actually optimizes.

\subsection{Adversarial IRL (AIRL)}
\label{sec:airl}

Section~\ref{sec:airl-id} introduced AIRL's language and formulation: the potential-shaped discriminator~\eqref{eq:airl-discriminator} and the alternating saddle update~\eqref{eq:airl-disc-update}. We return to AIRL here as a modern IRL method, focusing on what that formulation delivers computationally and what it still cannot identify.

\paragraph{What the saddle delivers.}
The textbook saddle calculation needs two idealizations made explicit. First, the policy player must have converged to the expert, so $d^\pi = d^{\pi^\star}$ on the relevant support. Second, the AIRL discriminator class must be able to realize its Bayes optimum. Under both, $D^\star \equiv \tfrac12$ on the matched support, and since~\eqref{eq:airl-discriminator} gives $D = \tfrac12$ exactly when $\exp f_{\theta, \phi} = \pi(a \mid s)$, the saddle pins down
\begin{equation*}
f_{\theta, \phi}(s, a, s') = \log \pi^\star(a \mid s) = A^\star(s, a).
\end{equation*}
As emphasized in Section~\ref{sec:airl-id}, this is an \emph{advantage} identity, not yet a reward identity, and it carries no information beyond the expert's conditional choice probabilities, the same object Hotz--Miller inversion~\eqref{eq:Hotz-Miller-inversion} consumes through log-probability differences. The discriminator's real burden is therefore not to discover the advantage, but to split it into a reward part and a shaping part. In stochastic environments even the equality above is delicate: the right-hand side $A^\star(s, a)$ is independent of $s'$ while the AIRL class $g(s) + \beta h(s') - h(s)$ is not, so exact $D \equiv \tfrac12$ within the parametrized class may be unattainable. Section~\ref{sec:airl-id} treats this carefully through the population moment~\eqref{eq:airl-expectation-condition}.

\paragraph{Identification limitations.}
The self-contained analysis of Section~\ref{sec:airl-id} (Proposition~\ref{prop:airl-deterministic-decomp}) shows precisely what extra structure makes reward recovery valid, and what goes wrong otherwise. The limitations are structural rather than merely computational:
\begin{itemize}[leftmargin=2em]
\item With a state-only ground-truth reward, a state-only reward head, deterministic dynamics, an attained exact saddle, and the decomposable-coverage condition, AIRL recovers the reward up to an additive constant: $g_\theta(s) = r(s) + C$ and $h_\phi(s) = V^\star(s) + C/(1 - \beta)$.
\item If $g_\theta$ is allowed to depend on $(s, a)$, the potential-shaped family~\eqref{eq:airl-state-action-shaping} gives the same advantage, so state-action rewards remain unidentified beyond the baseline shaping equivalence of Lemma~\ref{lem:nonid}.
\item In stochastic environments there is both a realizability gap, because the parametrized logit cannot match the $s'$-independent target $A^\star(s, a)$ pointwise, and an identification gap: even the population moment~\eqref{eq:airl-expectation-condition} needs a transition-kernel completeness condition to pin down the shaping potential.
\end{itemize}
The anchor-action route gives strictly more for these notes: it identifies state-action $r$ on the expert support without requiring the reward to be state-only and without relying on the AIRL decomposition (cf.\ Theorem~\ref{thm:magnac-thesmar} and Theorem~\ref{thm:cao-id}).

\paragraph{Optimization limitations.}
On top of this identification ceiling, AIRL inherits the well-documented instabilities of GAN training: the alternating minimax updates on $(\theta, \phi)$ and $\pi$ can fail to converge, exhibit mode collapse, or settle at a non-optimal saddle. And AIRL is fundamentally an \emph{online} method: the discriminator update~\eqref{eq:airl-disc-update} requires sampling from the current policy $\pi$, which is exactly what offline IRL forbids.

\subsection{Occupancy Matching: GAIL}
\label{sec:occ-matching}

\citet{ho2016generative} introduced \emph{Generative Adversarial Imitation Learning} (GAIL). The starting point is the occupancy-matching objective
\begin{equation*}
\min_\pi \;D_{\mathrm{JS}}\bigl(d^\pi \,\|\, d^{\pi^\star}\bigr),
\end{equation*}
where $D_{\mathrm{JS}}$ is the Jensen--Shannon divergence. The variational identity behind GAIL is the following. For any two distributions $p, q$ on a common space $\cX$, the GAN value at the inner $D$-optimum is
\begin{equation}
\max_{D : \cX \to (0, 1)} \;\EE_p[\log D] + \EE_q[\log(1 - D)]
\;=\; -\log 4 + 2\,D_{\mathrm{JS}}(p \,\|\, q),
\label{eq:gan-dual}
\end{equation}
with optimum attained at $D^\star = p / (p + q)$ \citep[Prop.~1]{goodfellow2020generative}. The derivation is direct: writing the integrand at a point as $V(D) := p \log D + q \log(1 - D)$ and setting $\partial V/\partial D = 0$ gives $D^\star = p/(p+q)$, and substituting back recovers $-\log 4 + 2D_{\mathrm{JS}}(p \| q)$ via $D_{\mathrm{JS}}(p\|q) = \tfrac{1}{2}\KL(p \| (p+q)/2) + \tfrac{1}{2}\KL(q \| (p+q)/2)$. So minimizing the JS over $\pi$ is equivalent (up to additive constants) to the saddle
\begin{equation}
\min_\pi\,\max_{D}\;\EE_{(s, a) \sim d^{\pi^\star}}[\log D(s, a)] + \EE_{(s, a) \sim d^\pi}[\log(1 - D(s, a))].
\label{eq:gail-saddle}
\end{equation}
GAIL solves~\eqref{eq:gail-saddle} with a GAN-style optimizer over a parametrized $D_\phi$ and $\pi_\theta$, using on-policy policy-gradient methods such as PPO \citep{schulman2017proximal,raffin2021stable} for the $\pi_\theta$ update. Related state-marginal-matching variants restrict $D$ to depend only on $s$ \citep{ni2021f}.

\paragraph{What GAIL is and isn't.}
GAIL is fundamentally an \emph{imitation} algorithm: its goal is to make $d^{\pi_\theta}$ close to $d^{\pi^\star}$ in JS divergence, and it succeeds at this on many tasks. But the crucial point is that \emph{it is not a reward-recovery algorithm}. At the saddle, the discriminator satisfies
\begin{equation*}
\log\frac{D^\star(s, a)}{1 - D^\star(s, a)} \;=\; \log \frac{d^{\pi^\star}(s, a)}{d^{\pi_\theta}(s, a)},
\end{equation*}
which is a log Radon--Nikodym derivative of expert vs.\ current-policy occupancies, not a reward. There is no way to extract $r$ from $D^\star$ without further identification assumptions of exactly the kind we already discussed (Theorem~\ref{thm:cao-id} and the anchor-action assumption).

\paragraph{Bellman consistency is not built in.}
The limitation can be seen directly from a one-line Bellman-flow identity. Fix any policy $\pi$, reward $r$, and candidate state-action value function $Q$. Define the policy Bellman operator
\begin{equation*}
(\cT_r^\pi Q)(s,a)
:=r(s,a)+\beta\EE_{s'\sim P(\cdot\mid s,a),\,a'\sim\pi(\cdot\mid s')}[Q(s',a')].
\end{equation*}
Using the normalized discounted occupancy
\begin{equation*}
d^\pi(s,a)=(1-\beta)\sum_{t\ge 0}\beta^t\PP_\pi(s_t=s,a_t=a),
\end{equation*}
we have
\begin{align}
\EE_{d^\pi}\!\left[Q(s,a)-\beta\EE[Q(s',a')\mid s,a]\right]
&=(1-\beta)\EE_{s_0\sim\nu_0,\,a_0\sim\pi(\cdot\mid s_0)}[Q(s_0,a_0)],
\label{eq:gail-flow-identity}\\
\EE_{d^\pi}\!\left[Q(s,a)-(\cT_r^\pi Q)(s,a)\right]
&=(1-\beta)\EE[Q(s_0,a_0)]-\EE_{d^\pi}[r(s,a)].
\label{eq:gail-weighted-bellman}
\end{align}
The first line is just telescoping: the second term equals $(1-\beta)\sum_{t\ge 0}\beta^{t+1}\EE[Q(s_{t+1},a_{t+1})]$, so subtracting it from $(1-\beta)\sum_{t\ge 0}\beta^t\EE[Q(s_t,a_t)]$ leaves only the $t=0$ term. The second line subtracts the reward.

Equation~\eqref{eq:gail-weighted-bellman} is a \emph{single weighted average} of Bellman residuals under the occupancy $d^\pi$. Matching $d^\pi$ to $d^{\pi^\star}$ can at most match such occupancy-weighted averages. It does not imply the pointwise Bellman equation
\begin{equation*}
Q(s,a)=(\cT_r^\pi Q)(s,a)\qquad\text{for every }(s,a).
\end{equation*}
Residuals can be positive on one part of the support and negative on another while their $d^\pi$-average remains zero. Thus occupancy matching places a distributional constraint on \emph{where the policy goes}; it does not place an equation-of-motion constraint on a candidate $Q$.

This also explains why a reward cannot be recovered from GAIL by first extracting some $Q$ and then applying the Bellman rearrangement $r=Q-\beta\EE[V_Q(s')\mid s,a]$. At a perfect GAIL match the discriminator itself is uninformative ($D^\star=1/2$), and any separately fitted $Q$ is unconstrained by the pointwise Bellman residual unless that residual is explicitly included in the objective. The anchor-action and ERM routes of Section~\ref{sec:anchor} and Section~\ref{sec:erm} do exactly that: they enforce the Bellman equation at the anchor action rather than hoping it follows from occupancy matching.

\paragraph{Offline limitations.}
GAIL's policy update is on-policy: it requires sampling from $\pi_\theta$ to compute $\EE_{d^{\pi_\theta}}[\log(1 - D)]$. An offline variant, ValueDice \citep{kostrikov2019imitation}, re-derives the objective using off-policy distribution-ratio estimation via the Fenchel dual of the $f$-divergence: instead of sampling from $\pi_\theta$, it estimates the ratio $d^{\pi_\theta}/d^{\pi^\star}$ from offline data and substitutes it into the GAIL objective. The empirical performance of these off-policy corrections is mixed; \citet{li2022rethinkingvaluedice} show that improvements over a strong behavior-cloning baseline can be entirely attributed to architectural and regularization differences, not the off-policy correction itself. The deeper issue is the curse of horizon all over again: trajectory importance sampling for $T$-step corrections has variance $\Theta((\rho_{\max})^T)$ where $\rho_{\max}$ is the per-step importance ratio bound and $T \sim 1/(1-\beta)$ is the effective horizon, exponentially bad in the horizon for any non-trivial policy shift, as standard importance-sampling calculations show.

\subsection{IQ-Learn}
\label{sec:iqlearn}

\citet{garg2021iq} propose \emph{IQ-Learn} (Inverse soft-Q Learning), which sidesteps both forward simulation and adversarial training by changing variables from $(r, \pi)$ to $(Q, \pi)$. The key identity is the soft Bellman equation~\eqref{eq:soft-bellman}, rearranged:
\begin{equation}
r_Q(s, a) \;:=\; Q(s, a) - \beta\,\EE_{s' \sim P(\cdot \mid s, a)}[V_Q(s')], \qquad V_Q(s) := \LSE(Q(s, \cdot)).
\label{eq:iql-reward-recovery}
\end{equation}
Given any $Q$, equation~\eqref{eq:iql-reward-recovery} \emph{defines} a reward $r_Q$ for which $Q$ satisfies the soft Bellman equation by construction. IQ-Learn parametrizes $Q_\theta$ directly and maximizes
\begin{equation}
\cJ(Q_\theta) \;:=\; \EE_{(s, a) \sim d^{\pi^\star}}[r_Q(s, a)] \;-\; (1 - \beta)\,\EE_{s_0 \sim \nu_0}[V_Q(s_0)]
\;-\; \lambda\,\EE_{(s, a) \sim d^{\pi^\star}}[\psi(r_Q(s, a))],
\label{eq:iql-objective}
\end{equation}
where $\psi$ is a strongly convex regularizer (the chi-squared $\psi(x) = x^2/4$ in the simplest version) and $\lambda > 0$. To see why the first two terms behave as an entropy-regularized return gap, use the Bellman-flow identity: for any function $V$ and any policy $\pi$,
\begin{equation}
\EE_{(s, a) \sim d^\pi}\!\bigl[V(s) - \beta\,\EE_{s' \sim P(\cdot|s, a)}[V(s')]\bigr] \;=\; (1 - \beta)\,\EE_{s_0 \sim \nu_0}[V(s_0)],
\label{eq:bellman-flow}
\end{equation}
which is the discounted-occupancy form of the telescoping sum $\sum_t \beta^t(V(s_t) - \beta V(s_{t+1}))$. Applying~\eqref{eq:bellman-flow} with $V = V_Q$ and $\pi = \pi^\star$, and using the definition of $r_Q$,
\begin{equation*}
\EE_{(s, a) \sim d^{\pi^\star}}[r_Q(s, a)] \,-\, (1-\beta)\EE_{s_0 \sim \nu_0}[V_Q(s_0)] \;=\; \EE_{(s, a) \sim d^{\pi^\star}}\!\bigl[Q(s, a) - V_Q(s)\bigr].
\end{equation*}
The RHS is exactly $\EE_{(s, a) \sim d^{\pi^\star}}[\log \hat p_Q(a \mid s)] = -\EE_{s \sim d^{\pi^\star}_S}\!\bigl[\KL(\pi^\star(\cdot|s)\,\|\,\hat p_Q(\cdot|s))\bigr] - \EE_{s \sim d^{\pi^\star}_S}[\Ent(\pi^\star(\cdot|s))]$ by the same decomposition as in Lemma~\ref{lem:nll}. So maximizing the first two terms of $\cJ$ in $Q$ is exactly minimizing the KL between the softmax of $Q$ and the expert policy, i.e., it fits $Q$ to a value function whose softmax matches $\pi^\star$, hence to the soft Bellman equation of $r_Q$ by construction. The chi-squared term keeps $r_Q$ from blowing up.

\paragraph{What IQ-Learn does and does not pin down.}
The first two terms of~\eqref{eq:iql-objective} do not fix the state-only shaping ambiguity. Let $c:\cS\to\R$ be bounded and define $Q_c(s,a):=Q(s,a)+c(s)$. Then
\begin{equation*}
V_{Q_c}(s)=V_Q(s)+c(s),\qquad
r_{Q_c}(s,a)=r_Q(s,a)+c(s)-\beta \EE[c(s')\mid s,a].
\end{equation*}
The change in the reward term is, by the Bellman-flow identity~\eqref{eq:bellman-flow},
\begin{equation*}
\EE_{d^{\pi^\star}}[r_{Q_c}-r_Q]
=\EE_{d^{\pi^\star}}\bigl[c(s)-\beta\EE[c(s')\mid s,a]\bigr]
=(1-\beta)\EE_{\nu_0}[c(s_0)].
\end{equation*}
The initial-value term changes by exactly the same amount with the opposite sign:
\begin{equation*}
-(1-\beta)\EE_{\nu_0}[V_{Q_c}(s_0)-V_Q(s_0)]
=-(1-\beta)\EE_{\nu_0}[c(s_0)].
\end{equation*}
Therefore the first two terms of IQ-Learn are invariant under \emph{every} bounded state-only shift $c$, not merely under a scalar normalization. The regularizer $-\lambda\EE[\psi(r_Q)]$ is the only part of~\eqref{eq:iql-objective} that selects a representative within the potential-shaped equivalence class, and that selection is a penalty preference rather than statistical identification of the ground-truth reward. Thus IQ-Learn can fit a softmax policy and define a Bellman-consistent reward by construction, but without an additional normalization such as the anchor action it does not identify a unique state-action reward. This is materially weaker than the anchor-action identification of Theorem~\ref{thm:magnac-thesmar}, which fixes the shaping function rather than choosing a regularizer-preferred representative.

\subsection{Offline Maximum-Likelihood IRL (Zeng et al.)}
\label{sec:offline-mlirl}

\citet{zeng2023understanding} propose \emph{offline ML-IRL}. Stripped of vocabulary, the method is essentially a parametric, neural re-skinning of Hotz--Miller / CCP (Section~\ref{sec:hotz-miller}) with two modifications: (a)~the closed-form auxiliary inversion of Section~\ref{sec:hotz-miller} is replaced by an iterated forward-soft-RL inner loop with an explicit reward parametrization $r_\theta$, driven by an outer softmax-MLE objective; and (b)~a conservatism penalty, analogous to Conservative Q-Learning (CQL) of \citet{kumar2020conservative}, is added inside the soft-RL inner loop to handle distribution shift. Related conservative offline IRL methods appear in \citet{yue2023clare}.

It is worth making the structural parallel to Hotz--Miller explicit, because it is the right way to understand what the method does and does not buy. Both methods (i)~estimate $\hat P$ from $\cD$, and (ii)~alternate between a forward step that produces a value function from $\hat P$ together with the current reward (or current value) and a fitting step that updates the reward (or the choice probabilities). CCP performs the forward step in closed form by solving the auxiliary linear fixed-point equation~\eqref{eq:V-fixedpoint} and reading off $r$ from~\eqref{eq:r-recovery}; Zeng et al.\ instead solve a parametric soft-RL problem at each outer step. The substrate is the same; only the parametrization and the objective differ.

The procedure is a doubly-nested alternation. At outer iteration $k$ with current reward parameter $\theta_k$:
\begin{enumerate}[leftmargin=2em,label=(\arabic*)]
\item Fit a transition model $\hat P_\phi$ from $\cD$ by maximum likelihood (this is done once or refreshed periodically).
\item Inner loop: solve the soft-RL problem under $(r_{\theta_k}, \hat P_\phi)$ for the soft optimal $\pi_k$ via soft value iteration or actor--critic with a CQL-style penalty added to the value function to discourage extrapolation beyond the support of $\cD$.
\item Outer update: $\theta_{k+1} \leftarrow \theta_k + \eta\, \nabla_\theta \widehat\EE_\cD[\log \hat p_{Q_{\theta_k}}(a \mid s)]$, where $Q_{\theta_k}$ is the soft $Q$-function corresponding to $\pi_k$.
\item Repeat until convergence.
\end{enumerate}
The conservatism penalty is the one substantive new ingredient, and it is genuinely useful: it handles the unvisited states that vanilla CCP could not, by discouraging the inner-loop value from extrapolating wildly outside the support of $\cD$. Everything else inherits CCP's structure, as well as CCP's limitations:
\begin{itemize}[leftmargin=2em]
\item It needs to estimate $\hat P_\phi$ in high dimensions, with the same statistical curse-of-dimensionality as CCP/Hotz--Miller.
\item The inner-loop policy fit must contend with the bias of $\hat P_\phi$ and the distribution-shift concerns that motivate the conservatism penalty.
\item The outer loop is nested optimization, which is the very thing we have been trying to avoid since Rust.
\end{itemize}
\citet{zeng2023understanding} prove finite-time guarantees of the form $\|\widehat r - r\|_2^2 \le \cO(N^{-1/2}) + \text{(model error)}$ under linear-MDP assumptions and bounded transition-model error, but the analysis does not extend directly to neural function approximation without further structure.

The honest takeaway is that offline ML-IRL inherits Hotz--Miller's strengths and weaknesses essentially unchanged; conservatism is orthogonal to the underlying estimator and could in principle be bolted onto \emph{any} method in this section, including the ERM framework of Section~\ref{sec:erm}.

\paragraph{Where modern IRL leaves us.}
None of the methods reviewed in this section gives all of the following at once:
\begin{itemize}[leftmargin=2em]
\item[(a)] A single (non-nested) gradient-based objective.
\item[(b)] Provable global convergence in nonlinear function classes.
\item[(c)] State-action reward recovery in continuous-state, stochastic-transition environments.
\item[(d)] No transition-kernel estimation.
\end{itemize}
The next section gives one route through this gap under explicit anchor, empirical Jacobian-conditioning/PL, realizability, and stability assumptions.

\section{The Empirical Risk Minimization Approach}
\label{sec:erm}

We finally reach the ERM framework of \citet{kang2025empirical}.  The easiest way to understand the method is not to start with the minimax objective.  Instead, start from the two restrictions that have appeared throughout the notes:
\begin{enumerate}[leftmargin=2em,label=(\roman*)]
\item the expert's observed choices identify the \emph{within-state} differences of $Q^\star$ through the softmax likelihood;
\item the anchor-action Bellman equation fixes the remaining state-wise level of $Q^\star$.
\end{enumerate}
The ERM estimator is simply the one-shot version of these two restrictions.  It parametrizes $Q$ directly, rather than parametrizing $r$ and repeatedly solving the forward Bellman equation.  This is the step that removes the nested $r\mapsto Q$ optimization that classical DDC methods require.

Throughout this section, $a_s$ denotes the anchor action at state $s$, and
\[
    r_A(s):=r(s,a_s)
\]
denotes its known reward.  The normalized presentation often sets $r_A(s)=0$ for all $s$; the formulation below allows any known anchor payoff, which is the notation used in the paper.

The build-up is as follows.  Section~\ref{sec:erm-from-classical-ddc} rewrites the classical identification equations as losses.  Section~\ref{sec:erm-objective} combines them into the population expected risk.  Section~\ref{sec:erm-td-failure} explains why the empirical problem cannot be obtained by naively replacing the Bellman expectation by one next-state sample.  Section~\ref{sec:erm-bias-correction} introduces the conditional-mean correction that removes that bias.  Sections~\ref{sec:gladius} and~\ref{sec:erm-pl} state the resulting GLADIUS algorithm and its conditional convergence guarantees.

\subsection{From classical DDC equations to two losses}
\label{sec:erm-from-classical-ddc}

Recall the two soft optimality equations.  If
\[
    V_Q(s):=\LSE(Q(s,\cdot))=\log\sum_{b\in\cA}\exp(Q(s,b)),
    \qquad
    \hat p_Q(a\mid s):={\exp(Q(s,a))\over\sum_{b\in\cA}\exp(Q(s,b))},
\]
then the expert policy and the optimal $Q$ satisfy
\begin{align}
    \pi^\star(a\mid s)&=\hat p_{Q^\star}(a\mid s),
    \label{eq:erm-softmax-restriction}\\
    Q^\star(s,a)&=r(s,a)+\beta\EE[V_{Q^\star}(s')\mid s,a].
    \label{eq:erm-bellman-restriction}
\end{align}
The first equation is what likelihood-based estimation uses.  The second equation is what dynamic programming, CCP, and Bellman-residual methods try to enforce.

Under the anchor-action assumption, the key Magnac--Thesmar observation is that we do \emph{not} need the Bellman equation for every action in order to identify $Q^\star$.  The likelihood condition~\eqref{eq:erm-softmax-restriction} identifies all within-state differences $Q^\star(s,a)-Q^\star(s,b)$.  It is only missing one scalar level per state.  The anchor-action Bellman equation supplies exactly that missing scalar:
\begin{equation}
    Q^\star(s,a_s)=r_A(s)+\beta\EE[V_{Q^\star}(s')\mid s,a_s].
    \label{eq:erm-anchor-bellman}
\end{equation}
This is the bridge from classical DDC to ERM.  Rust's nested fixed-point method enforces the full Bellman equation after parametrizing $r$; Magnac--Thesmar identification says that, once the anchor is fixed, the likelihood plus the anchor-action Bellman equation is enough.  Therefore, if we parametrize $Q$ directly, the two identifying restrictions can be expressed as two loss-minimization problems.

The first loss is the negative log-likelihood,
\begin{equation}
    \cL_{\mathrm{NLL}}(Q)(s,a):=-\log \hat p_Q(a\mid s).
    \label{eq:erm-nll-loss}
\end{equation}
By Lemma~\ref{lem:nll}, its population minimizers are precisely the $Q$ functions whose softmax equals the expert policy on the expert-covered states.

The second loss is the anchor Bellman-error loss.  For any candidate $Q$, define the anchor residual
\begin{equation}
    \Delta_A(Q)(s)
    :=r_A(s)+\beta\EE[V_Q(s')\mid s,a_s]-Q(s,a_s).
    \label{eq:erm-anchor-residual}
\end{equation}
Then $Q$ satisfies the anchor Bellman equation exactly when $\Delta_A(Q)(s)=0$.  Because the expert softmax puts positive probability on every action at each covered state, this is equivalently the minimizer set of
\begin{equation}
    \EE_{(s,a)\sim d^{\pi^\star}}
    \left[
        \one\{a=a_s\}\,\Delta_A(Q)(s)^2
    \right].
    \label{eq:erm-anchor-loss}
\end{equation}
This is the main conceptual simplification: the Bellman part contains no unknown reward term, because it is evaluated only at the anchor action where $r_A(s)$ is known.

\subsection{Expected risk: the one-shot population problem}
\label{sec:erm-objective}

Combining the two losses gives the population's expected risk
\begin{equation}
\cR_{\exp}(Q)
:=
\EE_{(s,a)\sim d^{\pi^\star}}
\left[
    -\log\hat p_Q(a\mid s)
    +\one\{a=a_s\}\,
    \bigl(r_A(s)+\beta\EE[V_Q(s')\mid s,a_s]-Q(s,a_s)\bigr)^2
\right].
\label{eq:R-exp}
\end{equation}
This is the population ERM-DDC/IRL objective.  It has the flow that the classical methods suggest, but without the classical nesting:
\begin{quote}
\emph{fit the observed choices through NLL, and simultaneously fit the anchor Bellman equation through a squared residual.}
\end{quote}
The NLL term has no unknown reward in it.  The anchor Bellman term also has no unknown reward in it.  Thus, at the population level, the inverse problem has become a direct optimization over $Q$.

\begin{theorem}[Identification via expected risk minimization]
\label{thm:erm-id}
Under the anchor-action assumption and the softmax support condition, every minimizer of~\eqref{eq:R-exp} agrees with $Q^\star$ on the expert-covered state-action support.  Once $Q^\star$ is recovered, the reward is identified on that support by
\begin{equation}
    r(s,a)=Q^\star(s,a)-\beta\EE[V_{Q^\star}(s')\mid s,a].
    \label{eq:erm-pop-reward}
\end{equation}
Equivalently, if
\[
    \zeta^\star_Q(s,a):=\EE[V_Q(s')\mid s,a],
\]
then
\[
    r(s,a)=Q^\star(s,a)-\beta\zeta^\star_{Q^\star}(s,a).
\]
\end{theorem}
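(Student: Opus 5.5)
The plan is to show that $Q^\star$ attains the smallest possible value of each of the two terms in~\eqref{eq:R-exp} separately. It follows that every minimizer must attain both smallest values at once. The two resulting conditions are then converted into the anchor fixed-point equation~\eqref{eq:V-fixedpoint}, whose solution is unique.

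\emph{Step 1 (lower bound attained by $Q^\star$).} The NLL term is bounded below by $\EE_{s\sim d^{\pi^\star}_S}[\Ent(\pi^\star(\cdot\mid s))]$, using the KL decomposition in the proof of Lemma~\ref{lem:nll}. The anchor term is nonnegative. $Q^\star$ attains both bounds: the softmax identity~\eqref{eq:erm-softmax-restriction} makes the KL term zero, and the anchor Bellman equation~\eqref{eq:erm-anchor-bellman} (with $r_A$ known) makes $\Delta_A(Q^\star)\equiv 0$. Hence $\min\cR_{\exp}$ equals this entropy constant, and a function $Q$ is a minimizer if and only if both terms vanish at their lower bounds. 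By Lemma~\ref{lem:nll}, this means $\hat p_Q(\cdot\mid s)=\pi^\star(\cdot\mid s)$ for $d^{\pi^\star}_S$-a.e.\ $s$, and $\pi^\star(a_s\mid s)\,\Delta_A(Q)(s)^2=0$ a.e.

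\emph{Step 2 (removing the indicator).} Because $Q^\star$ is bounded and $\cA$ is finite, the softmax support condition (Remark~\ref{rem:support-auto}) gives $\pi^\star(a_s\mid s)>0$. So the anchor condition becomes $\Delta_A(Q)(s)=0$ on covered states.

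\emph{Step 3 (pinning the level).} Softmax equality gives $Q(s,a)=Q^\star(s,a)+c(s)$ for some state-only shift $c$ (Lemma~\ref{lem:lse-shift}), and hence $V_Q=V_{Q^\star}+c$. Substitute this into $\Delta_A(Q)=0$ and subtract~\eqref{eq:erm-anchor-bellman}. The result is
\[
c(s)=\beta\,\EE[c(s')\mid s,a_s],
\]
the homogeneous form of the map $\Psi$ in the proof of Theorem~\ref{thm:magnac-thesmar}. Since this map is a $\beta$-contraction in sup norm and $c$ is bounded (both $Q$ and $Q^\star$ lie in $\cQ$), we get $\|c\|_\infty\le\beta\|c\|_\infty$, so $c\equiv0$. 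Therefore $Q=Q^\star$ on the support. The reward formula~\eqref{eq:erm-pop-reward} is then just~\eqref{eq:erm-bellman-restriction} rearranged, written with $\zeta^\star$.

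\emph{Main obstacle.} The hard part is Step 3 with coverage taken into account. Both conditions hold only $d^{\pi^\star}_S$-a.e., but the expectation $\EE[c(s')\mid s,a_s]$ may reach next states outside the covered set, where $c$ is not constrained. I would handle this in one of two ways: assume (or argue from the stationarity of the occupancy measure) that next states under $P(\cdot\mid s,a_s)$ remain in the covered set, or state the contraction on the closed support. In either case the bounded-sup-norm contraction argument goes through, because $\sup|c|$ over the covered set controls the right-hand side.
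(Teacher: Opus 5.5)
Your proposal is correct and follows the paper's route. The paper's sketch also observes that a minimizer must attain the NLL minimum and have zero anchor residual, and then defers to the Magnac--Thesmar argument: the softmax fixes within-state differences and the anchor-Bellman contraction fixes the state-wise level, which is exactly your homogeneous equation $c=\beta\,\EE[c(s')\mid s,a_s]$.

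Your coverage obstacle is real, and the paper's sketch does not address it, but it closes without any extra assumption. Your word "stationarity" is not quite right, since the discounted occupancy is not stationary. What it does satisfy is $d^{\pi^\star}_S=(1-\beta)\nu_0+\beta\, d^{\pi^\star}_S P^{\pi^\star}$, and together with $\pi^\star(a_s\mid s)>0$ this gives $P(\cdot\mid s,a_s)\ll d^{\pi^\star}_S$ for $d^{\pi^\star}_S$-a.e.\ $s$. Defining $c:=V_Q-V_{Q^\star}$ on all of $\cS$ and running your contraction step with the $d^{\pi^\star}_S$-essential supremum of $|c|$ in place of $\|c\|_\infty$ then yields $c=0$ almost everywhere.
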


\begin{proof}[Proof sketch]
The NLL term is minimized exactly when $\hat p_Q(\cdot\mid s)=\pi^\star(\cdot\mid s)$ on the expert state support.  The anchor Bellman term is nonnegative and is zero at $Q^\star$.  Hence a minimizer of the sum must attain the minimum of the likelihood term and zero anchor Bellman error.  These are exactly the two equations in the Magnac--Thesmar identification argument: the likelihood pins down within-state differences, and the anchor Bellman equation pins down the state-wise level.  The reward formula is the Bellman rearrangement.
\end{proof}

\begin{remark}[The support condition is automatic for softmax experts]
\label{rem:support-auto}
For a bounded $Q^\star$ and finite action set $\cA$, the softmax policy satisfies $\pi^\star(a\mid s)>0$ for every action at every covered state.  Thus the anchor action appears with positive population probability whenever the state is covered.
\end{remark}

\subsection{Why squared TD is not the empirical Bellman error}
\label{sec:erm-td-failure}

The expected risk~\eqref{eq:R-exp} is still a population object.  The hard term is
\[
    \EE[V_Q(s')\mid s,a_s],
\]
which is an expectation over the transition kernel.  If the transition kernel were known, one could evaluate this term directly.  In the offline setting, however, we only observe samples $(s,a,s')$.

The first tempting move is to replace the conditional expectation by the one observed next state.  For a general state-action pair, define the sampled Bellman operator
\begin{equation}
    \widehat{\cT}Q(s,a,s'):=r(s,a)+\beta V_Q(s'),
    \label{eq:erm-sampled-operator}
\end{equation}
and define the sampled TD residual
\begin{equation}
    \widehat\Delta(Q)(s,a,s'):=\widehat{\cT}Q(s,a,s')-Q(s,a).
    \label{eq:erm-td-residual}
\end{equation}
At anchor actions, $r(s,a)$ is the known quantity $r_A(s)$.  The sampled operator is unbiased for the Bellman operator:
\[
    \EE[\widehat{\cT}Q(s,a,s')\mid s,a]=(\cT Q)(s,a).
\]
But the squared TD loss is not unbiased for the squared Bellman error.  The variance decomposition gives
\begin{equation}
\EE\bigl[\widehat\Delta(Q)(s,a,s')^2\mid s,a\bigr]
=
\bigl((\cT Q)(s,a)-Q(s,a)\bigr)^2
+
\beta^2\Var(V_Q(s')\mid s,a).
\label{eq:erm-td-bias}
\end{equation}
The second term is not a harmless constant: it depends on $Q$.  Therefore, minimizing squared TD error generally does \emph{not} minimize squared Bellman error.  This is the double-sampling problem in this setting.  It disappears under deterministic transitions, but not under stochastic transitions.

\subsection{Bias correction and the minimax ERM formulation}
\label{sec:erm-bias-correction}

The correction is to subtract the conditional variance term in~\eqref{eq:erm-td-bias} without estimating the transition kernel.  The variance has the least-squares representation
\begin{equation}
    \Var(V_Q(s')\mid s,a)
    =
    \min_{z\in\R}\EE[(V_Q(s')-z)^2\mid s,a],
    \qquad
    z^\star_Q(s,a)=\EE[V_Q(s')\mid s,a].
    \label{eq:erm-var-regression}
\end{equation}
Promote the scalar minimizer to a function $\zeta(s,a)$.  Combining~\eqref{eq:erm-td-bias} and~\eqref{eq:erm-var-regression} yields the corrected Bellman-error identity
\begin{align}
    \bigl((\cT Q)(s,a)-Q(s,a)\bigr)^2
    &=
    \max_{\zeta}
    \EE\left[
        \widehat\Delta(Q)(s,a,s')^2
        -\beta^2\bigl(V_Q(s')-\zeta(s,a)\bigr)^2
        \mid s,a
    \right].
    \label{eq:erm-corrected-be}
\end{align}
The maximization appears because subtracting the best least-squares error is the same as maximizing the negative squared error.

Plugging the corrected identity into the anchor-weighted expected risk gives the population minimax version of ERM-DDC/IRL:
\begin{align}
\min_Q\max_\zeta\; &
\EE_{(s,a)\sim d^{\pi^\star},\,s'\sim P(\cdot\mid s,a)}
\biggl[
    -\log\hat p_Q(a\mid s) \notag\\
&\quad
    +\one\{a=a_s\}
    \biggl\{
        \bigl(r_A(s)+\beta V_Q(s')-Q(s,a)\bigr)^2
        -\beta^2\bigl(V_Q(s')-\zeta(s,a)\bigr)^2
    \biggr\}
\biggr].
\label{eq:erm-pop-minimax}
\end{align}
The anchor indicator is important.  The Bellman-identification term is imposed at the anchor action because that is where the reward is known.  The auxiliary function $\zeta$, however, has an additional role: at the solution,
\[
    \zeta^\star_{Q^\star}(s,a)=\EE[V_{Q^\star}(s')\mid s,a],
\]
so it is exactly the continuation-value object needed to recover rewards for all supported state-action pairs:
\begin{equation}
    r(s,a)=Q^\star(s,a)-\beta\zeta^\star_{Q^\star}(s,a).
    \label{eq:erm-zeta-reward}
\end{equation}
For this reason, the paper estimates $\zeta$ over observed state-action pairs, not only at anchor observations.  The anchor-weighted corrected Bellman term identifies $Q^\star$; the all-pair conditional-mean regression makes reward recovery transition-estimation-free.

Replacing the population expectations in~\eqref{eq:erm-pop-minimax} and~\eqref{eq:erm-var-regression} by empirical averages gives the finite-sample objective.  For parametrizations $Q_\theta$ and $\zeta_\phi$, define the auxiliary regression loss
\begin{equation}
    \widehat F_\zeta(\phi;\theta)
    :=
    {1\over N}\sum_{(s,a,s')\in\cD}
    \bigl(V_{Q_\theta}(s')-\zeta_\phi(s,a)\bigr)^2.
    \label{eq:zeta-regression}
\end{equation}
For fixed $Q_\theta$, descending this loss estimates the conditional mean $\EE[V_{Q_\theta}(s')\mid s,a]$ on the observed support.  Given a current auxiliary function, the empirical $Q$ objective is
\begin{align}
\widehat f_N(\theta,\phi)
:= {1\over N}\sum_{(s,a,s')\in\cD}
\biggl[
    &-\log\hat p_{Q_\theta}(a\mid s)
    \notag\\
    &+\one\{a=a_s\}
    \biggl\{
        \bigl(r_A(s)+\beta V_{Q_\theta}(s')-Q_\theta(s,a)\bigr)^2
        -\beta^2\bigl(V_{Q_\theta}(s')-\zeta_\phi(s,a)\bigr)^2
    \biggr\}
\biggr].
\label{eq:gladius-objective}
\end{align}
Equivalently, one may view the profiled empirical risk as
\[
    \widehat g_N(\theta):=\widehat f_N(\theta,\widehat\phi(\theta)),
    \qquad
    \widehat\phi(\theta)\in\argmin_\phi \widehat F_\zeta(\phi;\theta).
\]
This is the empirical-risk version of the expected-risk problem above.

\subsection{GLADIUS: alternating gradients for the corrected ERM}
\label{sec:gladius}

GLADIUS solves the empirical problem by alternating the two natural gradient steps.  The $\zeta$ step is a regression step: fit $\zeta_\phi(s,a)$ to $V_{Q_\theta}(s')$.  The $Q$ step is a descent step on the NLL plus the anchor-weighted corrected Bellman term.

\begin{algorithm}[H]
\caption{GLADIUS: Gradient-based Learning with Ascent--Descent for Inverse Utility learning from Samples}
\label{alg:gladius}
\begin{algorithmic}[1]
\REQUIRE Dataset $\cD=\{(s,a,s')\}$, known anchor rewards $r_A(s)=r(s,a_s)$, parametrizations $Q_\theta$ and $\zeta_\phi$, stepsizes $\eta_{\theta,t},\eta_{\phi,t}$, iterations $T$.
\STATE Initialize $\theta_0,\phi_0$.
\FOR{$t=0,\ldots,T-1$}
    \STATE Draw minibatches $\cB_\zeta,\cB_Q\subseteq\cD$.
    \STATE Update $\zeta$ by descending the auxiliary regression loss:
    \[
        \phi_{t+1}
        \leftarrow
        \phi_t-
        \eta_{\phi,t}\nabla_\phi
        \left\{
        {1\over |\cB_\zeta|}\sum_{(s,a,s')\in\cB_\zeta}
        \bigl(V_{Q_{\theta_t}}(s')-\zeta_{\phi_t}(s,a)\bigr)^2
        \right\}.
    \]
    \STATE Update $Q$ by descending the corrected empirical risk:
    \[
        \theta_{t+1}
        \leftarrow
        \theta_t-
        \eta_{\theta,t}\nabla_\theta\widehat f_{\cB_Q}(\theta_t,\phi_{t+1}).
    \]
\ENDFOR
\STATE Set $\widehat Q:=Q_{\theta_T}$ and $\widehat\zeta:=\zeta_{\phi_T}$.
\STATE Recover rewards on supported state-action pairs by
\[
    \widehat r(s,a):=\widehat Q(s,a)-\beta\widehat\zeta(s,a).
\]
\RETURN $\widehat Q,\widehat\zeta,\widehat r$.
\end{algorithmic}
\end{algorithm}

The language ``ascent--descent'' comes from the minimax display~\eqref{eq:erm-pop-minimax}: maximizing the negative quadratic correction in $\zeta$ is equivalent to descending the positive regression loss~\eqref{eq:zeta-regression}.  The practical algorithm uses the regression form because it is clearer and numerically standard.

\begin{remark}[Deterministic transitions]
\label{rem:erm-deterministic}
If $s'$ is deterministic conditional on $(s,a)$, then $V_Q(s')$ has zero conditional variance.  The correction term in~\eqref{eq:erm-corrected-be} is zero, so the $\zeta$ step is unnecessary for optimizing $Q$.  In that special case, the empirical problem reduces to the NLL plus the anchor TD-squared term, and reward recovery is simply
\[
    \widehat r(s,a)=\widehat Q(s,a)-\beta V_{\widehat Q}(s'),
\]
where $s'$ is the deterministic next state from $(s,a)$.
\end{remark}

\subsection{PL geometry and convergence}
\label{sec:erm-pl}

The objective in~\eqref{eq:gladius-objective} is generally nonconvex in $\theta$.  The paper's convergence argument therefore relies on Polyak--{\L}ojasiewicz geometry, not convexity.  A differentiable function $f$ satisfies a PL inequality with constant $c>0$ on a set $U$ if
\begin{equation}
    {1\over2}\|\nabla f(x)\|_2^2
    \ge c\bigl(f(x)-f^\star\bigr),
    \qquad x\in U.
    \label{eq:pl-def}
\end{equation}
PL is weaker than convexity but still rules out spurious stationary points on the relevant set.

For the minimax objective, the relevant geometry is two-sided.  The profiled $Q$ objective must satisfy PL in $\theta$, while the auxiliary regression problem must satisfy the corresponding PL condition in $\phi$ for the ascent side (equivalently, the regression loss is PL for descent).

Before stating the conditioning assumption, it is useful to say what kind of condition it is.  We are not assuming directly that the nonconvex parameter-space objective is PL.  Instead, the argument first proves PL-type inequalities in the finite empirical output coordinates, and then transfers them to the parameters through the Jacobians of the output maps
\[
    \theta\mapsto (Q_\theta(z))_{z\in Z_Q},
    \qquad
    \phi\mapsto (\zeta_\phi(\bar z))_{\bar z\in Z_\zeta}.
\]
This transfer condition is checkable for common parametrizations.  For linear models it is exactly a full-row-rank empirical feature condition.  For sufficiently wide smooth neural networks, the same type of lower bound is obtained in the usual NTK/lazy-training regime on any fixed finite empirical evaluation set.  We state the assumption first and then record these two verification cases as lemmas.

\begin{assumption}[Empirical-output Jacobian conditioning]
\label{ass:erm-jac}
Let $Z_Q(\cD)$ be the finite set of empirical state-action points at which $Q_\theta$ is evaluated, and let $Z_\zeta(\cD)$ be the finite set at which $\zeta_\phi$ is evaluated.  On finite-radius neighborhoods $B_Q$ and $B_\zeta$ containing the iterates and empirical minimizers:
\begin{enumerate}[leftmargin=2em,label=(\roman*)]
\item $Q_\theta(z)$ and $\zeta_\phi(\bar z)$ are twice continuously differentiable with bounded first and second derivatives on the empirical evaluation sets.
\item The empirical output Jacobian of $Q_\theta$ is uniformly well conditioned:
\[
    J_Q(\theta;Z_Q)J_Q(\theta;Z_Q)^\top\succeq \mu_Q I
    \qquad\text{for all }\theta\in B_Q.
\]
\item The empirical output Jacobian of $\zeta_\phi$ is uniformly well conditioned:
\[
    J_\zeta(\phi;Z_\zeta)J_\zeta(\phi;Z_\zeta)^\top\succeq \mu_\zeta I
    \qquad\text{for all }\phi\in B_\zeta.
\]
\end{enumerate}
\end{assumption}

\begin{lemma}[Linear parametrizations satisfy Assumption~\ref{ass:erm-jac}]
\label{lem:erm-jac-linear}
Suppose
\[
    Q_\theta(z)=\theta^\top\varphi(z),
    \qquad
    \zeta_\phi(\bar z)=\phi^\top\chi(\bar z).
\]
Let $\Phi_Q$ be the matrix with rows $\varphi(z)^\top$ for $z\in Z_Q(\cD)$, and let $X_\zeta$ be the matrix with rows $\chi(\bar z)^\top$ for $\bar z\in Z_\zeta(\cD)$.  If
\[
    \Phi_Q\Phi_Q^\top\succeq \mu_Q I,
    \qquad
    X_\zeta X_\zeta^\top\succeq \mu_\zeta I
\]
for some $\mu_Q,\mu_\zeta>0$, and the feature evaluations on these finite empirical sets are bounded, then Assumption~\ref{ass:erm-jac} holds.
\end{lemma}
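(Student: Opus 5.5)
The plan is to check the three items of Assumption~\ref{ass:erm-jac} directly. The key fact is that for linear models the output map is linear, so its Jacobian is a constant matrix, namely the empirical feature matrix. The neighborhoods $B_Q$ and $B_\zeta$ can then be taken as arbitrary finite-radius balls containing the iterates and empirical minimizers. Every bound will hold uniformly over all parameters, so the choice of neighborhood plays no role.

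\emph{Step 1: smoothness (item (i)).} For each fixed $z\in Z_Q(\cD)$, the map $\theta\mapsto Q_\theta(z)=\theta^\top\varphi(z)$ is linear, hence $C^\infty$. Its gradient is $\nabla_\theta Q_\theta(z)=\varphi(z)$ and its Hessian is zero. The gradient is bounded because the feature evaluations on the finite empirical set are bounded by hypothesis; for instance $\max_{z\in Z_Q}\|\varphi(z)\|_2<\infty$, which is automatic on a finite set once the features are finite. The second derivatives vanish, so they are trivially bounded. The same argument applied to $\phi\mapsto\phi^\top\chi(\bar z)$ handles $\zeta_\phi$.

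\emph{Step 2: Jacobian conditioning (items (ii)--(iii)).} Stack the outputs $(Q_\theta(z))_{z\in Z_Q}=\Phi_Q\theta$. The empirical output Jacobian is then $J_Q(\theta;Z_Q)=\Phi_Q$ for every $\theta$. Hence
\[
J_Q(\theta;Z_Q)J_Q(\theta;Z_Q)^\top=\Phi_Q\Phi_Q^\top\succeq\mu_Q I
\]
for all $\theta$, and in particular on $B_Q$. Likewise $J_\zeta(\phi;Z_\zeta)=X_\zeta$, which gives $X_\zeta X_\zeta^\top\succeq\mu_\zeta I$ for all $\phi$.

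There is no real obstacle here; the only point needing care is bookkeeping. One must make sure the row ordering of $\Phi_Q$ and $X_\zeta$ matches the ordering of the output coordinates used in defining $J_Q$ and $J_\zeta$. One should also remark that the conditioning hypothesis is exactly full row rank. This forces the parameter dimension to be at least $|Z_Q|$ (respectively $|Z_\zeta|$), i.e., an overparametrized feature regime. That fact is part of the hypothesis, not something to prove.
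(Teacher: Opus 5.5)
Your proposal is correct and follows essentially the same route as the paper's proof sketch. Both observe that the stacked outputs are $\Phi_Q\theta$ (resp.\ $X_\zeta\phi$), so the Jacobian is the constant feature matrix and the Gram hypothesis is literally items (ii)--(iii); both also note that the first derivative is the feature vector and the second derivative is zero, which gives item (i) from bounded features on the finite sets. Your added remark that the hypothesis forces the parameter dimension to be at least $|Z_Q|$ (resp.\ $|Z_\zeta|$) is a correct extra observation that the paper does not make.
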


\begin{proof}[Proof sketch]
For the linear $Q$ class,
\[
    (Q_\theta(z))_{z\in Z_Q}=\Phi_Q\theta,
    \qquad
    J_Q(\theta;Z_Q)=\Phi_Q,
\]
so the assumed Gram lower bound is exactly Assumption~\ref{ass:erm-jac}(ii).  The first derivative is $\varphi(z)$ and the second derivative is zero, so the smoothness and bounded-derivative part follows from bounded feature evaluations on the finite set $Z_Q$.  The same calculation with $X_\zeta$ gives Assumption~\ref{ass:erm-jac}(iii) and the corresponding derivative bounds for $\zeta_\phi$.
\end{proof}

\begin{lemma}[Wide smooth neural networks satisfy Assumption~\ref{ass:erm-jac}]
\label{lem:erm-jac-nn}
Fix the empirical evaluation sets $Z_Q(\cD)$ and $Z_\zeta(\cD)$.  Consider sufficiently wide feedforward neural networks with $C^2$ activations and standard over-parameterized random initialization.  If the neural tangent Gram matrices at initialization on $Z_Q$ and $Z_\zeta$ have strictly positive minimum eigenvalues, and the SGDA iterates remain in a lazy-training neighborhood where the empirical output Jacobians move by less than half of these eigenvalue margins, then Assumption~\ref{ass:erm-jac} holds with high probability.
\end{lemma}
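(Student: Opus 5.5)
The plan is to reduce Assumption~\ref{ass:erm-jac} to standard facts from the lazy-training (NTK) literature, applied separately to the two finite evaluation sets. Write $n_Q:=|Z_Q(\cD)|$ and $n_\zeta:=|Z_\zeta(\cD)|$. The key observation is that the empirical output Jacobian satisfies $J_Q(\theta;Z_Q)J_Q(\theta;Z_Q)^\top=K_Q(\theta)$, the empirical neural tangent Gram matrix on $Z_Q$; the same holds for $\zeta_\phi$ on $Z_\zeta$. So items (ii)--(iii) are lower bounds on $\lambda_{\min}(K_Q(\theta))$ and $\lambda_{\min}(K_\zeta(\phi))$ over the neighborhoods $B_Q$ and $B_\zeta$.

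First I will use the hypothesis at initialization: $\lambda_{\min}(K_Q(\theta_0))=:2\mu_Q>0$ and $\lambda_{\min}(K_\zeta(\phi_0))=:2\mu_\zeta>0$. Because the set is finite, the standard infinite-width concentration (the empirical NTK converges to the deterministic limit kernel) gives this with high probability whenever the limit kernel is positive definite on the set.

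Next comes the perturbation step. For $\theta\in B_Q$, I will bound
\[
\bigl\|K_Q(\theta)-K_Q(\theta_0)\bigr\|_{\mathrm{op}}\le \bigl(\|J_Q(\theta)\|+\|J_Q(\theta_0)\|\bigr)\,\bigl\|J_Q(\theta)-J_Q(\theta_0)\bigr\|.
\]
The stated lazy-training hypothesis says the Jacobians move little enough that this perturbation is below half the eigenvalue margin. I read the hypothesis as a bound on this Gram perturbation, made concrete via the Jacobian-movement bound. Weyl's inequality then gives $\lambda_{\min}(K_Q(\theta))\ge\mu_Q$, which is item (ii). The same argument applied to $\phi$ gives item (iii).

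If one wants to justify the lazy-training neighborhood rather than assume it, the route is the usual one. With $C^2$ activations and width $m$, the Hessian of each output has operator norm $O(m^{-1/2})$ on a radius-$R$ ball. Hence $\|J(\theta)-J(\theta_0)\|\le \sqrt{n}\,O(m^{-1/2})R$, which is below the margin for $m$ large.

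Item (i) follows from smoothness of the $C^2$ activations together with the same Hessian bound on a bounded ball and finiteness of the evaluation set. This gives bounded first and second derivatives uniformly on $B_Q$ and $B_\zeta$. The main obstacle is the claim that the SGDA iterates stay in the lazy neighborhood: the coupled $(\theta,\phi)$ dynamics are not plain regression. Since the lemma takes this as a hypothesis, the proof only needs the Weyl-plus-initialization argument. I would remark that verifying it would follow the PL-based trajectory-length bounds established later.
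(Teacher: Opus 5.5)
Your proposal is correct and follows essentially the same route as the paper's sketch. Both identify $J J^\top$ with the empirical tangent Gram matrix, take a positive spectral margin at initialization with high probability, keep the Gram drift over the lazy-training neighborhood below half that margin, conclude via Weyl's inequality, and get item (i) from the $C^2$ activations on a bounded ball over a finite evaluation set. Your $O(m^{-1/2})$ Hessian argument for justifying the lazy neighborhood goes beyond what the paper spells out, but it is optional, since the lemma assumes that neighborhood as a hypothesis.
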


\begin{proof}[Proof sketch]
On a fixed finite evaluation set, standard over-parameterized initialization gives well-conditioned empirical tangent kernels with high probability under nondegenerate data.  Because the networks use $C^2$ activations and the iterates stay in a finite-radius lazy-training neighborhood, first and second derivatives are uniformly bounded on the finite empirical sets.  The small-movement condition implies the empirical Jacobian Gram matrices along the optimization path remain close to their initialization values.  By Weyl's inequality, losing at most half the initial spectral margin preserves positive lower bounds $\mu_Q$ and $\mu_\zeta$ throughout the balls $B_Q$ and $B_\zeta$.
\end{proof}

\begin{theorem}[Empirical PL guarantees]
\label{thm:erm-pl}
Under Assumption~\ref{ass:erm-jac}, the empirical $Q$ objective obtained after profiling the auxiliary correction satisfies a PL inequality on $B_Q$.  For each fixed $\theta\in B_Q$, the auxiliary regression loss $\widehat F_\zeta(\cdot;\theta)$ also satisfies a PL inequality on $B_\zeta$; equivalently, $-\widehat F_\zeta(\cdot;\theta)$ satisfies the ascent-side PL condition.
\end{theorem}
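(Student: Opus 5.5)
The plan is to prove both PL inequalities in two stages. First establish a PL-type inequality in the finite empirical output coordinates. Then transfer it to parameters with the chain rule and the Jacobian lower bounds of Assumption~\ref{ass:erm-jac}. The transfer step is generic. If $f(\theta)=h(u(\theta))$ with $u(\theta)=(Q_\theta(z))_{z\in Z_Q}$, then $\nabla_\theta f = J_Q^\top \nabla_u h$, and hence
\[
\|\nabla_\theta f\|^2=\nabla_u h^\top J_Q J_Q^\top \nabla_u h\ge \mu_Q\|\nabla_u h\|^2 .
\]
So an output-space inequality $\tfrac12\|\nabla_u h\|^2\ge c(h-h^\star)$ yields parameter PL with constant $\mu_Q c$. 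This holds provided the parameter infimum on $B_Q$ equals the output infimum, which I take from the assumption that $B_Q$ contains the empirical minimizers.

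\emph{The $\zeta$ side} is the easy case and I would do it first. For fixed $\theta$, $\widehat F_\zeta(\cdot;\theta)$ is $\tfrac1N\sum(V_{Q_\theta}(s')-\zeta(\bar z))^2$ as a function of the output vector $w=(\zeta(\bar z))_{\bar z\in Z_\zeta}$. Grouping samples by $\bar z$ with counts $n_{\bar z}\ge1$ makes this a separable strongly convex quadratic with Hessian $\tfrac2N\diag(n_{\bar z})\succeq \tfrac2N I$. Strong convexity implies output PL with constant $1/N$. Multiplying by $\mu_\zeta$ gives PL on $B_\zeta$, whose minimum is the empirical conditional-mean fit. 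The ascent-side statement for $-\widehat F_\zeta$ is the same inequality with the sign flipped.

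\emph{The $Q$ side} works with the profiled objective $\widehat g_N(\theta)$. Profiling the $\zeta$ correction in output space replaces $\zeta(\bar z)$ by the empirical mean of $V_Q(s')$ over samples sharing $\bar z$. The anchor term then becomes the empirical analogue of the corrected Bellman error in \eqref{eq:erm-corrected-be}. By Danskin's theorem, the gradient of the profiled objective equals the partial gradient at the optimal $\zeta$, so the chain rule through $J_Q$ is legitimate. It remains to show output-space PL for
\[
h(u)=\tfrac1N\sum\Bigl[-\log\hat p_u(a\mid s)+\one\{a=a_s\}\bigl(r_A+\beta \bar V_u(s,a_s)-u(s,a_s)\bigr)^2\Bigr],
\]
where $\bar V_u$ denotes the empirical conditional average. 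I would change coordinates at each state to within-state log-ratios $u(s,a)-u(s,a_s)$ plus the anchor level $u(s,a_s)$. The NLL depends only on the log-ratios; it is a multinomial log-likelihood, which is strongly convex on bounded sets because the softmax Hessian is bounded below by $\min_a \hat p\ge e^{-2\|u\|_\infty}/|\cA|$ on the quotient by constants. Given the log-ratios, the anchor residual is affine in the level vector via the operator $I-\beta \hat P_A$ restricted to the LSE-shifted levels. Here I would use $\LSE(u(s',\cdot))=u(s',a_{s'})+\LSE(\text{ratios})$, and invertibility follows from $\|\beta\hat P_A\|_\infty\le\beta<1$, which gives a singular-value lower bound of $1-\beta$ in sup norm and, on a finite set, a dimension-dependent $\ell_2$ bound.

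The sum of a strongly convex function of the ratios and a squared affine map that is injective in the levels given the ratios is not jointly convex, because of the LSE coupling. This is the main obstacle. My first attempt would be to argue that the map $u\mapsto(\nabla\text{NLL},\text{residual vector})$ has a Jacobian with uniformly bounded-below singular values on the bounded ball. That Jacobian is block-triangular in (ratios, levels) coordinates, with invertible diagonal blocks. A Polyak-type "gradient dominated via nonsingular Jacobian of residuals" argument would then give output PL with a constant depending on $\beta$, $|\cA|$, $N$ and the ball radius. Composing with $\mu_Q$ finishes the proof.
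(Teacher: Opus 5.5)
Your proposal follows the paper's own route: output-space PL from the complementary roles of the NLL (within-state logit differences) and the anchor Bellman residual (state-wise levels), transferred to parameters through the Jacobian lower bounds, with the $\zeta$ side treated as a separable quadratic regression. It is in fact more explicit than the paper's sketch.

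The coupling step needs two corrections.

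First, in general the level Jacobian $\partial_\ell R$ is wide, not invertible. This happens, e.g., for continuous states, where next states are not anchor-observed current states. Full row rank is all you need, and you have it: its columns indexed by anchor-observed states form $\beta M - I$ with $M$ substochastic.

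Second, the PL inequality should come from $\nabla h$ itself, not from a Jacobian of $(\nabla\mathrm{NLL},R)$. The level block of $\nabla h$ controls $\|R\|$. The ratio block then controls $\|\nabla\mathrm{NLL}\|$ up to $O(\|R\|)$. Strong convexity of the NLL on the bounded set turns $\|\nabla h\|^2\gtrsim\|\nabla\mathrm{NLL}\|^2+\|R\|^2$ into PL for $h$, rather than for $\tfrac12\|(\nabla\mathrm{NLL},R)\|^2$.
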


\begin{proof}[Proof sketch]
The proof is not the generic claim that sums of PL functions are always PL; that claim is false.  Here the two pieces control complementary directions.  The NLL term controls within-state action-logit directions, while the anchor Bellman term controls the state-wise constant directions left invisible by the softmax likelihood.  The empirical-output Jacobian lower bounds transfer these output-space inequalities to the parameter space.  The auxiliary $\zeta$ problem is a quadratic regression in empirical $\zeta$ outputs, so it has the corresponding PL geometry after the same Jacobian-conditioning transfer.
\end{proof}

\begin{theorem}[GLADIUS convergence and population excess risk]
\label{thm:gladius-conv}
Suppose the realizability and empirical-output Jacobian-conditioning assumptions hold, and suppose the stochastic gradients use stepsizes $\eta_t=c_1/(c_2+t)$ satisfying the two-sided PL SGDA conditions.  Then the GLADIUS iterates satisfy
\begin{equation}
    \EE\!
    \left[
        \widehat g_N(\widehat\theta_T)-\widehat g_N^\star
    \right]
    \le {\nu\over \gamma_0+T}.
    \label{eq:gladius-empirical-rate}
\end{equation}
Moreover, combining the empirical optimization bound with the paper's stability/generalization argument yields
\begin{equation}
    \EE\!\left[
        \cR_{\exp}(Q_{\widehat\theta_T})-
        \cR_{\exp}(Q^\star)
    \right]
    \le
    (1+L/\rho)G\,\varepsilon_{N,T}
    +{\nu\over \gamma_0+T},
    \label{eq:gladius-pop-risk-rate}
\end{equation}
where
\[
    \varepsilon_{N,T}
    =O\!\left((c_2+T)^{-\alpha}\right)+{C\over N},
    \qquad
    \alpha:=\min\left\{{1\over2},{3cc_1\over8}\right\}.
\]
If the quantitative identification inequality for the full ERM-DDC/IRL risk also holds, then
\[
    \EE\left[\|Q_{\widehat\theta_T}-Q^\star\|_{2,*}^2\right]
    =
    O\!\left((c_2+T)^{-\alpha}\right)+O\!\left({1\over N}\right)+O\!\left({1\over T}\right).
\]
Thus the $L_2(d^\star)$ estimation error has statistical contribution $O(N^{-1/2})$ and, when $\alpha=1/2$, optimization contribution $O(T^{-1/4})$.
\end{theorem}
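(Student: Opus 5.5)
The proof splits into three layers: empirical optimization, generalization, and identification. Each layer feeds a bound into the next.

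\emph{Layer 1: empirical optimization, giving \eqref{eq:gladius-empirical-rate}.} Theorem~\ref{thm:erm-pl} supplies the two-sided PL geometry on $B_Q\times B_\zeta$:
\begin{itemize}[leftmargin=2em]
\item the profiled objective $\widehat g_N(\theta)=\widehat f_N(\theta,\widehat\phi(\theta))$ is PL in $\theta$ with constant $c$;
\item the regression loss $\widehat F_\zeta(\cdot;\theta)$ is PL in $\phi$ for each fixed $\theta$.
\end{itemize}
Assumption~\ref{ass:erm-jac}(i), on the finite empirical sets, gives $L$-smoothness of $\widehat f_N$ and $\widehat F_\zeta$ in both arguments. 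It also gives Lipschitzness of the profiled argmin map $\theta\mapsto\widehat\phi(\theta)$, with constant controlled by $L/\rho$, where $\rho$ is the PL constant of the $\zeta$ side. I would then invoke the standard two-timescale SGDA analysis for two-sided PL problems (Yang et al.\ style) with the Lyapunov function
\[
a_t:=\EE[\widehat g_N(\theta_t)-\widehat g_N^\star]+\lambda\,\EE[\widehat F_\zeta(\phi_t;\theta_t)-\min_\phi\widehat F_\zeta(\phi;\theta_t)].
\]
With $\eta_t=c_1/(c_2+t)$ and minibatch gradient variance bounded on the bounded balls, one obtains a recursion $a_{t+1}\le(1-\kappa\eta_t)a_t+C\eta_t^2$. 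Provided $\kappa c_1>1$, induction gives $a_T\le\nu/(\gamma_0+T)$.

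\emph{Layer 2: population excess risk, giving \eqref{eq:gladius-pop-risk-rate}.} I would decompose
\[
\cR_{\exp}(Q_{\widehat\theta_T})-\cR_{\exp}(Q^\star)
\]
into three pieces:
\begin{itemize}[leftmargin=2em]
\item a generalization gap at $\widehat\theta_T$;
\item the empirical optimization error from Layer 1;
\item the empirical-versus-population gap at $Q^\star$, which has mean zero or is controlled by realizability.
\end{itemize}
The generalization gap would be bounded by algorithmic stability of SGDA under PL in the sense of Lei–Ying type arguments. Swapping one sample perturbs the iterates by $O((c_2+T)^{-\alpha})+O(1/N)$, with $\alpha=\min\{1/2,3cc_1/8\}$. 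Multiplying by the loss Lipschitz constant $G$ gives $G\,\varepsilon_{N,T}$. The factor $(1+L/\rho)$ comes from propagating the perturbation through the profiled map $\widehat\phi(\theta)$. There is one point to check here: the bias-corrected objective, with the optimal $\zeta$ plugged in, equals the population anchor Bellman error \eqref{eq:erm-corrected-be} in expectation. This is what makes the empirical risk target $\cR_{\exp}$ rather than the biased TD risk of Lemma~\ref{lem:double-sampling}.

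\emph{Layer 3: parameter error.} Under the assumed quantitative identification inequality, $\|Q-Q^\star\|_{2,*}^2\lesssim\cR_{\exp}(Q)-\cR_{\exp}(Q^\star)$. This inequality sharpens Theorem~\ref{thm:erm-id}:
\begin{itemize}[leftmargin=2em]
\item the NLL excess equals an expected KL divergence, which locally controls the within-state logit differences;
\item the anchor Bellman term controls the state-wise level.
\end{itemize}
Squaring rates then gives $O(N^{-1/2})$ and, when $\alpha=1/2$, $O(T^{-1/4})$ in $L_2$ norm.

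\emph{Main obstacle.} I expect the stability step to be the hardest. The issue is the coupling between the SGDA iterates and the profiled $\zeta$ map, which must be controlled uniformly while keeping the iterates inside $B_Q\times B_\zeta$. I would handle it first by a confinement argument: a projection step, or a high-probability bound on the iterates' drift, so that the PL and smoothness constants stay valid throughout.
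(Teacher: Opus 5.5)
Your three layers follow the paper's own assembly of the result: the two-sided-PL SGDA rate for the profiled objective (with the PL constants from Theorem~\ref{thm:erm-pl}) gives \eqref{eq:gladius-empirical-rate}; a minimax stability/generalization bound, whose $(1+L/\rho)$ factor comes from the Lipschitz inner $\zeta$-solution map, gives \eqref{eq:gladius-pop-risk-rate}; and the quantitative identification inequality, followed by square roots, gives the $N^{-1/2}$ and $T^{-1/4}$ rates. One step needs fixing: $\widehat g_N(\theta^\star)-\cR_{\exp}(Q^\star)$ is not mean zero, because $\widehat g_N(\theta^\star)$ evaluates the correction at the data-fitted empirical optimum $\widehat\phi(\theta^\star)$, so its expectation generally sits above $\cR_{\exp}(Q^\star)$ (realizability does not change this), and it must instead be bounded through the stability of the inner $\zeta$-solution at the fixed $\theta^\star$, which gives a term of the same $C/N$ type that is absorbed into $\varepsilon_{N,T}$.
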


The main takeaway is now straightforward.  ERM-DDC/IRL is a transition-estimation-free minimax estimator for $Q^\star$.  It is built by translating the two classical identifying equations into losses, correcting the empirical Bellman term for double-sampling bias, and recovering rewards from the same learned conditional-mean correction $\widehat\zeta$.

\section*{Closing Remarks}

The classical DDC literature gave us identification through the anchor-action argument, plus computational templates such as NFXP and CCP.  The modern IRL literature gave us scalable objectives, but often without pointwise Bellman identification of the reward.  The ERM framework of \citet{kang2025empirical} combines the useful pieces from both sides: use the likelihood term for within-state action-value differences, use the anchor Bellman equation for the state-wise level, use the bias-corrected TD identity to avoid transition-kernel estimation, and solve the resulting objective by alternating gradients.  This is why the section should be read as a build-up from classical DDC to a one-shot gradient-based estimator, rather than as an isolated minimax formula.

\appendix

\section{Deferred Proofs}
\label{app:deferred-proofs}

\subsection{Local AIRL Gradient and Policy-Objective Identities}
\label{app:airl-local-identities}

For completeness, we give the precise finite-horizon version of Lemma~\ref{lem:airl-trajectory-to-transition}. Work on a finite horizon $H$ and suppress discounting; the discounted version inserts the weight $\beta^t$ in every time-$t$ term. Write a transition as $x_t := (s_t,a_t,s_{t+1})$. For any policy $\pi$, write
\begin{equation*}
\rho_t^\pi(s,a,s')
\;:=\;
\PP_{\nu_0,\pi}(s_t=s)\,\pi(a\mid s)\,P(s'\mid s,a)
\end{equation*}
for the time-$t$ transition marginal, and define the policy's pre-action transition base
\begin{equation*}
b_t^\pi(s,a,s')
\;:=\;
\PP_{\nu_0,\pi}(s_t=s)\,P(s'\mid s,a),
\qquad
\rho_t^\pi(s,a,s')=b_t^\pi(s,a,s')\pi(a\mid s).
\end{equation*}
For a differentiable transition score $f_\omega(s,a,s')$, define the finite-horizon maximum-causal-entropy trajectory model
\begin{equation}
p_\omega(\tau)
\;:=\;
\frac{1}{Z_\omega}\,\nu_0(s_0)\prod_{t=0}^{H-1}P(s_{t+1}\mid s_t,a_t)
\exp\!\Bigl(\sum_{t=0}^{H-1}f_\omega(s_t,a_t,s_{t+1})\Bigr),
\label{eq:airl-mce-trajectory-model}
\end{equation}
and let $p_{\omega,t}$ denote its transition marginal at time $t$. The maximum-causal-entropy log-likelihood is
\begin{equation*}
\cJ_{\mathrm{MCE}}(\omega)
\;:=\;
\EE_{\nu_0,\pi^\star}[\log p_\omega(\tau)].
\end{equation*}
For any sampler $\mu_t$ whose support contains that of $p_{\omega,t}$, the same gradient can be written in importance-sampled form. AIRL's local approximation replaces the intractable marginal $p_{\omega,t}$ by the policy-based energy measure
\begin{equation}
\widehat p_{\omega,t}^{\pi}(s,a,s')
\;:=\;
\exp(f_\omega(s,a,s'))\,b_t^\pi(s,a,s'),
\label{eq:airl-local-energy-marginal}
\end{equation}
and compares it with the policy transition marginal through
\begin{equation}
\widehat\mu_{\omega,t}^{\pi}(s,a,s')
\;:=\;
\tfrac12\widehat p_{\omega,t}^{\pi}(s,a,s')+\tfrac12\rho_t^\pi(s,a,s').
\label{eq:airl-local-mixture}
\end{equation}
The following proposition is the rigorous version of the localization statement used in the main text.

\begin{proposition}[Finite-horizon local AIRL surrogate]
\label{prop:airl-local-identities}
Fix the current policy $\pi$ and a differentiable score $f_\omega$. The maximum-causal-entropy gradient is
\begin{equation}
\nabla_\omega \cJ_{\mathrm{MCE}}(\omega)
\;=
\sum_{t=0}^{H-1}
\left\{
\EE_{x\sim \rho_t^{\pi^\star}}[\nabla_\omega f_\omega(x)]
-
\EE_{x\sim p_{\omega,t}}[\nabla_\omega f_\omega(x)]
\right\},
\label{eq:airl-mce-gradient}
\end{equation}
or equivalently
\begin{equation}
\nabla_\omega \cJ_{\mathrm{MCE}}(\omega)
\;=
\sum_{t=0}^{H-1}
\left\{
\EE_{\rho_t^{\pi^\star}}[\nabla_\omega f_\omega]
-
\EE_{\mu_t}\!\left[\frac{p_{\omega,t}(x)}{\mu_t(x)}\nabla_\omega f_\omega(x)\right]
\right\}.
\label{eq:airl-importance-mce-gradient}
\end{equation}
For each time $t$, let
\begin{equation*}
\bar\mu_t^\pi(s,a,s'):=\tfrac12\rho_t^{\pi^\star}(s,a,s')+\tfrac12\rho_t^\pi(s,a,s')
\end{equation*}
be the actual half-expert, half-policy transition-sampling mixture used by the discriminator. Define the transition discriminator $D_{f_\omega}^{\pi}$ by~\eqref{eq:airl-log-odds}, equivalently by~\eqref{eq:airl-generic-discriminator}, and define
\begin{equation}
\cJ_{\mathrm{disc}}(\omega;\pi)
\;:=
\sum_{t=0}^{H-1}
\left\{
\EE_{x\sim\rho_t^{\pi^\star}}[\log D_{f_\omega}^{\pi}(x)]
+
\EE_{x\sim\rho_t^\pi}[\log(1-D_{f_\omega}^{\pi}(x))]
\right\}.
\label{eq:airl-local-disc-objective}
\end{equation}
Then
\begin{equation}
\nabla_\omega \cJ_{\mathrm{disc}}(\omega;\pi)
\;=
\sum_{t=0}^{H-1}
\left\{
\EE_{\rho_t^{\pi^\star}}[\nabla_\omega f_\omega]
-
\EE_{\bar\mu_t^\pi}\!\left[
\frac{\widehat p_{\omega,t}^{\pi}(x)}{\widehat\mu_{\omega,t}^{\pi}(x)}
\nabla_\omega f_\omega(x)
\right]
\right\}.
\label{eq:airl-disc-gradient-identity}
\end{equation}
Thus, under the local sampler-consistency approximation $\bar\mu_t^\pi=\widehat\mu_{\omega,t}^{\pi}$, the discriminator gradient is exactly the importance-sampled gradient obtained from~\eqref{eq:airl-importance-mce-gradient} after replacing $p_{\omega,t}$ by $\widehat p_{\omega,t}^{\pi}$. If the adapted policy sampler is consistent in the stronger sense that $\widehat p_{\omega,t}^{\pi}=p_{\omega,t}$, this is the maximum-causal-entropy IRL gradient~\eqref{eq:airl-mce-gradient}.

Moreover, the discriminator log-odds reward
\begin{equation*}
\widehat r_f^{\pi}(s,a,s')
\;:=\;
\log D_f^{\pi}(s,a,s')-\log(1-D_f^{\pi}(s,a,s'))
\end{equation*}
satisfies
\begin{equation}
\widehat r_f^{\pi}(s,a,s')
\;=
\;f(s,a,s')-\log\pi(a\mid s),
\label{eq:airl-log-odds-reward}
\end{equation}
and therefore
\begin{equation}
\EE_{\nu_0,\pi}\!\left[\sum_{t=0}^{H-1}\widehat r_f^{\pi}(s_t,a_t,s_{t+1})\right]
\;=
\EE_{\nu_0,\pi}\!\left[\sum_{t=0}^{H-1}
\bigl(f(s_t,a_t,s_{t+1})+\Ent(\pi(\cdot\mid s_t))\bigr)\right].
\label{eq:airl-policy-objective-identity}
\end{equation}
So optimizing the policy against the log-odds reward is exactly entropy-regularized policy optimization with reward $f$.
\end{proposition}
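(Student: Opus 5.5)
The proof splits into three computations, which I take in order: the MCE gradient, the discriminator gradient, and the log-odds reward identity.

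\emph{MCE gradient.} I would take logs in \eqref{eq:airl-mce-trajectory-model}. This gives $\log p_\omega(\tau)=-\log Z_\omega+\log\nu_0(s_0)+\sum_t\log P(s_{t+1}\mid s_t,a_t)+\sum_t f_\omega(x_t)$. Only the first and last terms depend on $\omega$. The standard exponential-family computation gives $\nabla_\omega\log Z_\omega=\EE_{p_\omega}[\sum_t\nabla_\omega f_\omega(x_t)]$. This step needs differentiation under the integral, which is justified by bounded derivatives or a finite $\cS$. Taking the expectation under $\pi^\star$ and splitting the sums by time index into the marginals $\rho_t^{\pi^\star}$ and $p_{\omega,t}$ yields \eqref{eq:airl-mce-gradient}. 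The importance-sampled form \eqref{eq:airl-importance-mce-gradient} then follows by multiplying and dividing by $\mu_t$, using the support condition.

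\emph{Discriminator gradient.} I would use \eqref{eq:airl-generic-discriminator} to write
\[
\log D=f-\log(e^f+\pi),\qquad \log(1-D)=\log\pi-\log(e^f+\pi).
\]
Differentiating in $\omega$, with $\pi$ held fixed, gives
\[
\nabla\log D=(1-D)\nabla f,\qquad \nabla\log(1-D)=-D\nabla f.
\]
Hence the time-$t$ gradient is $\EE_{\rho^{\pi^\star}_t}[\nabla f]-\EE_{\rho^{\pi^\star}_t}[D\nabla f]-\EE_{\rho^\pi_t}[D\nabla f]$. The last two terms combine to $-2\,\EE_{\bar\mu_t^\pi}[D\nabla f]$.

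The key algebraic step is to show $D=\widehat p/(2\widehat\mu)$ pointwise. Multiplying the numerator and denominator of $e^f/(e^f+\pi)$ by $b_t^\pi$ gives $\widehat p^\pi_{\omega,t}/(\widehat p^\pi_{\omega,t}+\rho_t^\pi)=\widehat p/(2\widehat\mu)$, by \eqref{eq:airl-local-energy-marginal} and \eqref{eq:airl-local-mixture}. Therefore $2D=\widehat p/\widehat\mu$, which gives \eqref{eq:airl-disc-gradient-identity}.

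The main obstacle is bookkeeping on the set where $b_t^\pi=0$. There the ratio is $0/0$, so I would define it by the discriminator value $D$ itself. This changes nothing, because $\bar\mu_t^\pi$ might charge such points only through $\rho_t^{\pi^\star}$. I would state the result under the mild assumption that the support of $\rho_t^{\pi^\star}$ lies in the support of $b_t^\pi$.

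\emph{Sampler-consistency consequences.} When $\bar\mu=\widehat\mu$, the expectation becomes $\EE_{\mu_t}[\widehat p/\mu_t\,\nabla f]$ with $\mu_t=\widehat\mu$. This is \eqref{eq:airl-importance-mce-gradient} with $p_{\omega,t}$ replaced by $\widehat p$. If additionally $\widehat p=p_{\omega,t}$, the expression reduces to \eqref{eq:airl-mce-gradient}.

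\emph{Log-odds reward.} Subtracting the two log expressions above gives \eqref{eq:airl-log-odds-reward} immediately. For \eqref{eq:airl-policy-objective-identity}, I would condition on $s_t$ at each time step and use $\EE_{a_t\sim\pi(\cdot\mid s_t)}[-\log\pi(a_t\mid s_t)]=\Ent(\pi(\cdot\mid s_t))$, then apply the tower property.
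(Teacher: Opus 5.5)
Your proposal is correct and follows essentially the same route as the paper: the exponential-family gradient of $\log Z_\omega$ split into time marginals, the explicit formulas for $\log D$ and $\log(1-D)$ whose $\omega$-derivatives combine into $-2\,\EE_{\bar\mu_t^\pi}[D\,\nabla_\omega f_\omega]$, multiplication by $b_t^\pi$ to rewrite $2D$ as $\widehat p_{\omega,t}^{\pi}/\widehat\mu_{\omega,t}^{\pi}$, and the entropy identity for the log-odds reward. Your extra bookkeeping on the set where $b_t^\pi=0$, which the paper leaves implicit, is a genuine refinement; note that the consistent convention there is to set the ratio equal to $2D$, not $D$, since $\widehat p_{\omega,t}^{\pi}/\widehat\mu_{\omega,t}^{\pi}=2D$ wherever it is defined.
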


\begin{proof}[Proof of Proposition~\ref{prop:airl-local-identities}]
First derive the maximum-causal-entropy gradient. From~\eqref{eq:airl-mce-trajectory-model},
\begin{equation*}
\log p_\omega(\tau)
=
-\log Z_\omega+\text{terms independent of }\omega
+
\sum_{t=0}^{H-1}f_\omega(x_t).
\end{equation*}
Therefore
\begin{equation*}
\nabla_\omega\cJ_{\mathrm{MCE}}(\omega)
=
\sum_{t=0}^{H-1}\EE_{\rho_t^{\pi^\star}}[\nabla_\omega f_\omega(x)]
-
\nabla_\omega\log Z_\omega.
\end{equation*}
Differentiating the normalizer gives
\begin{equation*}
\nabla_\omega\log Z_\omega
=
\EE_{\tau\sim p_\omega}\!\left[\sum_{t=0}^{H-1}\nabla_\omega f_\omega(x_t)\right]
=
\sum_{t=0}^{H-1}\EE_{x\sim p_{\omega,t}}[\nabla_\omega f_\omega(x)],
\end{equation*}
which proves~\eqref{eq:airl-mce-gradient}. The importance-sampled form~\eqref{eq:airl-importance-mce-gradient} follows by writing
\begin{equation*}
\EE_{x\sim p_{\omega,t}}[\nabla_\omega f_\omega(x)]
=
\EE_{x\sim\mu_t}\!\left[\frac{p_{\omega,t}(x)}{\mu_t(x)}\nabla_\omega f_\omega(x)\right]
\end{equation*}
whenever $\mu_t$ dominates $p_{\omega,t}$.

We now prove the local discriminator identity. From~\eqref{eq:airl-generic-discriminator},
\begin{equation*}
\log D_{f_\omega}^{\pi}(x)
= f_\omega(x)-\log\bigl(\exp f_\omega(x)+\pi(a\mid s)\bigr),
\end{equation*}
and
\begin{equation*}
\log(1-D_{f_\omega}^{\pi}(x))
= \log\pi(a\mid s)-\log\bigl(\exp f_\omega(x)+\pi(a\mid s)\bigr).
\end{equation*}
Substituting these two identities into~\eqref{eq:airl-local-disc-objective} yields
\begin{align*}
\cJ_{\mathrm{disc}}(\omega;\pi)
&=\sum_{t=0}^{H-1}\Bigl\{
\EE_{\rho_t^{\pi^\star}}[f_\omega(x)]
+\EE_{\rho_t^\pi}[\log\pi(a\mid s)] \\
&\hspace{4em}
-\EE_{\rho_t^{\pi^\star}}\!\left[\log\bigl(\exp f_\omega(x)+\pi(a\mid s)\bigr)\right]
-\EE_{\rho_t^\pi}\!\left[\log\bigl(\exp f_\omega(x)+\pi(a\mid s)\bigr)\right]
\Bigr\}.
\end{align*}
The second term is independent of $\omega$. Differentiating the remaining terms gives
\begin{align*}
\nabla_\omega\cJ_{\mathrm{disc}}(\omega;\pi)
&=\sum_{t=0}^{H-1}\left\{
\EE_{\rho_t^{\pi^\star}}[\nabla_\omega f_\omega(x)]
-2\EE_{\bar\mu_t^\pi}\!\left[
\frac{\exp f_\omega(x)}{\exp f_\omega(x)+\pi(a\mid s)}\nabla_\omega f_\omega(x)
\right]
\right\}\\
&=\sum_{t=0}^{H-1}\left\{
\EE_{\rho_t^{\pi^\star}}[\nabla_\omega f_\omega(x)]
-\EE_{\bar\mu_t^\pi}\!\left[
\frac{\exp f_\omega(x)}{\tfrac12\exp f_\omega(x)+\tfrac12\pi(a\mid s)}\nabla_\omega f_\omega(x)
\right]
\right\}.
\end{align*}
Multiplying the numerator and denominator of the fraction by $b_t^\pi(s,a,s')$, and using $\widehat p_{\omega,t}^{\pi}=\exp(f_\omega)b_t^\pi$ and $\rho_t^\pi=\pi(a\mid s)b_t^\pi$, gives
\begin{equation*}
\frac{\exp f_\omega(x)}{\tfrac12\exp f_\omega(x)+\tfrac12\pi(a\mid s)}
=
\frac{\widehat p_{\omega,t}^{\pi}(x)}{\tfrac12\widehat p_{\omega,t}^{\pi}(x)+\tfrac12\rho_t^\pi(x)}
=
\frac{\widehat p_{\omega,t}^{\pi}(x)}{\widehat\mu_{\omega,t}^{\pi}(x)},
\end{equation*}
which proves~\eqref{eq:airl-disc-gradient-identity}. The two sampler-consistency consequences follow immediately by comparing~\eqref{eq:airl-disc-gradient-identity} with~\eqref{eq:airl-importance-mce-gradient} and~\eqref{eq:airl-mce-gradient}.

Finally, divide~\eqref{eq:airl-generic-discriminator} by
\begin{equation*}
1-D_f^\pi(s,a,s')=\frac{\pi(a\mid s)}{\exp f(s,a,s')+\pi(a\mid s)}.
\end{equation*}
This gives~\eqref{eq:airl-log-odds-reward}. Summing over a trajectory and taking expectation under $\pi$,
\begin{align*}
\EE_{\nu_0,\pi}\!\left[\sum_{t=0}^{H-1}\widehat r_f^\pi(s_t,a_t,s_{t+1})\right]
&=\EE_{\nu_0,\pi}\!\left[\sum_{t=0}^{H-1}f(s_t,a_t,s_{t+1})-\log\pi(a_t\mid s_t)\right]\\
&=\EE_{\nu_0,\pi}\!\left[\sum_{t=0}^{H-1}f(s_t,a_t,s_{t+1})+\Ent(\pi(\cdot\mid s_t))\right],
\end{align*}
where the last equality uses $\EE_{a_t\sim\pi(\cdot\mid s_t)}[-\log\pi(a_t\mid s_t)]=\Ent(\pi(\cdot\mid s_t))$.
\end{proof}

\nocite{*}
\printbibliography[title={References}]

\end{document}